\newcolumntype{L}[1]{>{\raggedright\arraybackslash}p{#1}}
\newcommand*{\belowrulesepcolor}[1]{%
  \noalign{%
    \kern-\belowrulesep 
    \begingroup 
      \color{#1}%
      \hrule height\belowrulesep 
    \endgroup 
  }%
} 
\newcommand*{\aboverulesepcolor}[1]{%
  \noalign{%
    \begingroup 
      \color{#1}%
      \hrule height\aboverulesep 
    \endgroup 
    \kern-\aboverulesep 
  }%
}    
\definecolor{rowgray}{RGB}{245, 245, 245}
\definecolor{rowblue}{RGB}{235, 245, 255}
\definecolor{rowgreen}{RGB}{240, 255, 240}
\titlespacing{\section}{0pt}{6pt plus 2pt}{4pt}
\titlespacing{\subsection}{0pt}{4pt plus 1pt}{3pt}
\setlist[itemize]{nosep,leftmargin=*,topsep=2pt}
\setlist[enumerate]{nosep,leftmargin=*,topsep=2pt}
\g@addto@macro\normalsize{%
  \setlength\abovedisplayskip{3pt}
  \setlength\belowdisplayskip{3pt}
  \setlength\abovedisplayshortskip{3pt}
  \setlength\belowdisplayshortskip{3pt}
}
\definecolor{codegray}{gray}{0.95}
\lstdefinestyle{mypython}{
    backgroundcolor=\color{codegray},
    language=Python,
    basicstyle=\ttfamily\small,
    keywordstyle=\color{blue},
    stringstyle=\color{orange},
    commentstyle=\color{gray},
    showstringspaces=false,
    frame=single,
    breaklines=true,
    tabsize=4
}
\newcommand{\tikzcuboid@shiftx}{0}
\newcommand{\tikzcuboid@shifty}{0}
\newcommand{\tikzcuboid@dimx}{4}
\newcommand{\tikzcuboid@dimy}{4}
\newcommand{\tikzcuboid@dimz}{4}
\newcommand{\tikzcuboid@scale}{1}
\newcommand{\tikzcuboid@densityx}{1}
\newcommand{\tikzcuboid@densityy}{1}
\newcommand{\tikzcuboid@densityz}{1}
\newcommand{\tikzcuboid@rotation}{0}
\newcommand{\tikzcuboid@anglex}{0}
\newcommand{\tikzcuboid@angley}{90}
\newcommand{\tikzcuboid@anglez}{225}
\newcommand{\tikzcuboid@scalex}{1}
\newcommand{\tikzcuboid@scaley}{1}
\newcommand{\tikzcuboid@scalez}{1}
\newcommand{\tikzcuboid@linefront}{}
\newcommand{\tikzcuboid@linetop}{}
\newcommand{\tikzcuboid@lineright}{}
\newcommand{\tikzcuboid@fillfront}{}
\newcommand{\tikzcuboid@filltop}{Pink}
\newcommand{\tikzcuboid@fillright}{}
\newcommand{\tikzcuboid@newcoords}{N}
\newcommand{\tikzcuboid@filled}{N}
\newcommand{\tikzcuboid@shaded}{N}
\newcommand{\tikzcuboid}[1]{
    
    \setkeys{tikzcuboid}{#1} 
    \begin{scope}[xshift=\tikzcuboid@shiftx,yshift=\tikzcuboid@shifty,scale=\tikzcuboid@scale,rotate=\tikzcuboid@rotation]
    \pgfmathsetmacro{\steppingx}{1/\tikzcuboid@densityx}
    \pgfmathsetmacro{\steppingy}{1/\tikzcuboid@densityy}
    \pgfmathsetmacro{\steppingz}{1/\tikzcuboid@densityz}
    \newcommand{\dimx}{\tikzcuboid@dimx}
    \newcommand{\dimy}{\tikzcuboid@dimy}
    \newcommand{\dimz}{\tikzcuboid@dimz}
    \pgfmathsetmacro{\secondx}{2*\steppingx}
    \pgfmathsetmacro{\secondy}{2*\steppingy}
    \pgfmathsetmacro{\secondz}{2*\steppingz}
    
    \ifthenelse{\equal{\dimy}{1}}{
        \def\colors{{"Tan!45", "Tan!45", "Tan!45", "Tan!45", "Tan!45", "Tan!45", "Tan!45", "Tan!45", "Tan!45", "Tan!45"}}

    }{
        \def\colors{{"violet!35", "Plum!65", "NavajoWhite", "LightSteelBlue!85", "Pink", "violet!35", "purple!35", "orange!35", "blue!35", "magenta!35"}}
    }

    \foreach \x in {\steppingx,...,\dimx}
    {   \foreach \y in {\steppingy,...,\dimy}
        {   
            \pgfmathtruncatemacro{\index}{mod(\y, 10)} 
            \pgfmathparse{\colors[\index]} 
            \edef\currentcolor{\pgfmathresult}
            
            \pgfmathsetmacro{\lowx}{(\x-\steppingx)}
            \pgfmathsetmacro{\lowy}{(\y-\steppingy)}
            \filldraw[fill=\currentcolor,draw=black] (\lowx,\lowy,\dimz) -- (\lowx,\y,\dimz) -- (\x,\y,\dimz) -- (\x,\lowy,\dimz) -- cycle;

        }
    }
    \foreach \x in {\steppingx,...,\dimx}
    {   \foreach \z in {\steppingz,\secondz,...,\dimz}
        {   \pgfmathsetmacro{\lowx}{(\x-\steppingx)}
            \pgfmathsetmacro{\lowz}{(\z-\steppingz)}
            \filldraw[fill=\tikzcuboid@filltop,draw=black] (\lowx,\dimy,\lowz) -- (\lowx,\dimy,\z) -- (\x,\dimy,\z) -- (\x,\dimy,\lowz) -- cycle;
        }
    }
    \foreach \y in {\steppingy,...,\dimy}
    {   \foreach \z in {\steppingz,...,\dimz}
        {   
        
            \pgfmathtruncatemacro{\index}{mod(\y, 6)} 
            \pgfmathparse{\colors[\index]} 
            \edef\currentcolor{\pgfmathresult}
            
            \pgfmathsetmacro{\lowy}{(\y-\steppingy)}
            \pgfmathsetmacro{\lowz}{(\z-\steppingz)}
            \filldraw[fill=\currentcolor, draw=black] (\dimx,\lowy,\lowz) -- (\dimx,\lowy,\z) -- (\dimx,\y,\z) -- (\dimx,\y,\lowz) -- cycle;
        }
    }
    \end{scope}
}
\newcommand{\tikzrect}[6]{
    \def\width{#1}  
    \def\height{#2}  
    \def\colora{#3}  
    \def\colorb{#4}  
    \def\xshift{#5}  
    \def\yshift{#6}  

    \ifthenelse{\equal{\width}{2}}{
        \draw[thick, fill=\colora] (\xshift,\yshift) rectangle (\xshift+1, \yshift+\height);
        \draw[thick, fill=\colorb] (\xshift+1,\yshift) rectangle (\xshift+2, \yshift+\height);
    }{
        \draw[thick, fill=\colora] (\xshift,\yshift) rectangle (\xshift+\width, \yshift+\height);
    }

    \foreach \y in {1,...,\height} {
        \draw[thick] (\xshift, \yshift+\y) -- (\xshift+\width, \yshift+\y);
    }
    
    \foreach \x in {1,...,\width} {
        \draw[thick] (\xshift+\x, \yshift) -- (\xshift+\x, \yshift+\height);
    }
}
\def\eqref#1{equation~\ref{#1}}
\def\1{\bm{1}}
\def\rmW{{\mathbf{W}}}
\def\vone{{\bm{1}}}
\def\va{{\bm{a}}}
\def\vb{{\bm{b}}}
\def\vc{{\bm{c}}}
\def\ve{{\bm{e}}}
\def\vf{{\bm{f}}}
\def\vg{{\bm{g}}}
\def\vs{{\bm{s}}}
\def\vu{{\bm{u}}}
\def\vv{{\bm{v}}}
\def\vx{{\bm{x}}}
\def\vy{{\bm{y}}}
\def\mA{{\bm{A}}}
\def\mE{{\bm{E}}}
\def\mF{{\bm{F}}}
\def\mH{{\bm{H}}}
\def\mI{{\bm{I}}}
\def\mK{{\bm{K}}}
\def\mO{{\bm{O}}}
\def\mP{{\bm{P}}}
\def\mQ{{\bm{Q}}}
\def\mR{{\bm{R}}}
\def\mU{{\bm{U}}}
\def\mV{{\bm{V}}}
\def\mW{{\bm{W}}}
\def\mX{{\bm{X}}}
\def\mY{{\bm{Y}}}
\DeclareMathAlphabet{\mathsfit}{\encodingdefault}{\sfdefault}{m}{sl}
\SetMathAlphabet{\mathsfit}{bold}{\encodingdefault}{\sfdefault}{bx}{n}
\newcommand{\tens}[1]{\bm{\mathsfit{#1}}}
\def\tg{{\tens{g}}}
\def\tp{{\tens{p}}}
\def\tv{{\tens{v}}}
\def\gB{{\mathcal{B}}}
\def\gD{{\mathcal{D}}}
\def\gE{{\mathcal{E}}}
\def\gG{{\mathcal{G}}}
\def\gK{{\mathcal{K}}}
\def\gL{{\mathcal{L}}}
\def\gN{{\mathcal{N}}}
\def\gO{{\mathcal{O}}}
\def\gU{{\mathcal{U}}}
\def\gV{{\mathcal{V}}}
\def\gW{{\mathcal{W}}}
\def\gX{{\mathcal{X}}}
\def\gY{{\mathcal{Y}}}
\def\sN{{\mathbb{N}}}
\def\sR{{\mathbb{R}}}
\newcommand{\E}{\mathbb{E}}
\newcommand{\softmax}{\mathrm{softmax}}
\newcommand{\KL}{D_{\mathrm{KL}}}
\newcommand\restr[2]{{
  \left.\kern-\nulldelimiterspace 
  #1 
  \vphantom{\big|} 
  \right|_{#2} 
  }}
\newcommand{\mlp}{\vf}
\newcommand{\normal}{\gN}
\newcommand{\data}{\gD}
\newcommand{\batch}{{\gB}}
\newcommand{\loss}{\gL}
\newcommand{\params}{{\boldsymbol{\theta}}}
\newcommand{\metaparams}{{\boldsymbol{\phi}}}
\newcommand{\Weights}{{\gW}}
\newcommand{\Biases}{{\gU}}
\newcommand{\Params}{{\boldsymbol{\Theta}}}
\newcommand{\dec}{{\boldsymbol{\gamma}}}
\newcommand{\decbatch}{{\tg}}
\newcommand{\Grads}{{\boldsymbol{\Gamma}}}
\newcommand{\act}{\va}
\newcommand{\preact}{\vu}
\newcommand{\tang}{\vg}
\newcommand{\fisher}{{\mF}}
\newcommand{\bd}{{\textcolor{DarkOrange}{b}}}
\newcommand{\fd}{{\textcolor{HotPink}{f}}}
\newcommand{\ofd}{{\textcolor{HotPink}{1}}}
\newcommand{\tfd}{{\textcolor{HotPink}{2}}}
\newcommand{\ld}{{\textcolor{Brown}{l}}}
\newcommand{\zd}{{\textcolor{Brown}{0}}}
\newcommand{\od}{{\textcolor{Brown}{1}}}
\newcommand{\Ld}{{\textcolor{Brown}{L}}}
\newcommand{\toko}{{\textcolor{Green}{1}}}
\newcommand{\tokd}{{\textcolor{Green}{t}}}
\newcommand{\Td}{{\textcolor{Green}{T}}}
\newcommand{\jd}{{\textcolor{Blue}{j}}}
\newcommand{\ojd}{{\textcolor{Blue}{1}}}
\newcommand{\hd}{{\textcolor{Blue}{h}}}
\newcommand{\pe}{{\mathrm{PE}}}
\newcommand{\gbtgb}{{L_{\Grads_b}}}
\newcommand{\gbtg}{{L_{\mathrm{Pool}}}}
\newcommand{\gtg}{{L_{\Grads}}}
\newcommand{\gtw}{{L_{\mathrm{Prod}}}}
\newcommand{\gtv}{{L_{\mathrm{Vec}}}}
\newcommand{\layernorm}{\mathrm{LayerNorm}}
\newcommand{\gbtgbattn}{{U_{\Grads_b}}}
\newcommand{\gbtwattn}{{U_{\mathrm{Prod}}}}
\newcommand{\attention}{{\mathrm{Attention}}}
\newcommand{\nat}{{\nabla_{\mathrm{nat}}}}
\newcommand{\cmark}{\color{green}\ding{51}}
\newcommand{\xmark}{\color{red}\ding{55}}
\newcommand{\gnet}{{\boldsymbol{\Phi}}}
\newcommand{\ode}{L_{\mathrm{ode}}}
\newcommand{\odp}{L_{\mathrm{odp}}}
\newcommand{\funca}{{\boldsymbol{\Phi}}}
\newcommand{\funcb}{{\boldsymbol{\Psi}}}
\newcommand{\funcc}{{\boldsymbol{\Lambda}}}
\newcommand{\Eps}{{\gE}}
\newcommand{\ourmethod}{\text{GradMetaNet}\xspace}
\newcommand{\secondmethod}{\text{GradMetaNet}++\xspace}
\title{\ourmethod: An Equivariant Architecture for Learning on Gradients}
\author{%
    Yoav~Gelberg\thanks{Equal contribution}\\
    University of Oxford\\
    \texttt{yoav@robots.ox.ac.uk} \\
    \And
    Yam~Eitan$^*$\\
    Technion\\
    \texttt{yam.eitan@campus.technion.ac.il}\\
    \AND
    Aviv~Navon\\
    Independent Reseracher\\
    \And
    Aviv~Shamsian\\
    Bar-Ilan University
    \And
    Theo~(Moe)~Putterman\\
    UC Berkeley
    \AND
    Michael~Bronstein\\
    University of Oxford, AITHYRA
    \And
    Haggai~Maron\\
    Technion/NVIDIA
}
\theoremstyle{plain}
\newtheorem{theorem}{Theorem}[section]
\newtheorem{proposition}[theorem]{Proposition}
\newtheorem{lemma}[theorem]{Lemma}
\newtheorem{corollary}[theorem]{Corollary}
\theoremstyle{definition}
\newtheorem{definition}[theorem]{Definition}
\theoremstyle{remark}
\begin{document}

\maketitle

\begin{abstract}
    \looseness=-1
    Gradients of neural networks encode valuable information for optimization, editing, and analysis of models. Therefore, practitioners often treat gradients as inputs to task-specific algorithms, e.g. for pruning or optimization. Recent works explore \emph{learning} algorithms that operate directly on gradients but use architectures that are not specifically designed for gradient processing, limiting their applicability. In this paper, we present a principled approach for designing architectures that process gradients. Our approach is guided by three principles: (1) equivariant design that preserves neuron permutation symmetries, (2) processing sets of gradients across multiple data points to capture curvature information, and (3) efficient gradient representation through rank-1 decomposition. Based on these principles, we introduce \ourmethod, a novel architecture for learning on gradients, constructed from simple equivariant blocks. We prove universality results for \ourmethod, and show that previous approaches cannot approximate natural gradient-based functions that \ourmethod can. We then demonstrate \ourmethod's effectiveness on a diverse set of gradient-based tasks on MLPs and transformers, such as learned optimization, INR editing, and estimating loss landscape curvature.
\end{abstract}

\section{Introduction}
\begin{wrapfigure}[16]{R}{0.6\textwidth}
    \centering
    \resizebox{\linewidth}{!}{
    \begin{tikzpicture}[
        node distance = 2cm,
        box/.style = {draw, minimum width=2cm, minimum height=1cm, rounded corners=5pt, very thick, font=\Large},
        metabox/.style = {draw, fill=Green, text=white, minimum width=2.5cm, minimum height=1cm, rounded corners=5pt, very thick, font=\Large},
        thick arrow/.style = {->, line width=2pt, shorten >=6pt, shorten <=6pt}
    ]
    
    \node[box] (grads) {$\{\nabla_1, \ldots, \nabla_M\}$};
    \node[above=1pt of grads, font=\large] {NN Grads};
    
    \node[metabox, right=2cm of grads] (gradnet1) {\ourmethod};
    \node[box, right=2cm of gradnet1] (newparams) {$\Delta \params$};
    \node[above=1pt of newparams, font=\large] {Weight delta};
    
    \node[metabox, above=1cm of gradnet1] (gradnet2) {\ourmethod};
    \node[box, right=2cm of gradnet2] (adapt) {$\Tilde{\nabla}$};
    \node[above=1pt of adapt, font=\large] {Adapted grad};
    
    \node[metabox, below=1cm of gradnet1] (gradnet3) {\ourmethod};
    \node[box, right=2cm of gradnet3] (pred) {.991};
    \node[above=1pt of pred, font=\large] {Curvature info};
    
    \node[Crimson, right=15pt of grads, xshift=2.6cm, yshift=-1cm, font=\large{}] {Or};
    \node[Crimson, right=15pt of grads, xshift=2.6cm, yshift=1cm, font=\large] {Or};
    
    \draw[thick arrow] (1.6, 0) -- (gradnet1);
    \draw[thick arrow] (1.6, 0.5) -- (3.7, 1.7);
    \draw[thick arrow] (1.6, -0.5) -- (3.7, -1.7);
    \draw[thick arrow] (gradnet1) -- (newparams);
    \draw[thick arrow] (gradnet2) -- (adapt);
    \draw[thick arrow] (gradnet3) -- (pred);
    
    \end{tikzpicture}
}
    \caption{We propose \ourmethod, a novel architecture that processes sets of gradients and can learn to compute gradient adaptations, parameter edits, or scalar values such as curvature information or influence functions.} 
    \label{fig:ourmethod_illustration}
\end{wrapfigure}
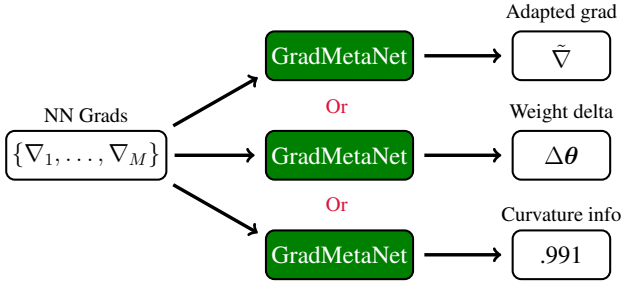

\looseness=-1
Gradients of neural networks are fundamental objects in deep learning, driving optimization and offering insights into model behavior. Beyond gradient descent and its variants \citep{rumelhart1986learning, bottou2010large, kingma2014adam}, gradients are used in diverse applications that call for sophisticated processing. These applications broadly span three areas: model optimization, editing, and analysis. In accelerated \textbf{optimization}, several approaches use (multi-)gradient information to improve convergence speed. These approaches range from classical curvature-aware methods like natural gradient descent \citep{amari1998natural} powered by efficient approximate curvature-based preconditioners \citep{martens2015optimizing, grosse2016kronecker, george2018ekfac, gupta2018shampoo, vyas2024soap, liu2025muon}, to learned optimizers \citep{bengio1990learning, andrychowicz2016learning, metz2019understanding}. In model \textbf{editing}, gradient information guides pruning algorithms for weight compression \citep{lecun1989optimal, hassibi1993optimal, wang2019eigendamage, van2023llm}, and enables targeted behavior modification in large language models \citep{de2021editing, mitchell2021fast}. For model \textbf{analysis} and interpretability, gradient information is used to compute influence functions that trace the impact of individual training samples \citep{koh2017understanding, grosse2023studying}, estimate model uncertainty \citep{daxberger2021laplace, immer2021scalable}, and more.

\looseness=-1
While most approaches rely on predefined algorithms and heuristics, recent works explore \emph{learnable} processing of gradients for downstream tasks \citep{de2021editing, mitchell2021fast, zhou2024universal, kofinas2024graph}. Learned methods offer two key advantages. First, they are essential when no predefined algorithm is known. In model editing, for instance, updating a model using gradients of the editing objective while maintaining performance on a validation set requires intricate gradient adaptations that are difficult to model analytically. Learned approaches can effectively discover these gradient adaptations through supervision \citep{de2021editing, mitchell2021fast}. Second, learned approaches offer a powerful mechanism for approximating computationally expensive methods. Methods such as natural gradient descent require hand-crafted approximations for practical application. Learned approaches, if successful, can bridge this gap by discovering efficient approximations tailored for a specific distribution of models. Unfortunately, existing learned approaches use architectures not specifically designed for processing \emph{gradient information}. For example, \citet{de2021editing, mitchell2021fast} don't account for the parameter symmetries in the gradient representation (as they process gradients of a single model), while recent weight-space methods \citep{zhou2024universal, kofinas2024graph} use inefficient gradient representations and process only a single gradient. As a result, they are unable to capture curvature information that is critical to many tasks.

\looseness=-1
\textbf{Our approach.} In this paper, we introduce \ourmethod, an architecture designed for learning on gradients of deep models such as \emph{MLPs} and \emph{transformers}. \ourmethod's design is guided by the following principles: 
    (1) \textbf{Respecting symmetries:} Neural parameter spaces exhibit inherent symmetries, leading to redundancies in gradient representations. \ourmethod's design is derived to respect these symmetries, reducing the number of parameters and improving sample efficiency. As demonstrated in previous work \citep{bronstein2017geometric, esteves2018learning, zaheer2017deep, kondor2018generalization, cohen2021equivariant,lim2022sign, tahmasebi2023exact, brehmer2024does}, equivariant design is crucial for learning on data with symmetries.
    (2) \textbf{Processing sets of gradients:} Many applications, such as curvature-aware optimization, pruning, and uncertainty estimation, require access to \emph{collections} of gradients on different datapoints which encode the local geometry of the loss. \ourmethod is thus designed to efficiently handle sets of gradients computed on different datapoints. 
    (3) \textbf{Efficient representation:} As gradients are extremely high-dimensional, we encode them efficiently. Gradients of neural networks, evaluated on a single data point, admit a rank-1 decomposition which provides a compact representation that scales linearly (rather than quadratically) with the number of neurons.

\looseness=-1
Decomposed gradients have a simpler symmetry structure compared to the raw weight representation, allowing us to construct \ourmethod using simple equivariant building blocks \citep{zaheer2017deep, hartford2018deep, serviansky2020set2graph} and to incorporate attention mechanisms. This enables us to prove universality results still unknown for weight-space models. Additionally, we formally demonstrate the necessity of processing sets of gradients, proving that several fundamental gradient-based algorithms cannot be approximated based on a single averaged gradient. 

\looseness=-1
We evaluate \ourmethod on several gradient learning tasks, comparing to equivariant weight-space architectures and other natural baselines. First, we demonstrate \ourmethod's ability to predict local curvature information using a small sample of gradients, achieving a 26.3\% improvement over standard approximations, and outperforming other learned approaches. We then integrate \ourmethod into learned optimizer architectures and apply it to train image classifiers and transformer language models, achieving up to a 4.63× reduction in steps compared to Adam, and a 1.78× improvement over other learned baselines. Finally, we use \ourmethod for model editing, where we improve on current \textbf{state-of-the-art} results in editing MNIST and CIFAR10 INRs by up to 22.5\%. Across all tasks, \ourmethod consistently outperforms baselines, highlighting the value of efficient gradient representations and equivariant processing of sets of gradients.

\begin{figure}[t]
  \centering
  \subfigure[$\tp = {[\textcolor{Plum}{\nabla_1}, \textcolor{NavajoWhite}{\nabla_2}, \textcolor{LightSteelBlue}{\nabla_3}, \textcolor{Pink}{\nabla_4}]} \in \Params_{4}$]{
    \resizebox{0.61\textwidth}{!}{
    \begin{tikzpicture}
        \tikzcuboid{
            dimx=12,
            dimy=4,
            dimz=4,
            shiftx=0cm,
            shifty=0cm,
            scale=1.00,
            rotation=0,
            densityx=1,
            densityy=1,
            densityz=1
        }
        \tikzcuboid{
            dimx=12,
            dimy=4,
            dimz=12,
            shiftx=17cm,
            shifty=1.5cm,
            scale=1.00,
            rotation=0,
            densityx=1,
            densityy=1,
            densityz=1
        }
        \tikzcuboid{
            dimx=5,
            dimy=4,
            dimz=12,
            shiftx=34cm,
            shifty=1.5cm,
            scale=1.00,
            rotation=0,
            densityx=1,
            densityy=1,
            densityz=1
        }

    \end{tikzpicture}
}
    \label{subfig:standard_gradient_data}
  }
  \hspace{1cm}
  \subfigure[$\decbatch = {[\textcolor{Plum}{\dec_1}, \textcolor{NavajoWhite}{\dec_2}, \textcolor{LightSteelBlue}{\dec_3}, \textcolor{Pink}{\dec_4}]} \in \Grads_{4}$]{
    \resizebox{0.25\textwidth}{!}{
    \begin{tikzpicture}
        \tikzcuboid{%
            dimx=2,%
            dimy=4,%
            dimz=4,%
            shiftx=0cm,%
            shifty=0cm,%
            scale=1.00,%
            rotation=0,%
            densityx=1,%
            densityy=1,%
            densityz=1%
        }
        \tikzcuboid{%
            dimx=2,%
            dimy=4,%
            dimz=12,%
            shiftx=5cm,%
            shifty=1.5cm,%
            scale=1.00,%
            rotation=0,%
            densityx=1,%
            densityy=1,%
            densityz=1%
        }
        \tikzcuboid{%
            dimx=2,%
            dimy=4,%
            dimz=12,%
            shiftx=9cm,%
            shifty=1.5cm,%
            scale=1.00,%
            rotation=0,%
            densityx=1,%
            densityy=1,%
            densityz=1%
        }
        \tikzcuboid{%
            dimx=2,%
            dimy=4,%
            dimz=5,%
            shiftx=12cm,%
            shifty=0cm,%
            scale=1.00,%
            rotation=0,%
            densityx=1,%
            densityy=1,%
            densityz=1%
        }
    \end{tikzpicture}
}
    \label{subfig:decomposed_gradient_data}
  }
  \hspace{1cm}
  \subfigure[$\textcolor{Tan}{\nabla} = \frac{1}{4}(\textcolor{Plum}{\nabla_1} + \textcolor{NavajoWhite}{\nabla_2} + \textcolor{LightSteelBlue}{\nabla_3} + \textcolor{Pink}{\nabla_4}) \in \Params$]{
    \resizebox{0.61\textwidth}{!}{
    \begin{tikzpicture}
        \tikzcuboid{%
            dimx=12,%
            dimy=1,%
            dimz=4,%
            shiftx=0cm,%
            shifty=0cm,%
            scale=1.00,%
            rotation=0,%
            densityx=1,%
            densityy=1,%
            densityz=1,%
            filltop=Tan!45
        }
        \tikzcuboid{%
            dimx=12,%
            dimy=1,%
            dimz=12,%
            shiftx=17cm,%
            shifty=1.5cm,%
            scale=1.00,%
            rotation=0,%
            densityx=1,%
            densityy=1,%
            densityz=1,%
            filltop=Tan!45
        }
        \tikzcuboid{%
            dimx=5,%
            dimy=1,%
            dimz=12,%
            shiftx=34cm,%
            shifty=1.5cm,%
            scale=1.00,%
            rotation=0,%
            densityx=1,%
            densityy=1,%
            densityz=1,%
            filltop=Tan!45
        }
    \end{tikzpicture}
}
    \label{subfig:reduced_gradient_data}
  }

  \caption{Gradient information on a batch of datapoints in different tensor representations. In~\ref{subfig:standard_gradient_data}, a stack of the weight-shaped gradients, one for each datapoint. In~\ref{subfig:decomposed_gradient_data}, a stack of the rank-1 gradient decompositions. In~\ref{subfig:reduced_gradient_data}, the gradient of the average loss on the batch. All of these tensors are naturally computed when backpropagating the loss on the batch.}
  \label{fig:gradient_data}
\end{figure}
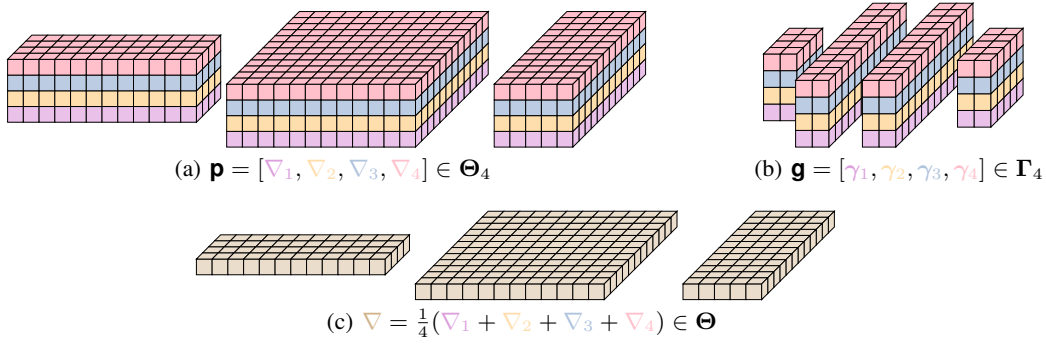
\section{Related Work}\label{sec:related}

\looseness=-1
Several recent works have explored methods for learning over neural network weights \citep{unterthiner2020predicting, eilertsen2020classifying, schurholt2021self, peebles2022learning,  schurholt2022hyper, navon2023equivariant, navon2023alignment, lim2023graph, andreis2023set,  zhou2024permutation, zhou2024universal, kofinas2024graph, kahana2024deep, horwitz2024representing, herrmann2024learning, schurholt2024towards}. These methods often use equivariant architectures  \citep{ravanbakhsh2017equivariance, cohen2016group, zaheer2017deep, qi2017pointnet, esteves2018learning, kondor2018generalization, bekkers2019b, maron2020learning, finzi2020generalizing, bronstein2021geometric, satorras2021n, lim2022sign, wagstaff2022universal} that respect the internal symmetry of neural network weight spaces. A particularly promising application is processing gradients in weight space for tasks such as learned optimization \citep{kofinas2024graph, zhou2024universal}. While these approaches respect the natural symmetries of gradients, they typically operate on a single gradient, missing valuable information encoded in gradient statistics. Furthermore, these methods process high-dimensional, full-size gradients limiting scalability. Other works, such as \citet{de2021editing, mitchell2021fast}, analyze gradients of a \emph{fixed} pre-trained model, and are not suitable for settings involving different models or evolving parameter configurations (e.g., learned optimization), as they are not equivariant to permutation symmetries. 

\looseness=-1
Among classical, non-learned approaches, methods such as K-FAC and its variants \citep{martens2015optimizing, grosse2016kronecker, george2018ekfac} offer efficient ways to extract curvature information from gradients. These methods, widely used for curvature-aware optimization \citep{martens2015optimizing, grosse2016kronecker, george2018ekfac, gupta2018shampoo, vyas2024soap, liu2025muon}, pruning \citep{wang2019eigendamage, van2023llm}, uncertainty estimation \citep{daxberger2021laplace, immer2021scalable}, and influence function estimation \citep{grosse2023studying}, need to make probabilistic assumptions on the distribution of gradients for computational feasibility. We advance this perspective by introducing a \emph{learnable} approach to modeling these gradient distributions using \ourmethod, offering greater flexibility and expressiveness.

\section{Background}\label{sec:preliminaries}
\textbf{Notation.} 
\looseness=-1
Throughout the paper, we denote models by $\mlp_\params: \gX \to \gY$, where $\params \in \Params$ are the parameters. When $\mlp_\params$ is a multi-layer perceptron (MLP), we write the input dimension as $d_0$, the output dimension as $d_L$, the hidden dimensions as $d_1, \dots, d_{L-1}$, and denote the activation function by $\sigma$. In this case, the parameters are given by $\params = (\mW_1, \vb_1, \dots, \mW_L, \vb_L)$. Given a dataset $\data \subseteq \gX \times \gY$, a loss function $\ell: \gY \times \gY \to \sR$, and a batch $\batch \subseteq \data$, we denote the loss on the batch by 
\begin{equation*}
    \loss_\batch(\params) := \frac{1}{|\batch|}\sum_{(\vx, \vy) \in \batch}\ell(\mlp_\params(\vx), \vy).
\end{equation*}
The parameter gradients of the loss on the batch are denoted by $\nabla_\batch := \nabla_\params \loss_\batch(\params)$. For a single data point $(\vx, \vy)$, we write $\loss_{(\vx, \vy)}(\params) := \ell(\mlp_\params(\vx), \vy)$ and $\nabla_{(\vx, \vy)} := \nabla_\params \loss_{(\vx, \vy)}(\params)$.

\looseness=-1
\textbf{Rank-1 decomposition of gradients.} While general parameter gradients have the same shape as parameters, for many neural architectures, the gradient $\nabla_{(\vx, \vy)}$ admits a rank-1 decomposition through the computation graph of $\mlp_\params$. For an MLP $\mlp_\params$, let $\act^{(l)}$ and $\preact^{(l)}$ denote $\vx$'s activation and pre-activation vectors at layer $l$. The backpropagated signal (pre-activation gradient) at layer $l$ is
\begin{equation}
    \tang^{(l)} := \frac{\partial \loss_{(\vx, \vy)}(\params)}{\partial \preact^{(l)}}  = \frac{\partial \ell(\mlp_\params(\vx), \vy)}{\partial \preact^{(l)}}.
\end{equation}
Applying the chain rule yields the following expressions for the weight and bias gradients:
\begin{equation}\label{eq:gradient_decomposition}
    \nabla_{\mW_l} \loss_{(\vx, \vy)}(\params) = \tang^{(l)} (\act^{(l-1)})^\top, \quad 
    \nabla_{\vb_l} \loss_{(\vx, \vy)}(\params) = \tang^{(l)}.
\end{equation}
See full derivation in Appendix~\ref{apx:decomposition}. This decomposition allows us to represent $\nabla_{(\vx, \vy)}$ using the tuple $(\dec^{(0)}, \dots, \dec^{(L)})$, where $\dec^{(l)} := (\act^{(l)}, \tang^{(l)})$. Note that $\act^{(l)}$ and $\tang^{(l)}$ are naturally computed during backpropagation, so they can be extracted \emph{without additional cost}, e.g., using hooks in standard frameworks like PyTorch \citep{paszke2019pytorch}. See code example in Appendix~\ref{apx:extract}.

\looseness=-1
\textbf{Decomposition for transformer gradients.} Similar gradient decompositions exist for many other neural architectures \citep{grosse2016kronecker, eschenhagen2023kronecker}. In Appendix~\ref{apx:transformer_decomposition} we derive such a decomposition for \emph{all components of the transformer}. To illustrate the structure of the decomposition, we focus here on the feedforward (MLP) layers, which account for the majority of parameters. Given an input sequence $\vs = (\vx_1, \dots, \vx_T)$, let $\act_t^{(l)}$ denote the activation of token $\vx_t$ at layer $l$ of the MLP component in a transformer block, and let $\tang_t^{(l)}$ be the corresponding pre-activation gradient signal, computed with respect to the loss on the \emph{entire sequence} $\loss_\vs(\params)$. We similarly get:
\begin{equation}
    \nabla_{\mW_l} \loss_\vs(\params) = \sum_{t = 1}^T \tang_t^{(l)} (\act_t^{(l-1)})^\top, \quad 
    \nabla_{\vb_l} \loss_\vs(\params) = \sum_{t = 1}^T \tang_t^{(l)}.
\end{equation}
We can therefore represent the gradient using $\dec_1^{(l)}, \dots, \dec_T^{(l)}$ where $\dec_t^{(l)} := (\act_t^{(l)}, \tang_t^{(l)})$. In other words, while we incur an additional sequence dimension, the rank-1 decomposition still holds per-token.

\begin{wrapfigure}[15]{R}{0.4\textwidth}
    \centering
    \resizebox{0.4\textwidth}{!}{
    \begin{tikzpicture}[scale=1.8]
        \pgfmathsetseed{12345} 
        
        \foreach \s in {1.0, 0.7, 0.4} {
            \draw[Red,very thick,rotate around={35:(0,0)},opacity=\s] (0,0) ellipse ({2*\s} and {0.8*\s});
        }
        
        \fill[Red] (0,0) circle (0.05) node[above right] {$\theta$};
        
        \foreach \i in {1,...,80} {
            \pgfmathsetmacro{\u}{random()}
            \pgfmathsetmacro{\v}{random()}
            \pgfmathsetmacro{\r}{sqrt(-2*ln(\u)) * 0.7}
            \pgfmathsetmacro{\theta}{360*\v}
            \pgfmathsetmacro{\x}{\r*cos(\theta)*1.2}  
            \pgfmathsetmacro{\y}{\r*sin(\theta)*0.5}  
            \pgfmathsetmacro{\xrot}{\x*cos(35)-\y*sin(35)}
            \pgfmathsetmacro{\yrot}{\x*sin(35)+\y*cos(35)}
            \draw[->,Blue!20, thick] (0,0) -- (\xrot,\yrot);
        }
        
        
        \draw[->,Blue,very thick] (0,0) -- (1.6,1.0) node[right] {$\nabla_1$};
        \draw[->,Blue,very thick] (0,0) -- (-1.2,0.2) node[left] {$\nabla_2$};
        \draw[->,Blue,very thick] (0,0) -- (0.7,-0.6) node[below right] {$\nabla_3$};
        \draw[->,Blue,very thick] (0,0) -- (-0.8,-0.8) node[below left] {$\nabla_4$};
        \draw[->,Blue,very thick] (0,0) -- (1.0,0.0) node[below right] {$\nabla_5$};
        \draw[->,Blue,very thick] (0,0) -- (-0.4,0.8) node[above] {$\nabla_6$};
        \draw[->,Blue,very thick] (0,0) -- (0.4,1.2) node[above right] {$\nabla_7$};
        \draw[->,Blue,very thick] (0,0) -- (-1.8,-0.7) node[left] {$\nabla_8$};
        
        \begin{scope}[shift={(2,-2)}]
            \draw[->,Blue,very thick] (-1.2, 0.8) -- (-0.7, 0.8);
            \node[right] at (-0.7,0.8) {Gradients};
            \draw[Red, very thick] (-1.2, 0.5) -- (-0.7, 0.5);
            \node[right] at (-0.7,0.5) {FIM};
        \end{scope}
    \end{tikzpicture}
}
    \caption{Fisher information as a second-order approximation to the loss.}
    \label{fig:fisher}
\end{wrapfigure}
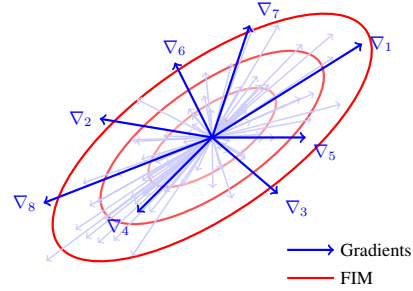

\textbf{Approximate curvature from gradient statistics.} Gradients statistics across datapoints encode information about the local geometry of the loss landscape. For example, the Fisher information matrix (FIM)
\begin{equation*}
    \fisher_\params = \E_{\substack{\vx \sim \data, \\ \vy \sim p_\params(\vy \mid \vx)}} \left(\nabla_\params \log(p_\params(\vy | \vx)) \nabla_\params \log(p_\params(\vy | \vx))^\top\right)
\end{equation*}

is a second-order approximation of the change in the model's predictive distribution $p_\params(\vy | \vx)$ with respect to a change in the parameters, and when $\params$ is a local minimum, the FIM is identical to the Hessian. Gradient decompositions similar to the one discussed above have been used to derive tractable approximation of the FIM \citep{martens2015optimizing, grosse2016kronecker, george2018ekfac, gupta2018shampoo, vyas2024soap, liu2025muon}. In this work we directly process sets of gradients to enable learning the local geometry of the loss landscape from their statistics. For background on the Fisher Information Matrix, see Appendix \ref{apx:fisher}, and for a detailed overview, refer to \citet{martens2020new, pascanu2013revisiting}. 

\vspace{-8pt}
\section{Symmetries of Decomposed Gradients}\label{sec:symmetires}

\textbf{Weight-space symmetries.} 
\looseness=-1
Many neural architectures exhibit parameter space symmetries: parameter transformations that leave the network’s function unchanged. In particular, MLPs exhibit well-documented permutation invariance \citep{hecht1990algebraic, simsekweight, ainsworth2022git, navon2023equivariant, zhou2024permutation}: permuting the neurons of a hidden layer, while keeping track of the connections to the neighboring layers, alters the weight matrices but preserves the function represented by the network. This parameter space symmetry can be expressed by an action of the permutation symmetry group $G :=  S_{d_1} \times \cdots \times S_{d_{L-1}}$. For $\params = (\mW_1, \vb_1, \dots, \mW_L, \vb_L) \in \Params$ and $h = (\tau_1, \dots, \tau_{L-1}) \in G$, the action $h \cdot \params = (\mW_1', \vb_1', \dots, \mW_L', \vb_L')$ is given by
\begin{align*}
    &\mW_1' = \mP_{\tau_1}^\top \mW_1, & &\vb_1' = \mP_{\tau_1}^\top \vb_1, \\
    &\mW_l' = \mP_{\tau_l}^\top \mW_l \mP_{\tau_{l -1}}, & &\vb_l' = \mP_{\tau_l}^\top \vb_l, \quad l \in \{2, \dots, L -1\} \\
    &\mW_{L}' = \mW_L \mP_{\tau_{L - 1}}, & &\vb_L' = \vb_L,
\end{align*}
where $\mP_{\tau_l} \in \{0 , 1\}^{d_l \times d_l}$ is the permutation matrix corresponding to $\tau_l \in S_{d_l}$ (see Figure~\ref{fig:symmetries} for an illustration). The action of $G$ preserves the function represented by the network: $\mlp_{g \cdot \params} \equiv \mlp_\params$. Permutation symmetries naturally extend to many other neural architectures, see \citet{lim2023graph, kofinas2024graph, zhou2024universal} for a detailed discussion.

\looseness=-1
\textbf{Decomposed gradient symmetries.} Weight-space symmetries naturally extend to decomposed gradients. Following the discussion in Section~\ref{sec:preliminaries}, we define the decomposed gradient space of an MLP $\mlp_\params$ as
\begin{equation}
    \Grads := \Grads^{(0)} \oplus \cdots \oplus \Grads^{(L)}, 
\end{equation} 
where $\Grads^{(l)} := \sR^{d_l \times 2}$, referred to as the \emph{neuron space} of the $l$-th layer, contains pairs $\dec^{(l)} = (\act^{(l)}, \tang^{(l)})$ of activations and pre-activation gradients. $\oplus$ denotes a direct sum of vector spaces. 

$G$'s action extends naturally to $\Grads$. For a decomposed gradient $\dec = (\dec^{(0)}, \dots, \dec^{(L)}) \in \Grads$ and $h = (\tau_1, \dots, \tau_{L-1}) \in G$, the action $h \cdot \dec = (h \cdot \dec^{(0)}, \dots, h \cdot \dec^{(L)})$ is given by:
\begin{equation}\label{eq:g_action}
    h \cdot \dec^{(l)} = (\mP_{\tau_l}^\top \act^{(l)}, \mP_{\tau_l}^\top \tang^{(l)}), \text{ for } l=1, \dots, L -1, \quad h \cdot \dec^{(l)} = \dec^{(l)}, \text{ for } l=1,L.
\end{equation}

Let $\funca_{(\vx, \vy)}: \Params \to \Grads$ be the function that maps parameters $\params$ to the decomposition of the gradient $\nabla_{(\vx, \vy)} = \nabla_\params \loss_{(\vx, \vy)}(\params)$. $\funca_{(\vx, \vy)}$ is $G$-\emph{equivariant}, that is:
\begin{equation}
    \funca_{(\vx, \vy)}(h \cdot \params) = h \cdot \funca_{(\vx, \vy)}(\params).
\end{equation}
This equivariance applies to any transformation that modifies or extracts information from the function represented by $\mlp_\params$ using its gradients. As illustrated in Figure \ref{fig:symmetries}, $G$'s action on $\Grads$ is simpler than its action on $\Params$, since the permutations act independently on the different neuron spaces.

\textbf{Sets of decomposed gradients.}
\looseness=-1
When computing the gradient of the loss over a batch $\batch \subseteq \data$, we naturally obtain a set $\{\nabla_{(\vx, \vy)}\}_{(\vx, \vy) \in \batch}$ of individual gradients\footnote{By ``naturally'' we mean that these gradients are always computed when backpropagating the average loss on the batch, and can be be extracted using hooks \emph{without additional cost}.}. As discussed in Section \ref{sec:preliminaries}, this set contains implicit information about the local geometry of the loss landscape, which is critical for many tasks. Therefore, when designing methods for learning on gradients, it's beneficial to process the entire collection rather than the gradient of the average loss. This intuition is formally motivated in Section \ref{sec:theory}.

\looseness=-1
As illustrated in Figure \ref{fig:gradient_data}, gradients across a batch of size $b$ can be efficiently represented as a tensor $\decbatch \in \Grads_b$, where the batched decomposed gradient space $\Grads_b$ is
\begin{equation}
    \Grads_b := \Grads^{(0)}_b \oplus \cdots \oplus \Grads^{(L)}_b, \quad \Grads^{(l)}_b := \sR^{b \times d_l \times 2}
\end{equation}
See formal definitions of all parameter and gradient spaces in Appendix~\ref{apx:spaces}. Since the order of gradients in the batch is arbitrary, the set symmetry group is extended to $G_b :=  S_b \times G$. The action of $(\tau, h) \in G_b$ permutes the batch indices using $\tau$ and independently applies $h$ across the neurons:
\begin{equation}\label{eq:bg_action}
    ((\tau, h) \cdot \decbatch^{(l)})_{j, :, :} = h \cdot \decbatch^{(l)}_{\tau^{-1}(j), :, :}.
\end{equation}
When modeling functions $\gnet :\Grads_b \to \Params$, we want to respect decomposed gradient symmetries ($G$-equivariance) and be independent of gradient ordering ($S_b$-invariance). We thus aim to design models that satisfy $\gnet((\tau, h) \cdot \decbatch ) = h \cdot \gnet(\decbatch)$. 

\textbf{Extension to transformers.} The analysis above extends naturally to decomposed transformer gradients. The sequence dimension is treated as a batch dimension (with optional added sequence PE), and the neuron spaces correspond to the input and output of every linear layer and every query/key/value/output projection. Additionally, the neuron spaces across the residual stream are tied together, having the same symmetry group. For a detailed discussion, see Appendix~\ref{apx:transformer_symmertries}. 
\begin{figure}[t]
    \centering
    \resizebox{\textwidth}{!}{
    \newcommand{\SuperHuge}{\fontsize{40}{48}\selectfont}
    \begin{tikzpicture}
        \tikzrect{2}{4}{Indigo!45}{Indigo!75}{1}{0};
        \tikzrect{2}{8}{Indigo!45}{Indigo!75}{8}{-2};
        \node[draw, circle, fill=black] at (14.3, 2) {};
        \node[draw, circle, fill=black] at (15, 2) {};
        \node[draw, circle, fill=black] at (15.7, 2) {};
        \tikzrect{2}{8}{Indigo!45}{Indigo!75}{21}{-2};
        \tikzrect{2}{5}{Indigo!45}{Indigo!75}{28}{-0.5};

        \draw[<->, bend left, line width=2mm, draw=MediumBlue] (7.5, -1.5) to (7.5, 5.5);
        \node[font=\SuperHuge, text=MediumBlue, scale=1.2] at (5.4, 2) {$S_{d_1}$};
        
        \draw[<->, bend left, line width=2mm, draw=Coral] (20.5, -1.5) to (20.5, 5.5);
        \node[font=\SuperHuge, text=Coral, scale=1.2] at (17.9, 2) {$S_{d_{L-1}}$};

        \node[font=\SuperHuge, scale=1.5] at (2, -1.2) {$(\act^{(0)}, \tang^{(0)})$};
        \node[font=\SuperHuge, scale=1.5] at (9, -3.7) {$(\act^{(1)}, \tang^{(1)})$};
        \node[font=\SuperHuge, scale=1.5] at (22, -3.7) {$(\act^{(L - 1)}, \tang^{(L-1)})$};
        \node[font=\SuperHuge, scale=1.5] at (29, -1.7) {$(\act^{(L)}, \tang^{(L)})$};

        \node[font=\SuperHuge] at (-2.5, 2) {\textbf{vs.}};

        \begin{scope}[shift={(-49,0)}]
            \tikzrect{8}{4}{PapayaWhip}{blue!25}{0}{0};
            \tikzrect{8}{8}{PapayaWhip}{blue!25}{12}{-2};
            \node[draw, circle, fill=black] at (21.8, 2) {};
            \node[draw, circle, fill=black] at (22.5, 2) {};
            \node[draw, circle, fill=black] at (23.2, 2) {};
            \tikzrect{8}{8}{PapayaWhip}{blue!25}{25}{-2};
            \tikzrect{5}{8}{PapayaWhip}{blue!25}{38}{-2};
    
            \draw[<->, bend left, line width=2mm, draw=MediumBlue] (11.5, -1.5) to (11.5, 5.5);
            \node[font=\SuperHuge, text=MediumBlue, scale=1.2] at (9.5, 2.1) {$S_{d_1}$};
            \draw[<->, bend left=45, line width=2mm, draw=MediumBlue] (0.5, 4.4) to (7.5, 4.4);
            \node[font=\SuperHuge, text=MediumBlue, scale=1.2] at (4, 6.5) {$S_{d_1}$};
            \draw[<->, bend left, line width=2mm, draw=Coral] (25.5, 6.5) to (32.5, 6.5);
            \node[font=\SuperHuge, text=Coral, scale=1.2] at (29, 8.4) {$S_{d_{L-1}}$};
            \draw[<->, bend left, line width=2mm, draw=Coral] (37.5, -1.5) to (37.5, 5.5);
            \node[font=\SuperHuge, text=Coral, scale=1.2] at (34.9, 2) {$S_{d_{L-1}}$};

            \node[font=\SuperHuge, scale=1.5] at (4, -1.3) {$\nabla_{\rmW_1}$};
            \node[font=\SuperHuge, scale=1.5] at (16, -3.3) {$\nabla_{\rmW_2}$};
            \node[font=\SuperHuge, scale=1.5] at (29, -3.3) {$\nabla_{\rmW_{L-1}}$};
            \node[font=\SuperHuge, scale=1.5] at (40.5, -3.3) {$\nabla_{\rmW_{L-1}}$};
        \end{scope}
    \end{tikzpicture}
}
    \vspace{-10pt}
    \caption{The action of $G = \textcolor{MediumBlue}{S_{d_1}} \times \cdots \times \textcolor{Coral}{S_{d_{L-1}}}$ on parameter space performs simultaneous permutation of rows and columns of consecutive weight matrices. In contrast, $G$'s action on the decomposed gradient space permutes the neuron space of each hidden layer independently.}\label{fig:symmetries}
\end{figure}
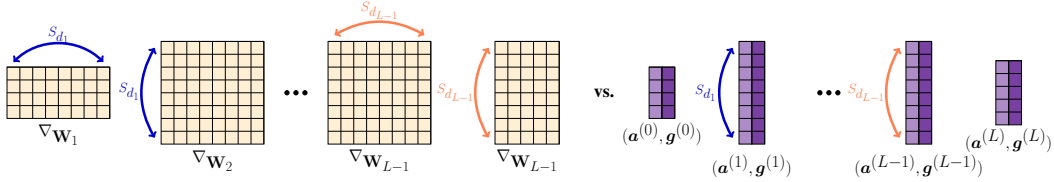

\textbf{Feature spaces.} As with other equivariant architectures, it is useful to extend $\Grads$ and $\Grads_b$ to more general feature spaces $\Grads_b[f]$ and $\Grads[f]$ by assigning an $f$-dimensional feature vector to each entry. See Appendix \ref{apx:spaces} for a formal definition.

\section{\ourmethod}\label{sec:method}

In this section, we introduce \ourmethod, an architecture for learning on gradients designed to process \textbf{sets} of \textbf{decomposed} gradients in a $G_b$-\textbf{equivariant} way. As the symmetry structure of $\Grads_b$ is simpler than that of $\Params_b$, we can design \ourmethod using simpler equivariant layers compared to its weight-space counterparts \citep{navon2023alignment, zhou2024permutation, zhou2024universal}. Specifically, $\decbatch \in \Grads_b$ can be viewed as a set of decomposed gradients $\{\dec_1, \dots, \dec_b\}$, each of which is a concatenation of sets of neuron-level features $\dec_i = (\dec^{(0)}_i, \dots, \dec^{(L)}_i)$. To further simplify the symmetry structure, we incorporate a positional encoding map that enables us to treat each $\dec_i$ as a \emph{single} bag of neuron-level features. This allows us to implement \ourmethod using simple, well-established equivariant layers for sets \citep{qi2017pointnet, zaheer2017deep, hartford2018deep, serviansky2020set2graph}. As illustrated in Figure \ref{fig:pipeline}, a \ourmethod model $\gnet$ is composed of a positional encoding map followed by a stack of equivariant linear layers of several types, interleaved with pointwise non-linearities. 
\begin{equation}
\label{eq:ourmethod_layers}
    \gnet = \gtw \circ \sigma \circ \gtg^{(k_2)} \circ \cdots \circ \sigma \circ \gtg^{(1)} \circ \gbtg \circ \sigma \circ \gbtgb^{(k_1)} \circ \cdots \circ \sigma \circ \gbtgb^{(1)} \circ \pe.
\end{equation}
The following is a description of each layer: 
\begin{enumerate}[label=(\Roman*)]
    \item Similarly to \citet{lim2023graph, zhou2024permutation}, we use a positional encoding map $\pe: \Grads_b \to \Grads_b[f]$ that concatenates a \emph{layer identifier} each neurons in the intermediate layers and a \emph{neuron identifier} to each neuron in the first and last layers. We use sinusoidal PE \citep{tancik2020fourier}. 
    \item $\gbtgb: \Grads_b[f_\text{in}] \to \Grads_b[f_\text{out}]$ are then parametrized as the interactions-across-sets layers introduced in \citet{hartford2018deep} (batch dimension $\times$ neuron dimension).
    \item The pooling layer $\gbtg: \Grads_b[f_\mathrm{in}] \to \Grads[f_\mathrm{out}]$ is designed to be $S_b$-invariant and $G$-equivariant, and is implemented as $\gbtg(\decbatch^{(l)})_{j, :} = M_1 \sum_{i' = 1}^b \decbatch^{(l)}_{i', j, :} + M_2 \sum_{l' = 0}^L \sum_{i' = 1}^b \sum_{j' = 1}^{d_{l'}} \decbatch^{(l')}_{i', j', :}$, for learnable $M_1, M_2 \in \sR^{f_{\mathrm{out}} \times f_{\mathrm{in}}}$.
    \item $\gtg : \Grads[f_\text{in}] \to \Grads[f_\text{out}]$ are parameterized as equivariant DeepSets layers \citep{zaheer2017deep}.
    \item Finally, similarly to the generalized product layer in \citet{navon2023alignment}, $\gtw: \Grads[f_\mathrm{in}] \to \Params$ applies a pointwise MLP to the features associated with the neurons connected to each weight: $\mW^{(l)}_{i,j} = \mathrm{MLP}_{1}([\decbatch^{(l)}_{i,:},\decbatch^{(l+1)}_{j,:}])$, $\vb^{(l)}_{i} = \mathrm{MLP}_{2}(\decbatch^{(l)}_{i,:})$.
\end{enumerate}

For detailed descriptions and implementation details for all layers, a $G$-invariant head for invariant tasks, and other design choices, see Appendix~\ref{apx:ourmethod}.

\begin{figure}[t]
    \centering
    \vspace{5pt}
    \includegraphics[width=0.86\textwidth]{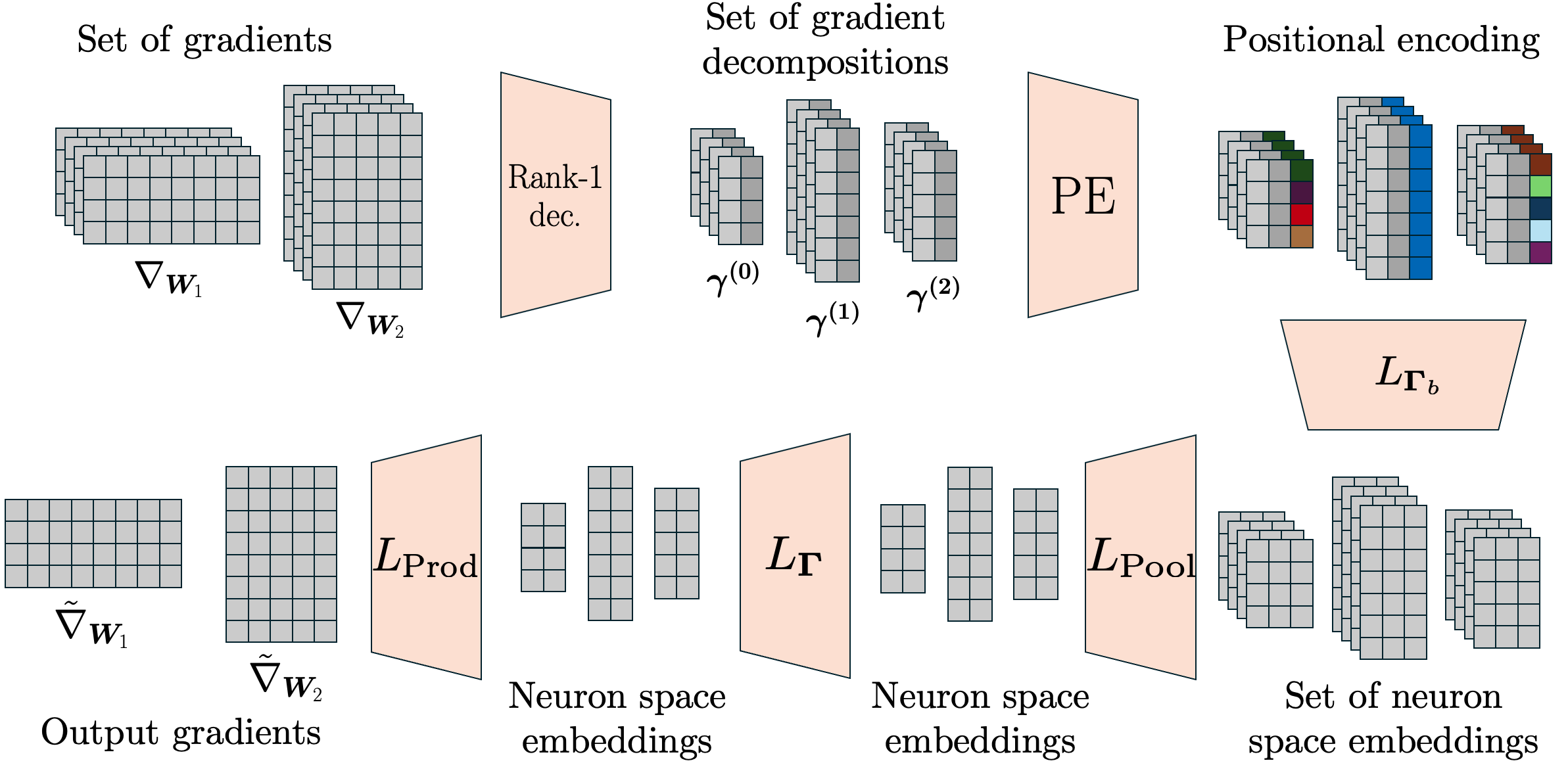}
    \vspace{5pt}
    \caption{\ourmethod pipeline: gradients are decomposed into rank-1 factors and positional encoding is applied. The input is then transformed by a stack of $\gbtgb$ equivariant interactions-across-sets layers. $\gbtg$ pools these representations into $\Grads[f]$, removing the batch dimension. Then a stack of $\gtg$ layers updates this representation, and $\gtw$ maps the result back to $\Params$.}\label{fig:pipeline}
    \vspace{-8pt}
\end{figure}

\textbf{Extension to transformers.} As formally detailed and motivated in Appendix~\ref{apx:transformers}, decomposed transformer gradients have an additional sequence dimension with a set symmetry structure, and the neuron spaces across the residual stream are tied together. Therefore, when applying \ourmethod, we treat the sequence dimension as the batch dimension (with optional sequence PE), stack all the neuron features across the residual stream together to a single neuron space, and extend our positional encoding to include the attention head indices. For an extended discussion see Appendix~\ref{apx:transformer_symmertries} and \ref{apx:trasformer_gradmetanet}.

\textbf{\secondmethod.} Similarly to \citet{lee2019set, romero2020attentive, kastenfast}, we can preserve equivariance by replacing summation in steps (II) and (VI) with attention mechanisms. Therefore, we introduce an attention-based variant of \ourmethod, termed \secondmethod, where $\gbtgb$ and $\gtg$ are implemented using attention across the neuron and batch dimensions. This variant demonstrates significant performance improvements on some tasks, consistent with findings in previous studies. For a detailed description of \secondmethod refer to Appendix \ref{apx:secondmethod}.

\begin{figure}[t]
    \centering
    \includegraphics[width=\textwidth]{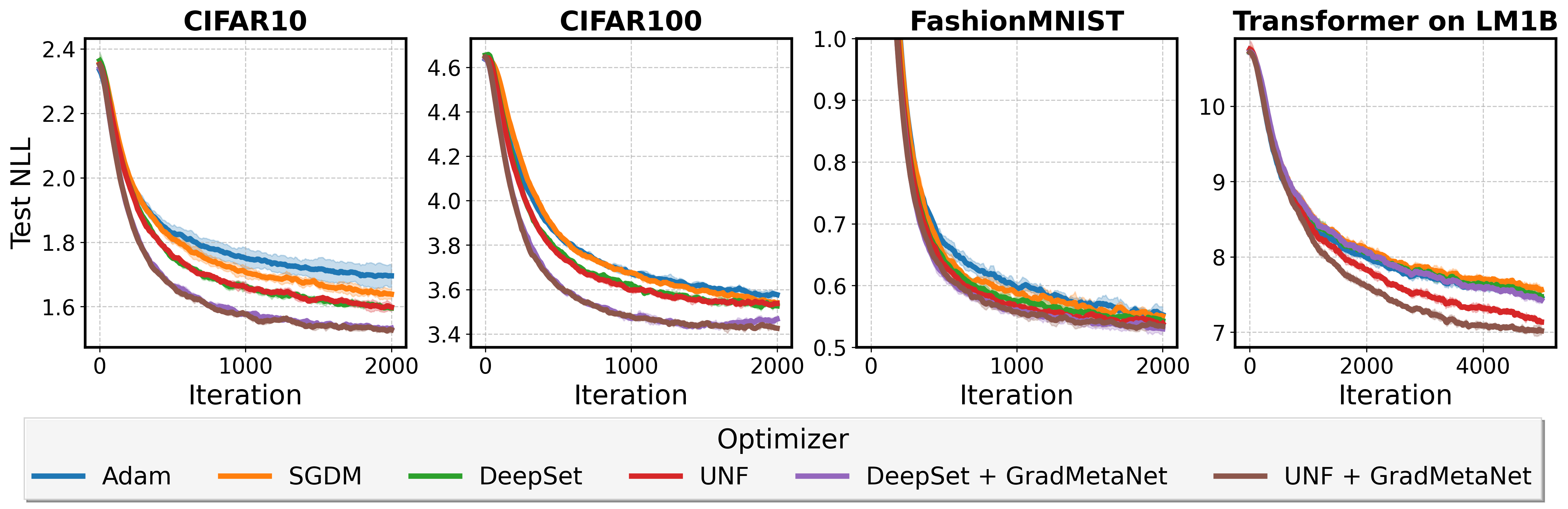}
    \vspace{-5pt}
    \caption{Test loss curves for MLP image classification tasks and a transformer language model trained on LM1B, using different optimizers and (learning rate tuned) Adam. Curves are smoothed and averaged over 5 random initializations, with shaded regions representing standard deviation.}\label{fig:lopt}
\end{figure}

\section{Theoretical Analysis}\label{sec:theory}
\looseness=-1
\textbf{Importance of processing sets of gradients.} Instead of processing the gradient of the average loss as in other gradient learning methods \citep{de2021editing, kofinas2024graph, zhou2024universal}, \ourmethod processes sets of (decomposed) gradients computed on individual datapoints. This approach is motivated by the fact that a set of gradients encodes strictly more information than the corresponding average gradient, enabling e.g. curvature estimations that cannot be computed using the average alone. This intuition is formalized in Appendix \ref{apx:collections}, leading to Proposition \ref{prop:information_loss} whose informal statement appears below.

\begin{proposition}\label{prop:information_loss_informal}
    \looseness=-1
    Let $\{\nabla_{(\vx, \vy)}\}_{(\vx, \vy) \in \batch}$ be gradients computed on on a set of datapoints $\batch \subseteq \data$. There exist functions--such as natural gradient approximations or pruning saliency scores--that cannot be reconstructed from the average gradient $\nabla_\batch$ alone.
\end{proposition}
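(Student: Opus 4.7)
The plan is to prove the proposition by exhibiting an explicit counterexample: two batches of per-datapoint gradients with identical averages but different values under a canonical gradient-based function such as an empirical natural gradient. Since the claim is that the target function cannot be written as any function of $\nabla_\batch$ alone, producing a single collision between batches is enough.

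First, I would fix the setup. For a batch $\batch$ of size $b$ with per-datapoint gradients $\{\nabla_i\}_{i=1}^b$, define the empirical Fisher $\hfisher_\batch := \tfrac{1}{b}\sum_i \nabla_i \nabla_i^\top$ and the damped natural gradient $\tilde{\nabla}_\batch := (\hfisher_\batch + \lambda I)^{-1} \nabla_\batch$ for small $\lambda>0$ (damping ensures invertibility and matches the form used by K-FAC/EKFAC). This is a canonical gradient-based function of the collection, and similarly the OBD/EigenDamage saliency $s_i \propto \theta_i^2 / (\hfisher^{-1})_{ii}$ is a function of $\hfisher$, not just of the mean gradient.

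Second, I would construct two batches $\batch, \batch'$ of size $2$ with the same $\nabla_\batch$ but different $\hfisher_\batch$. Pick any nonzero $\vv, \vw \in \Params$ with $\vv$ linearly independent of $\vw$. Take $\nabla_1 = \vv + \vw$, $\nabla_2 = -\vv + \vw$ and $\nabla_1' = \vw = \nabla_2'$. Both have average $\vw$, but $\hfisher_\batch = \vv\vv^\top + \vw\vw^\top$ while $\hfisher_{\batch'} = \vw\vw^\top$. Plugging into $\tilde{\nabla}$, the Sherman--Morrison identity gives a nonzero difference of order $\|\vv\|^2/(\|\vv\|^2+\lambda)$, so $\tilde{\nabla}_\batch \neq \tilde{\nabla}_{\batch'}$ even though $\nabla_\batch = \nabla_{\batch'}$. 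Any function that depends only on the average gradient would have to assign the two batches the same output, contradicting this difference; the identical argument works for the OBD saliency because $\hfisher_\batch \neq \hfisher_{\batch'}$.

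Third, I would realize this abstract counterexample inside the genuine decomposed gradient space $\Grads_b$ of a concrete MLP, so the proposition applies to actual gradients rather than arbitrary vectors in $\Params$. By the rank-$1$ decomposition in Equation~\ref{eq:gradient_decomposition}, each $\nabla_i$ is determined by the activation/backprop pairs $(\act^{(l)}_i, \tang^{(l)}_i)$. For a shallow MLP with a sufficiently wide hidden layer and a standard loss (e.g.\ MSE), a short interpolation argument shows that one can choose inputs $\vx_i$ to produce prescribed activation vectors and targets $\vy_i$ to produce prescribed backprop signals, which is enough to realize the four gradients $\nabla_1, \nabla_2, \nabla_1', \nabla_2'$ constructed above on actual data.

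The main obstacle is this last realizability step: the algebra in the second step is routine, but certifying that the prescribed $(\act^{(l)}, \tang^{(l)})$ pairs can be produced by genuine data through a fixed network requires care to avoid degenerate cases (e.g., dead ReLU neurons, coupled constraints between activations and gradient signals when the loss ties them together). I would handle this by working in a regime with enough hidden-layer width that activations and backprop signals can be controlled essentially independently, and by choosing the perturbation $\vv$ to lie in the column space of the induced activation-gradient outer products, making the construction explicit rather than abstract.
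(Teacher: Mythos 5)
Your overall strategy matches the paper's: reduce the claim to exhibiting two admissible gradient sets that share the same average while inducing different empirical Fisher matrices, so that any FIM-dependent quantity (damped natural gradient, OBD/OBS saliency) separates them while any function of $\nabla_\batch$ alone cannot. The paper packages this as a definition of ``non-trivial dependence on the FIM'' plus a short triangle-inequality argument (Propositions~\ref{prop:no_approx} and~\ref{prop:non_trivially_depends} in Appendix~\ref{apx:collections}), and realizes the counterexample with a single-layer \emph{linear} model, for which the per-datapoint gradient of the output is simply the input $\vx_i$; realizability is therefore immediate, and your third step (an interpolation argument through a wide MLP to hit prescribed activation/backprop pairs) is unnecessary machinery that you could replace by that linear-model reduction. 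On one point your construction is actually cleaner than the paper's: the $\pm\vv$ trick genuinely preserves the batch average, whereas the paper's construction (rescaling an orthonormal basis by $2$) changes the average gradient along with the Fisher.

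There is, however, one concrete error in your second step. For the damped natural gradient, taking $\vv$ merely linearly independent of $\vw$ is not enough: if $\vv \perp \vw$, then $(\vv\vv^\top + \vw\vw^\top + \lambda I)^{-1}\vw = (\vw\vw^\top + \lambda I)^{-1}\vw$, since $\vv\vv^\top$ annihilates $\vw$ and the two resolvents agree on its span. Sherman--Morrison gives a difference proportional to $\vv^\top(\vw\vw^\top+\lambda I)^{-1}\vw = \vv^\top\vw/(\lambda + \lVert\vw\rVert^2)$, not to $\lVert\vv\rVert^2/(\lVert\vv\rVert^2+\lambda)$, so you must additionally require $\vv^\top\vw \neq 0$ for the natural-gradient counterexample to go through. (The OBD saliency is unaffected, since $\mathrm{diag}(\vv\vv^\top)\neq 0$ for any nonzero $\vv$.) This is a one-line fix, but as written the ``any linearly independent pair'' claim is false.
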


\textbf{Expressive power.}
\looseness=-1
Restricting a model to be equivariant with respect to a specific group action can potentially reduce its expressive power \citep{morris2019weisfeiler, maron2019provably, maron2019universality}. However, we demonstrate that \ourmethod does not suffer from such limitations. Specifically, we prove a universal approximation property for equivariant functions defined on a compact domain that doesn't intersect a certain low-dimensional subset $\Eps \subset \Grads_b$. Formally:
\begin{theorem}\label{thm:universal}
  Let $\gK \subset \Grads_b$ be a compact domain such that $\gK = \cup_{g \in G_b} g \cdot \gK$ and $\gK \cap \Eps = \emptyset$. \ourmethod models are universal approximators (in the $\lVert\cdot\rVert_\infty$-sense) of continuous $G_b$-equivariant functions from $\gK$ to $\Params$.
\end{theorem}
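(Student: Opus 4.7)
The plan is to reduce universality to three modular ingredients and compose them: (i) a positional encoding that effectively breaks the within-gradient symmetries down to the minimal group that must actually be respected; (ii) universality of the equivariant ``encoder'' stack $\gbtgb$ followed by the pool $\gbtg$ and the $\gtg$ stack, yielding an orbit-separating $S_b$-invariant, $G$-equivariant embedding into some feature space $\Grads[f]$; and (iii) universality of the pointwise decoder $\gtw$ as a $G$-equivariant map from $\Grads[f]$ to $\Params$, which follows the generalized-product-layer argument already used in prior weight-space universality results (Navon et al. 2023). Since $\gnet$ is $S_b$-invariant and $G$-equivariant by construction, any continuous $G_b$-equivariant target $F : \gK \to \Params$ is necessarily $S_b$-invariant and $G$-equivariant, and the goal reduces to showing that the equivariant function class realized by $\gnet$ is dense.

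First, I would use the positional encoding $\pe$ to augment each neuron's feature with a layer identifier (plus a unique neuron identifier on the input and output layers). This turns the effective intra-gradient symmetry into the product of symmetric groups acting on the \emph{hidden} neuron spaces, and lets me treat each $\dec_i$ as a single ``labeled bag'' of neuron features. After this reduction, a composition of $\gbtgb$ layers falls under the interactions-across-sets framework of Hartford et al.\ (2018), whose permutation-equivariant layers are known to be universal for continuous equivariant maps on sets-of-sets once one has enough channels and depth. I would invoke this together with DeepSets-style universality (Zaheer et al., Segol \& Lipman) to show that the $\gbtgb$ stack approximates an arbitrary continuous $(S_b \times G)$-equivariant feature map on $\gK$.

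Next I would handle the pool and the post-pool stack. The pooling layer $\gbtg$ is the canonical sum aggregation across the batch dimension, which by the classical DeepSets argument is a universal $S_b$-invariant map once the preceding features are rich enough to form an injective orbit encoding; this is precisely the step where the set $\Eps$ enters, since injectivity of the sum-of-features embedding requires that, after the encoder, no two neurons in the same hidden layer produce identical aggregated statistics across the batch. Restricting to $\gK \cap \Eps = \emptyset$ ensures the encoder has enough signal to separate orbits. The subsequent $\gtg$ DeepSets layers then preserve $G$-equivariance while giving an arbitrarily expressive refinement of the pooled features in $\Grads[f]$, again by DeepSets universality.

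Finally, $\gtw$ forms each weight and bias entry via a pointwise MLP of the two connected neurons' features, which is exactly the template shown in prior work (e.g., the generalized product layer) to be universal for continuous $G$-equivariant maps $\Grads[f] \to \Params$ once the incoming features separate orbits. Composing the universal encoder with the universal decoder gives the desired $\lVert \cdot \rVert_\infty$ approximation on $\gK$. The main obstacle I expect is the orbit-separation step: carefully characterizing $\Eps$ as the set of inputs on which the injective-sum construction degenerates, and verifying that after PE and enough $\gbtgb$/$\gtg$ layers one can indeed produce features on which the DeepSets injectivity argument applies uniformly on the compact domain $\gK$. Everything else is assembly of standard equivariant universal approximation theorems and continuity of composition.
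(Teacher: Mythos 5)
Your proposal follows essentially the same route as the paper's proof: positional encoding to reduce the symmetry to products of symmetric groups, universality of the Hartford-style $\gbtgb$ stack plus sum-pooling $\gbtg$ for the $S_b$-invariant, $G$-equivariant encoder, DeepSets-style $\gtg$ layers after pooling, a pointwise two-neuron decoder $\gtw$, and the identification of $\Eps$ as exactly the degenerate set where two hidden neurons share identical batch-summed features so the sum-based orbit encoding collapses. The only notable differences are that the paper justifies the decoder's universality via the Set2Graph theorem of Serviansky et al.\ (rather than the generalized product layer of Navon et al., which is not itself a universality result), and it makes your ``orbit-separation'' obstacle precise by proving an explicit factorization $\funca = \funcc \circ \funcb$ through the quotient $\Grads_b/S_b$ (with a continuity-of-the-inverse lemma and extension lemmas that upgrade $G_b$-equivariance after $\pe$ to full $S_b \times S_{d_0+\cdots+d_L}$-equivariance so the off-the-shelf universality theorems apply) --- details you flag as the main obstacle but do not carry out.
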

$\Eps$ is the set of neuron features that have identical activations and backpropagated signals for at least two neurons (see Appendix \ref{apx:universality} for a precise definition). Similarly to \citet{maron2020learning, finkelshtein2025equivariance}, the inclusion of $\Eps$ in Theorem \ref{thm:universal} is essential (see Appendix~\ref{apx:importance_of_epsilon}). However, $\Eps$ is a union of subspaces of co-dimension $2$, has Lebesgue measure $0$, and the conditions for membership in $\Eps$ are highly unlikely in practice, making this assumption mild.

\begin{corollary}\label{cor:ourmethod_can_compute_fisher_informal}
    \looseness=-1
    Let $\batch$ and $\{\nabla_{(\vx, \vy)}\}_{(\vx, \vy) \in \batch}$ be as in Proposition~\ref{prop:information_loss_informal}. Several natural functions--such as natural gradient approximations and pruning saliency scores--can be effectively approximated by \ourmethod, which has access to $\{\nabla_{(\vx, \vy)}\}_{(\vx, \vy) \in \batch}$, but cannot be approximated by methods that rely solely on $\nabla_\batch$.
\end{corollary}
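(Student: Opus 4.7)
The plan is to split the corollary into its two complementary halves and dispatch each using tools already established. The negative half---that no method restricted to $\nabla_\batch$ can approximate the listed functions---is essentially a restatement of Proposition~\ref{prop:information_loss_informal}, which already enumerates natural gradient approximations and pruning saliency scores as its running examples. So my first step is simply to observe that the functions named in the corollary are precisely those that proposition handles, and quote its conclusion to rule out approximation from the averaged gradient alone.

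For the positive half I would reduce the claim to Theorem~\ref{thm:universal}. Concretely, for each target function $F$ (e.g.\ a K-FAC-style natural gradient preconditioner, or an OBS/Fisher-based saliency score), I would (i) write $F$ explicitly in terms of the decomposed-gradient components $\dec^{(l)}_i = (\act^{(l)}_i, \tang^{(l)}_i)$, (ii) verify $G_b$-equivariance under the action in \eqref{eq:g_action}--\eqref{eq:bg_action}, and (iii) verify continuity on a suitable domain. For (i)--(ii), the Kronecker-factor sums $\sum_i \act_i \act_i^\top$ and $\sum_i \tang_i \tang_i^\top$ underlying K-FAC are manifestly $S_b$-invariant and transform covariantly under per-layer neuron permutations, and saliency scores built from these factors inherit the same covariance. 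For (iii), each such $F$ is polynomial in the decomposed entries, possibly composed with a matrix inverse, hence continuous away from a lower-dimensional singular locus.

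To close the argument I would construct a $G_b$-invariant compact domain $\gK$ containing the configurations of interest while avoiding both the exceptional set $\Eps$ and any inversion singularity. $G_b$-invariance can always be enforced by replacing an arbitrary compact $\gK_0$ with $\bigcup_{g \in G_b} g \cdot \gK_0$, still compact because $G_b$ is finite. Since $\Eps$ is a finite union of subspaces of codimension at least two and the singular locus of the inverse is a lower-dimensional algebraic set, a generic shrinking of $\gK_0$ suffices to avoid both. With (i)--(iii) and such a $\gK$ in hand, Theorem~\ref{thm:universal} directly produces a \ourmethod model approximating $F$ in $\lVert\cdot\rVert_\infty$-norm to any desired precision.

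The main obstacle is step (i)--(ii): writing down each example as an explicit expression in the decomposed-gradient components and checking equivariance carefully. In particular, for K-FAC-like preconditioners one must track how per-layer permutations act simultaneously on rows and columns of the Kronecker factors and commute with the inverse, while for saliency scores like OBS one must additionally track the parameter-space action on the output side. Once these expressions and their equivariance are cleanly established, combining Proposition~\ref{prop:information_loss_informal} with Theorem~\ref{thm:universal} is routine.
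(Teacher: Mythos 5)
Your overall strategy matches the paper's: the negative half is quoted directly from Proposition~\ref{prop:information_loss_informal} (formalized as Proposition~\ref{prop:information_loss} via Propositions~\ref{prop:no_approx} and~\ref{prop:non_trivially_depends}), and the positive half is obtained by checking that the target maps are continuous and $G_b$-equivariant on a compact, $G_b$-invariant domain avoiding $\Eps$ and then invoking the universality theorem. Your remarks on symmetrizing the domain and avoiding the inversion singularity are fine and essentially what the paper assumes in the formal statement.

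There is one concrete gap in the positive half: the pruning saliency maps $\funca_{\mathrm{OBD}}(\params,\gG)=\params^2\odot\mathrm{diag}(\fisher_\gG)$ and $\funca_{\mathrm{OBS}}(\params,\gG)=\params^2\oslash\mathrm{diag}((\fisher_\gG+\epsilon\mI)^{-1})$ are \emph{not} functions of the decomposed gradients alone --- they take $\params$ as a separate argument --- so Theorem~\ref{thm:universal}, which concerns maps $\gK\subset\Grads_b\to\Params$, does not apply to them as stated. You flag "tracking the parameter-space action on the output side" as an equivariance bookkeeping issue, but the real obstruction is input availability: the network must be given $\params$. The paper resolves this by augmenting the gradient-to-weight layer $\gtw$ so that the pointwise MLP also receives $\mW^{(l)}_{i,j}$ (resp.\ $\vb^{(l)}_i$) alongside the neuron features, approximates $\funca_1(\decbatch)=\mathrm{diag}(\fisher)$ and $\funca_2(\decbatch)=1/\mathrm{diag}((\fisher+\epsilon\mI)^{-1})$ with the unmodified architecture, and then forms $\params^2\odot\funca_i$ inside the augmented $\gtw$. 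A similar remark applies to the natural gradient map, which needs the query gradient $\nabla$ supplied as an extra feature channel (the paper works in $\Grads_b[3]$ for this reason). With these two adjustments your argument goes through and coincides with the paper's proof.
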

For formal statements and proofs of Proposition~\ref{prop:information_loss_informal}, Theorem~\ref{thm:universal}, and Corollary~\ref{cor:ourmethod_can_compute_fisher_informal}, see Appendix \ref{apx:theory}. In summary, \ourmethod incorporates meaningful inductive biases for processing sets of gradients while retaining high expressive power, enabling it to represent all continuous functions on sets of gradients under mild assumptions.

\section{Experiments}\label{sec:experiments}

\looseness=-1
In this section, we evaluate \ourmethod on a variety of learning tasks on gradients. We empirically demonstrate the importance of each of our design principles by ablating components and comparing to other baselines. We then showcase \ourmethod's effectiveness for three applications: curvature information estimation, learned optimization, and INR editing. We include full experimental details in Appendix~\ref{apx:experiments} and additional experimental results in Appendix~\ref{apx:additional_experiments}.

\subsection{Curvature Information Estimation}\label{subsec:hess}
\looseness=-1
To demonstrate \ourmethod's ability to learn to approximate loss landscape curvature, we train it to predict the diagonal of the Fisher Information Matrix (FIM) from small samples of gradients. The diagonal of the FIM encodes the curvature along individual parameter directions, capturing the network's sensitivity to a small change in each parameter.

\begin{wrapfigure}[22]{L}{0.5\textwidth}
    \centering
    \vspace{-4pt}
    \includegraphics[width=0.5\textwidth]{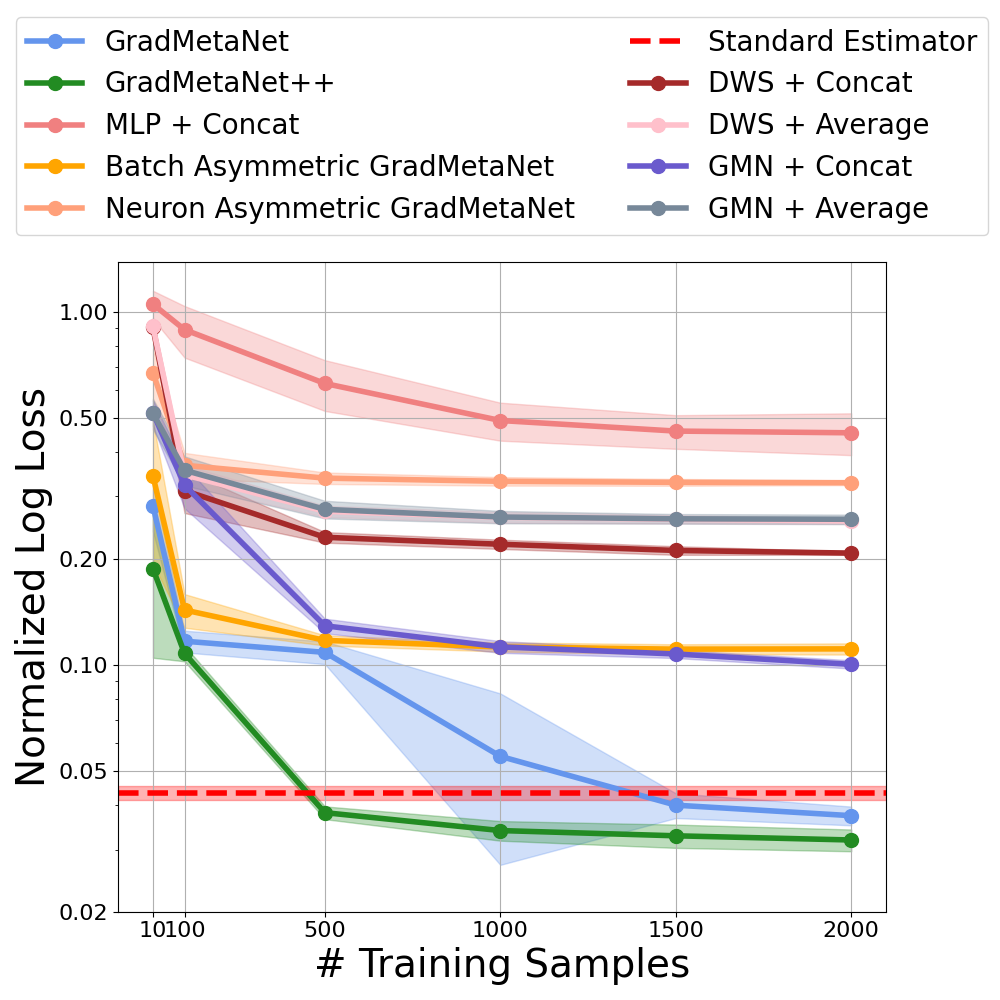}
    \caption{Comparison of gradient-learning models trained to predict the FIM diagonal from a sample of $128$ gradients. Results are averaged over 5 seeds; shading represents standard deviation.}\label{fig:sample_complexity}
\end{wrapfigure}
\textbf{Data.}
We first create a set of randomly initialized MLPs with 1-dimensional input and output. We then generate the targets by computing the FIM diagonal for each model over a sample of 1024 inputs in $[-1, 1]$. The input to each baseline is a smaller gradient sample computed over 128 points sampled from $[-1, 1]$. 

\textbf{Baselines.}
\looseness=-1
We compare \ourmethod and \secondmethod against a range of baselines, including architectures that (1) rely solely on the average gradient: DWS+Average \citep{navon2023equivariant}, GMN+Average \citep{kofinas2024graph}, (2) use full gradients instead of the rank-1 decomposition: DWS+Concat, GMN+Concat, or (3) (partially) disregard symmetries: Batch Asymmetric \ourmethod, Neuron Asymmetric \ourmethod, and MLP+Concat. See Appendix~\ref{apx:curvature_exp} for full descriptions of the baselines.

\looseness=-1
\textbf{Results and discussion.} To measure the sample efficiency of each baseline, we repeat the experiment with a varying number of training examples (each training sample is still a set of 128 gradients, but we vary the \emph{number of such sets} the models see during training). As seen in Figure \ref{fig:sample_complexity}, \ourmethod and \secondmethod perform significantly better than baselines across varying training set sizes. We also compare to a non-learnable approximation that directly estimates the diagonal of the FIM using the 128 input gradients (rather than the full set of 1024 datapoints). Only \ourmethod and \secondmethod outperform this baseline, achieving an improvement of 13.7\% and 26.3\% respectively. These results demonstrate \ourmethod's potential to learn more accurate approximations of gradient-based algorithms. In Appendix~\ref{apx:additional_experiments} we discuss a scaled-up version of this experiment for models with over 1M parameters.

\subsection{Learned Optimizers}\label{subsec:lopt}
\begin{wrapfigure}[18]{R}{0.42\textwidth}
    \vspace{-10pt}
    \centering
    \includegraphics[width=0.4\textwidth]{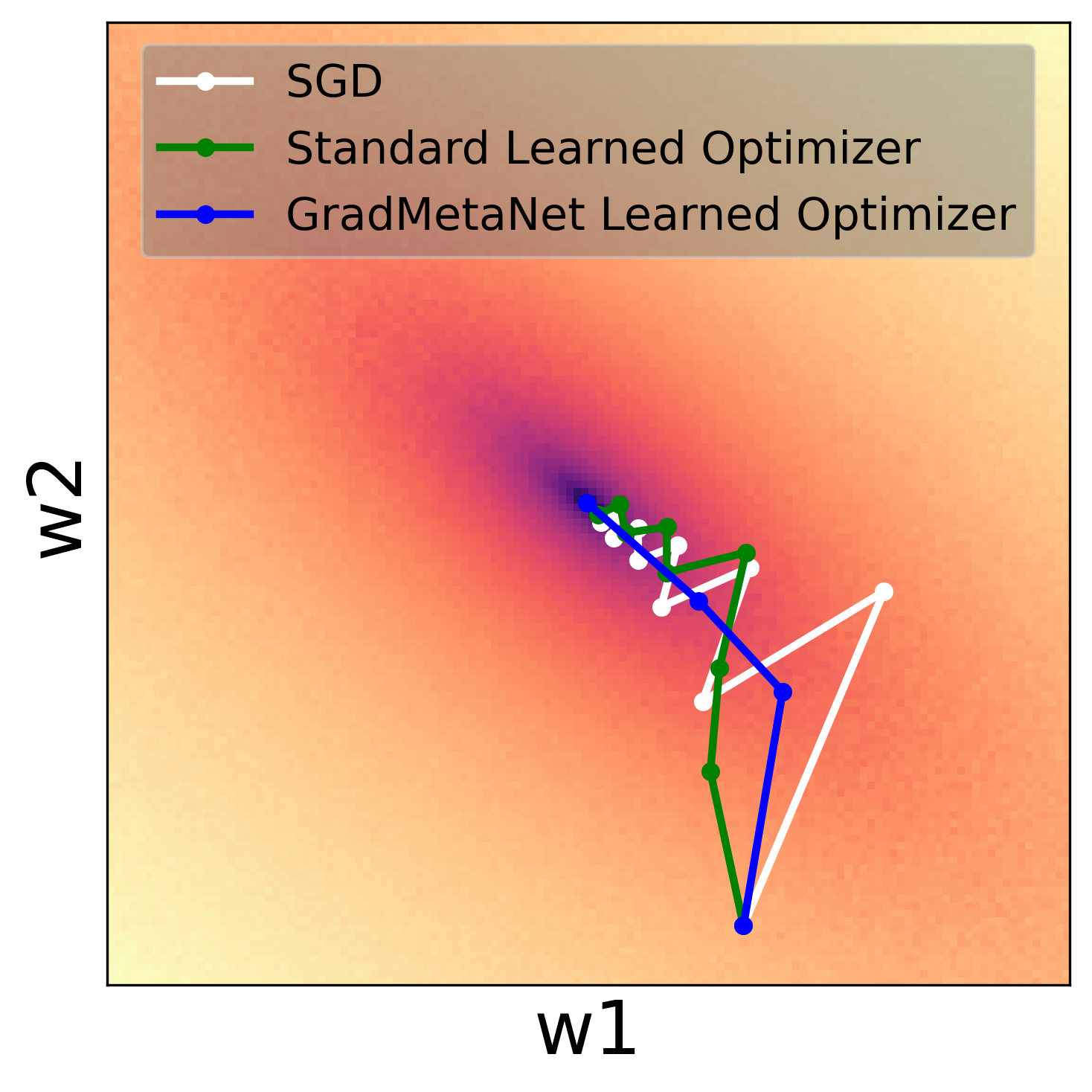}
    \vspace{-2pt}
    \caption{\ourmethod-based learned optimizers can account for loss-landscape curvature, avoiding redundant steps.}
    \label{fig:2d}
\end{wrapfigure}
\looseness=-1
Optimizing deep neural networks is a fundamental challenge in deep learning. While classical optimizers \citep{nesterov1983method, duchi2011adaptive, tieleman2012lecture, kingma2014adam} have become standard tools, their design largely relies on intuition and empirical validation. A promising alternative is to \emph{learn} the optimization algorithm itself through meta-training \citep{bengio1990learning, andrychowicz2016learning, metz2019understanding}. Learned optimizers can potentially discover more effective update rules by adapting to the statistical patterns in loss landscapes. Most optimizers (learnable and hand-crafted) only process the averaged batch gradient, and therefore don't have access to local curvature information. In this experiment, we integrate \ourmethod into learned optimizer architectures, providing it with the raw information required for computing the curvature in the form of sets of gradients on individual datapoints across batches. 

\looseness=-1
\textbf{Setup.} Following \citet{harrison2022closer, zhou2024universal}, we parametrize our learned optimizer rules as
\begin{equation}\label{eq:standard_lopt}
    \params_{t + 1} \leftarrow \params_t + \alpha \left(\vv_t^{\mu} + \beta \mF_\metaparams(\params_t, \nabla_t, \{\vv_t^{\mu_i}\}_{i = 1}^k, t)\right), 
\end{equation}
where $\nabla_t$ is the current gradient, and $\{\vv_t^{\mu_i}\}_{i = 1}^k$ are momentum terms with different decay rates. The standard architecture for $\mF_\metaparams$ is DeepSets (DS) \citep{zaheer2017deep}, which applies a per-parameter MLP to the input features. More recently, researchers have explored using equivariant weight-space architectures like Universal Neural Functionals (UNF) \citep{zhou2024universal} to implement $\mF_\metaparams$. For \ourmethod-based learned optimizers we parametrize $\mF_\metaparams$ as
\begin{equation}
    \mF_\metaparams\left(\params_t, \nabla_t, \{\vv_t^{\mu_i}\}_{i = 1}^k, t, \mF_{\boldsymbol{\psi}}^{\mathrm{\ourmethod}}\left(\decbatch_t, \{\tv_t^{\mu_i}\}_{i = 1}^k\right)\right),
\end{equation}

\begin{wraptable}[26]{l}{0.5\textwidth}
    \vspace{-6pt}
    \caption{\looseness=-1 Multiplicative improvement in the number of steps to reach Adam's best test loss. For each task, we run Adam, record its best test loss $L$ and the number of steps $N$ required to reach it. We then run \colorbox{rowblue}{standard} and \colorbox{rowgreen}{\ourmethod-based} learned optimizers, measure their steps to reach $L$, and report the improvement relative to $N$. Results are averaged over 5 runs. Full Results in Table~\ref{tab:all_lopt_results_extended}.}\label{tab:all_lopt_results}
    \vspace{5pt}
    \resizebox{0.5\textwidth}{!}{
        \begin{tabular}{l|lc}
            \toprule
            \raisebox{\height}{\textbf{Dataset}} & \raisebox{\height}{\textbf{Optimizer}} & \shortstack{
            \textbf{Avg. step reduction} \\ \textbf{factor vs. Adam ($\uparrow$)}} \\
            \midrule
            \multirow{5}{*}{F-MNIST} 
            & \cellcolor{rowblue} SGDM & \cellcolor{rowblue} $1.13$x \\
            & \cellcolor{rowblue} DS & \cellcolor{rowblue} $1.27$x \\
            & \cellcolor{rowblue} UNF & \cellcolor{rowblue} $1.16$x \\
            \cmidrule{2-3}
            & \cellcolor{rowgreen} DS + \ourmethod & \cellcolor{rowgreen} $1.44$x  \\
            & \cellcolor{rowgreen} UNF + \ourmethod & \cellcolor{rowgreen} $\mathbf{1.51}$x \\
            \midrule
            \multirow{5}{*}{CIFAR10}      
            & \cellcolor{rowblue} SGDM & \cellcolor{rowblue} $1.41$x \\
            & \cellcolor{rowblue} DS & \cellcolor{rowblue} $2.32$x \\
            & \cellcolor{rowblue} UNF & \cellcolor{rowblue} $2.64$x \\
            \cmidrule{2-3}
            & \cellcolor{rowgreen} DS + \ourmethod & \cellcolor{rowgreen} $\mathbf{4.63}$x \\
            & \cellcolor{rowgreen} UNF + \ourmethod & \cellcolor{rowgreen} $4.26$x \\
            \midrule
            \multirow{5}{*}{CIFAR100}    
            & \cellcolor{rowblue} SGDM & \cellcolor{rowblue} $1.06$x \\
            & \cellcolor{rowblue} DS & \cellcolor{rowblue} $1.79$x \\
            & \cellcolor{rowblue} UNF & \cellcolor{rowblue} $1.58$x \\
            \cmidrule{2-3}
            & \cellcolor{rowgreen} DS + \ourmethod & \cellcolor{rowgreen} $\mathbf{3.15}$x  \\
            & \cellcolor{rowgreen} UNF + \ourmethod & \cellcolor{rowgreen} $2.85$x \\
            \midrule
            \multirow{5}{*}{LM1B} 
            & \cellcolor{rowblue} SGDM & \cellcolor{rowblue} $1.01$x \\
            & \cellcolor{rowblue} DS & \cellcolor{rowblue} $0.88$x \\
            & \cellcolor{rowblue} UNF & \cellcolor{rowblue} $1.48$x \\
            \cmidrule{2-3}
            & \cellcolor{rowgreen} DS + \ourmethod & \cellcolor{rowgreen} $1.09$x \\
            & \cellcolor{rowgreen} UNF + \ourmethod & \cellcolor{rowgreen} $\mathbf{1.82}$x \\
            \bottomrule
        \end{tabular}
    }
\end{wraptable}
where $\decbatch_t \in \Grads_b$ is the set of decomposed gradients across the current batch, and $\{\tv_t^{\mu_i}\}_i$ are exponential moving averages of past $\decbatch_t$s with different decay rates. The learnable meta-parameters $\alpha$, $\beta$, $\mu$, $\metaparams$, and $\boldsymbol{\psi}$ are optimized during meta-training (using PES~\citep{vicol2021unbiased}) to minimize the training loss after $T$ steps. We evaluate five types of architectures: DeepSets, UNF, DeepSets + \ourmethod, UNF + \ourmethod, and learnable SGD with momentum (taking $\beta = 0$ in Equation \ref{eq:standard_lopt}).

\textbf{Tasks.}
\looseness=-1
We use three types of optimization tasks: (1) optimizing a 2-parameter linear regression, constructed to have non-diagonal curvature, (2) optimizing MLPs for classifying CIFAR10, CIFAR100 \citep{krizhevsky2009learning}, and FashionMNIST \citep{xiao2017fashion} images, and (3) optimizing transformer-based language models on LM1B \citep{chelba2013one}. For a detailed description of each task and the meta-training setup, see Appendix \ref{apx:lopt}.

\looseness=-1
\textbf{Results and discussion.} As Figure \ref{fig:2d} demonstrates, for the 2-parameter regressions task, \ourmethod-based learned optimizers can use the curvature of the loss landscape to avoid redundant steps. Consequently, as seen in Table~\ref{tab:all_lopt_results} and Figure~\ref{fig:lopt}, \ourmethod-based learned optimizers consistently outperform baselines across optimization tasks for both MLP and transformers, demonstrating the value of processing sets of gradients. In Appendix~\ref{apx:additional_experiments} we include additional experiments showing that \ourmethod-based learned optimizers can generalize across tasks and architectures and show promise in scaling to larger-scale optimization.

\subsection{INR Editing}
Implicit Neural Representations (INRs) \citep{park2019deepsdf, sitzmann2020implicit} use neural networks to encode images as functions. In this experiment, we explore the task of INR editing, where the goal is to adapt the weights of an INR to modify the image it represents. This involves directly adjusting the INR's parameters by learning a metanetwork predicting a parameter delta $\Delta \params$, and updating the model with $\params' = \params + \Delta \params$. 

\looseness=-1
\textbf{Data.}
Following previous works \citep{zhou2024permutation, zhou2024neural, kalogeropoulos2024scale}, we use two standard benchmarks: figure dilation for MNIST INRs and contrast enhancement for CIFAR-10 INRs. For each INR, we compute the parameter gradients with respect to the MSE loss between the INR output and the target edited image, evaluated over randomly sampled points. The input data consists of both the parameters of the INR and the corresponding gradients.

\looseness=-1
\textbf{Baselines.}
We evaluate \ourmethod and \secondmethod in combination with several weight-space architectures. \ourmethod and \secondmethod process the gradients to produce outputs in $\Params$, which are then used as additional weight features for the base weight-space network. This hybrid approach is compared against two baselines: (1) the base weight-space network, and (2) the base weight-space network augmented with probing features. Following \citet{kofinas2024graph}, the probing features are activations evaluated at randomly sampled grid points, incorporated as additional bias features.

\begin{table}[h]
    \centering
    \small
    \setlength{\tabcolsep}{3pt}
    \caption{Results for the INR editing tasks on MNIST (dilation) and CIFAR10 (contrast). We report the MSE ($\downarrow$) in $10^{-2}$ for MNIST and $10^{-3}$ for CIFAR10, averaged over 3 seeds.}
    \label{tab:editing}
    \vskip 0.11in
    \resizebox{\textwidth}{!}{
        \begin{tabular}{lcccccc}
            \toprule
            & \multicolumn{3}{c}{MNIST} & \multicolumn{3}{c}{CIFAR10} \\
            \cmidrule(lr){2-4} \cmidrule(lr){5-7}
            & DWS \citep{navon2023equivariant}& GMN \citep{kofinas2024graph} & ScaleGMN \citep{kalogeropoulos2024scale} & DWS \citep{navon2023equivariant} & GMN \citep{kofinas2024graph} & ScaleGMN \citep{kalogeropoulos2024scale} \\
            \midrule
            WS & $2.29 \pm 0.01$ & $1.96 \pm 0.02$ & $1.99 \pm 0.02$ & $5.57 \pm 0.02$ & $5.09 \pm 0.05$ & $5.23 \pm 0.13$ \\
            WS + Probing                     & $2.36 \pm 0.06$ & $1.85 \pm 0.00$ & $1.92 \pm 0.04$ & $4.22 \pm 0.08$ & $3.81 \pm 0.02$ & $3.87 \pm 0.05$ \\
            WS + \ourmethod & $2.28 \pm 0.02$ & $\mathbf{1.70 \pm 0.01}$ & $1.70 \pm 0.00$  & $4.10\pm 0.10$ & $3.65 \pm 0.01$ & $3.69 \pm 0.09$ \\
            WS + \secondmethod                     & $\mathbf{1.95 \pm 0.01}$ & $1.71 \pm 0.00$ & $\mathbf{1.60 \pm 0.00}$ & $\mathbf{3.86 \pm 0.02}$ & $\mathbf{2.99 \pm 0.03}$ & $\mathbf{3.00 \pm 0.00}$ \\
            \bottomrule
        \end{tabular}
    }
\end{table}

\textbf{Results and Discussion.}
As seen in Table~\ref{tab:editing}, both \ourmethod and \secondmethod consistently improve the performance of weight-space models, achieve greater performance gains than probing, and improve the current state-of-the-art for weight-space model editing.

\section{Conclusion and Limitations}\label{sec:conclustion_and_limitations}
\looseness=-1
\textbf{Conclusion.} We introduce \ourmethod, an equivariant architecture for processing sets of decomposed gradients, supported by both theoretical and experimental results. Theoretically, we demonstrate that under mild assumptions \ourmethod can approximate any function processing sets of gradients, and that average gradient methods are unable to approximate several natural gradient-based functions. Experimentally, we demonstrate \ourmethod's ability to predict local curvature, enhance learned optimizers, and achieve state-of-the-art performance in model editing. 

\looseness=-1
\textbf{Limitations and future work.} The current implementation of \ourmethod is limited to MLPs and transformers; in future work, we hope to extend \ourmethod to support other neural architectures. Additionally, \secondmethod's use of attention mechanisms limits its usage in large-scale settings. Finally, scaling \ourmethod, \secondmethod, or their variants to larger models, such as state-of-the-art LLMs and generative models, is an interesting direction for future work. 

\section*{Acknowledgments}
YG is supported by the UKRI Engineering and Physical Sciences Research Council (EPSRC) CDT in Autonomous and Intelligent Machines and Systems (grant reference EP/S024050/1). MB is supported by EPSRC Turing AI World-Leading Research Fellowship No. EP/X040062/1 and EPSRC AI Hub on Mathematical Foundations of Intelligence: An “Erlangen Programme” for AI No. EP/Y028872/1. HM is a Robert J. Shillman Fellow and is supported by the Israel Science Foundation through a personal grant (ISF 264/23) and an equipment grant (ISF 532/23).

\bibliographystyle{plainnat}
\bibliography{references}

\appendix

\section{Extended Background}\label{apx:background}
\subsection{Gradient Decomposition for MLPs}\label{apx:decomposition}
Using the notation introduced in Section \ref{sec:preliminaries}, for $l=1, \dots L$, $n = 1, \dots, d_{l-1}$, $m = 1, \dots, d_l$ we apply the chain rule to get 
\begin{equation}
    \begin{split}
        \left(\nabla_{\mW_l} \loss_{(\vx, \vy)}(\params)\right)_{n, m} = \sum_{k = 1}^{d_l} \left(\frac{\partial \loss_{(\vx, \vy)}(\params)}{\partial (\preact^{(l)})_k}\right) \cdot \left( \frac{\partial (\preact^{(l)})_k}{\partial (\mW_l)_{n, m}} \right) &= \sum_{k = 1}^{d_l} (\tang^{(l)})_k (\delta_{n, k} (\act^{(l - 1)})_n) \\ 
        &= (\tang^{(l)})_m (\act^{(l - 1)})_n,
    \end{split}
\end{equation}
where $\delta_{n,k}$ is the Dirac delta defined by $\delta_{n,k} = \begin{cases}1 & n=k \\ 0 & n \neq k.\end{cases}$. Similarly
\begin{equation}
    \left(\nabla_{\vb_l} \loss_{(\vx, \vy)}(\params) \right)_m = \sum_{k = 1}^{d_l} \left(\frac{\partial \loss_{(\vx, \vy)}(\params)}{\partial (\preact^{(l)})_k}\right) \cdot \left( \frac{\partial (\preact^{(l)})_k}{\partial (\vb_l)_m} \right) = \sum_{k = 1}^{d_l} (\tang^{(l)})_k \delta_{m, k} = (\tang^{(l)})_m.
\end{equation}
Therefore,
\begin{equation}
    \nabla_{\mW_l} \loss_{(\vx, \vy)}(\params) = \tang^{(l)} (\act^{(l - 1)})^\top, \quad \nabla_{\vb_l} \loss_{(\vx, \vy)}(\params) = \tang^{(l)}
\end{equation}
as discussed in Section \ref{sec:preliminaries}. 

\subsection{Extracting Activations and Pre-Activation Gradient Signals}\label{apx:extract}
As mentioned in Section~\ref{sec:preliminaries}, the activations ($\act^{(l)}$) and pre-activation gradient signals ($\tang^{(l)}$) used for the gradient decomposition are naturally computed during backpropagation and don't need to be recomputed. The following is a PyTorch code example for extracting these components without additional cost using forward/backward hooks:
\vspace{10pt}
\begin{lstlisting}[style=mypython]
import torch
import torch.nn as nn
import torch.nn.functional as F

class MLP(nn.Module):
    def __init__(self):
        super(MLP, self).__init__()
        self.fc1 = nn.Linear(8, 32)
        self.fc2 = nn.Linear(32, 16)
        self.fc3 = nn.Linear(16, 3)

    def forward(self, x):
        x = F.relu(self.fc1(x))
        x = F.relu(self.fc2(x))
        x = self.fc3(x)
        return x

activations = {}
tangents = {}

def forward_hook(module, inp, out):
    activations[module] = inp[0].detach()

def backward_hook(module, grad_inp, grad_out):
    tangents[module] = grad_out[0].detach()

model = MLP()

# Set hooks
model.fc1.register_forward_hook(forward_hook)
model.fc1.register_full_backward_hook(backward_hook)
model.fc2.register_forward_hook(forward_hook)
model.fc2.register_full_backward_hook(backward_hook)
model.fc3.register_forward_hook(forward_hook)
model.fc3.register_full_backward_hook(backward_hook)

# Backpropagate loss
x = torch.randn(4, 8) # (batch, input) 
target = torch.randn(4, 3) # (batch, input) 
output = model(x)
loss = F.mse_loss(output, target)
loss.backward()

print(activations)
print(tangents)
\end{lstlisting}

\subsection{The Fisher Information Matrix and Its Uses}\label{apx:fisher}
Many gradient-based algorithms \citep{martens2015optimizing, grosse2016kronecker, wang2019eigendamage, van2023llm, daxberger2021laplace} use the Fisher Information Matrix (FIM) to approximate the curvature of the loss landscape. Below, we define the FIM and present two common gradient-based algorithms that utilize it. The FIM has numerous other applications, this section serves only as a basic introduction. For a more comprehensive overview, we refer readers to \cite{martens2020new, pascanu2013revisiting}.

\paragraph{The Fisher information matrix.} Consider a supervised learning problem of predicting outputs $\vy \in \gY$ from inputs $\vx \in \gX$. We assume a probabilistic model of the form $p_\params(\vy | \vx) = p(\vy | \mlp_\params(\vx))$, where $p$ is called the \emph{likelihood}. For classification tasks we may assume a softmax likelihood, $p(\vy = k | \mlp_\params(\vx)) = \softmax(\mlp_\params(\vx))_k$, and for regression, we usually take a Gaussian likelihood $p(\vy | \mlp_\params(\vx)) = \gN(\vy ; \mlp_\params (\vx), \mI)$. $p_\params(\vy | \vx)$ is called the \emph{predictive distribution}. The FIM is defined by
\begin{equation}
    \fisher = \E_{\vx \sim \data, \vy \sim p_\params(\vy | \vx)} \left(\nabla_\params \log(p_\params(\vy | \vx)) \nabla_\params \log(p_\params(\vy | \vx))^\top\right) \in \Params \otimes \Params
\end{equation}
The FIM is a second order approximation of the change in the model's predictive distribution with respect to the parameters
\begin{equation}
    \E_{\vx \sim \data} \left(\KL\left(p_\params(\vy | \vx) \parallel p_{\params + \boldsymbol{\delta}}(\vy | \vx)\right)\right) = \frac{1}{2}\boldsymbol{\delta}^\top \fisher \boldsymbol{\delta} + \gO\left(\lVert\boldsymbol{\delta}\rVert^3\right)
\end{equation}
and thus contains information about the geometry of the space distributions and the loss landscape. Additionally, when $\params$ is a local minimum, the FIM is identical to the Hessian of the loss. The FIM is computed using gradients of the model on \emph{a single datapoint} and specifically using only gradients of the output $\nabla_\params \mlp_\params(\vx_i)$. For regression
\begin{equation}
    \begin{split}
        \fisher &= \E_{\vx \sim \data, \vy \sim p_\params(\vy | \vx)} \left(\nabla_\params \log(p_\params(\vy | \vx)) \nabla_\params \log(p_\params(\vy | \vx))^\top\right) \\
        &= \E_{\vx \sim \data,  \vy \sim \gN(\vy; \mlp_\params(\vx), \mI)} \left(\left(\nabla_\params \frac{1}{2}\lVert\mlp_\params(\vx) - \vy\rVert^2\right) \left(\nabla_\params \frac{1}{2}\lVert\mlp_\params(\vx) -\vy)\rVert^2\right)^\top\right) \\
        &= \E_{\vx \sim \data}\left(\nabla_\params \mlp_\params(\vx) \nabla_\params \mlp_\params(\vx)^\top \underbrace{\E_{\vy \sim \normal(\vy; \mlp_\params(\vx), \mI)}\left(\lVert \mlp_\params(\vx) - \vy \rVert^2\right)}_{ \equiv \mI}\right) \\ 
        &= \E_{\vx \sim \data}\left(\nabla_\params \mlp_\params(\vx)  \nabla_\params \mlp_\params(\vx)^\top\right),
    \end{split}
\end{equation}
and for classification
\begin{equation}
    \begin{split}
        \fisher &= \E_{\vx \sim \data, \vy \sim p_\params(\vy | \vx)} \left(\nabla_\params \log(p_\params(\vy | \vx)) \nabla_\params \log(p_\params(\vy | \vx))^\top\right) \\
        &= \E_{\vx \sim \data} \left(\sum_{k = 1}^C p_\params(\vy = k | \vx) \nabla_\params\log(p_\params(\vy = k | \vx)) \nabla_\params\log(p_\params(\vy = k | \vx))^\top\right) \\
        &= \E_{\vx \sim \data} \left(\sum_{k = 1}^C \frac{1}{p_\params(\vy = k | \vx)}\nabla_\params p_\params(\vy = k | \vx) \nabla_\params p_\params(\vy = k | \vx)^\top\right)
    \end{split}
\end{equation}
 
\paragraph{The natural gradient.} The natural gradient is defined by preconditioning the gradient using the FIM
\begin{equation}
    \nat := \fisher_\params^{-1} \nabla_\params \loss(\params)
\end{equation}
Motivated from the perspective of information geometry \citep{amari2000methods}, the natural gradient defines the direction in parameter space that gives the largest change in the loss per unit of change in the \emph{predictive distribution} of the model, measured by KL-divergence. This is to be contrasted with the standard gradient, which is the direction that gives the largest change in the loss per unit of change in \emph{parameters}. Natural gradients are a fundamental tool in optimization and can be used to accelerate training \citep{martens2015optimizing, grosse2016kronecker, george2018ekfac}. 

Natural gradient optimization requires dynamic computation, storing, and inversion of the FIM, whose size grows quadratically with the number of parameters, making it intractable at scale. This necessitates the use of approximations, such as K-FAC \citep{martens2015optimizing, grosse2016kronecker, george2018fast}. 

\textbf{FIM usage in pruning.} In pruning, we want to assign each weight of a trained model a saliency score that indicates its importance to the model's performance. The goal is to remove a certain percentage of weights with the lowest saliency, resulting in a compressed model with fewer parameters that still performs relatively well. The original OBD \cite{lecun1989optimal} and OBS \cite{hassibi1993optimal} algorithms use the Hessian to compute saliency scores. However, since the model is trained, the parameters are likely close to a local minimum, and the FIM is often used as an approximation of the Hessian.  

Given parameters of a trained model $\params$, the OBD pruning saliency scores are given by 
\begin{equation}
    \params_{\mathrm{OBD}} := \params^2 \odot \mathrm{diag}(\fisher_\params), 
\end{equation}
where $\params^2$ is a matrix of the square of each parameter, and $\odot$ stands for point-wise product. The optimal brain surgeon (OBS) pruning saliency scores are given by
\begin{equation}
    \params_{\mathrm{OBS}} := \params^2 \oslash \mathrm{diag}(\fisher_\params^{-1}), 
\end{equation}
where $\oslash$ denotes point-wise division.

\section{Extension to Transformers}\label{apx:transformers}

In this section, we extend the results from the main text--originally detailed for MLPs--to transformer architectures. Similar extensions apply to a wide range of neural models, including CNNs, RNNs, and state-space models, as the key requirement is that the architecture consists of linear layers interleaved with nonlinearities. We begin by presenting the gradient decomposition for transformers, and then analyze its symmetry structure and the applicability of \ourmethod.

\subsection{Gradient Decomposition for Transformers}\label{apx:transformer_decomposition}
As mentioned in Section~\ref{sec:preliminaries}, the gradient decomposition used in the paper generalizes to other types of neural architectures. For example, \citet{grosse2016kronecker} discuss an extension to CNNs and \citet{eschenhagen2023kronecker} generalized this decomposition to other modern neural architectures, including transformers. Most transformer parameters are split between the fully-connected components (usually called FFNs or MLPs) and the attention layers. We have covered the MLP case in Section~\ref{sec:preliminaries} and cover the rest in this section. Throughout this section, we use the following notation:
\begin{equation*}
    \Td=\text{sequence length},\,
    d=d_{\text{model}},\,
    \hd=\text{\#\,heads},\,
    d_k=\frac{d}{\hd},\,
    d_{\mathrm{ff}}=\text{FFN expansion}, \,
    \Ld = \text{\#\,blocks}.
\end{equation*}
For the reader's convenience, we use \textcolor{Brown}{brown} for transformer block indices, \textcolor{Blue}{blue} for attention head indices, and \textcolor{Green}{green} for token indices. This notation and the notation for the rest of the section largely mirror \citet{vaswani2017attention}.

\paragraph{Forward computation of transformer block $\ld$.} Table~\ref{tab:transformer_forward} details the forward pass of the $\ld$-th transformer block on an input sequence $\vs = (\vx_\toko, \dots, \vx_\Td)$. $\ld$ is the index of the transformer block and $\jd$ is the head index. 

\begin{table}[h!]
    \centering
    \renewcommand{\arraystretch}{1.2}
    \caption{Transformer forward-pass.}\label{tab:transformer_forward}
    \vspace{4pt}
    \begin{tabular}{@{}lll@{}}
        \toprule
        \textbf{Object} & \textbf{Definition} & \textbf{Shape} \\
        \midrule
        Hidden input      & $\mH^{(\ld-\od)}$ & $\Td \times d$ \\[2pt]
        Queries           & $\mQ^{(\ld, \jd)}=\mH^{(\ld-\od)}\mW_Q^{(\ld, \jd)}+\vb_Q^{(\ld, \jd)}$
                          & $\Td \times d_k$ \\
        Keys              & $\mK^{(\ld, \jd)}=\mH^{(\ld-\od)}\mW_K^{(\ld, \jd)}+\vb_K^{(\ld, \jd)}$
                          & $\Td \times d_k$ \\
        Values            & $\mV^{(\ld, \jd)}=\mH^{(\ld-\od)}\mW_V^{(\ld, \jd)}+\vb_V^{(\ld, \jd)}$
                          & $\Td \times d_k$ \\
        Attention weights & $\mA^{(\ld ,\jd)}=\mathrm{softmax}\!\bigl(
                                 \mQ^{(\ld, \jd)}\mK^{(\ld, \jd)\!\top}/\sqrt{d_k}\bigr)$
                          & $\Td \times \Td$ \\
        Head output       & $\mO^{(\ld, \jd)}=\mA^{(\ld, \jd)}\mV^{(\ld, \jd)}$
                          & $\Td \times d_k$ \\
        Merged heads      & $\mO^{(\ld)}=[\mO^{(\ld, \ojd)}, \dots, \mO^{(\ld, \hd)}]
                               \mW_O^{(\ld)}+\vb_O^{(\ld)}$
                          & $\Td \times d$ \\
        Post-MHA state    & $\widehat{\mH}^{(\ld)}=\layernorm\bigl(
                               \mH^{(\ld-\od)}+\mO^{(\ld)}\bigr)$
                          & $\Td \times d$ \\
        Feed-forward pre-act. & $\mP^{(\ld)}=\widehat{\mH}^{(\ld)}\mW_1^{(\ld)}+\vb_1^{(\ld)}$
                          & $\Td \times d_{\text{ff}}$ \\
        Feed-forward out. & $\mU^{(\ld)}=\sigma(\mP^{(\ld)})\mW_2^{(\ld)}+\vb_2^{(\ld)}$
                          & $\Td \times d$ \\
        Layer output      & $\mH^{(\ld)}=\layernorm\bigl(\widehat{\mH}^{(\ld)}+\mU^{(\ld)}\bigr)$
                          & $\Td\times d$ \\
        \bottomrule
    \end{tabular}
\end{table}

\begin{table}[h!]
    \centering
    \renewcommand{\arraystretch}{1.15}
    \caption{Transformer parameters.}\label{tab:transformer_parameters}
    \begin{tabular}{@{}lll@{}}
        \toprule
        \textbf{Parameter} & \textbf{Description} & \textbf{Shape} \\
        \midrule
        $\mW_Q^{(\ld, \jd)}$ & query projection (head $\jd$) & $d \times d_k$ \\
        $\mW_K^{(\ld, \jd)}$ & key projection   (head $\jd$) & $d \times d_k$ \\
        $\mW_V^{(\ld, \jd)}$ & value projection (head $\jd$) & $d \times d_k$ \\
        $\vb_Q^{(\ld, \jd)}$ & query bias                    & $d_k$ \\
        $\vb_K^{(\ld, \jd)}$ & key bias                      & $d_k$ \\
        $\vb_V^{(\ld, \jd)}$ & value bias                    & $d_k$ \\ 
        $\mW_O^{(\ld)}$      & output projection (concat. heads $\rightarrow d$)   & $(h\,d_k) \times d$ \\
        $\vb_O^{(\ld)}$      & output bias                                         & $d$ \\ 
        $\mW_1^{(\ld)}$      & FFN expansion ($d \!\rightarrow\! d_\text{ff}$)     & $d \times d_\text{ff}$ \\
        $\vb_1^{(\ld)}$      & FFN bias (layer 1)                                  & $d_\mathrm{ff}$ \\
        $\mW_2^{(\ld)}$      & FFN contraction ($d_\text{ff} \!\rightarrow\! d$)   & $d_\mathrm{ff} \times d$ \\
        $\vb_2^{(\ld)}$      & FFN bias (layer 2)                                  & $d$ \\
        \bottomrule
    \end{tabular}
\end{table}

The transformer block parameters are presented in Table~\ref{tab:transformer_parameters}. Notes:
\begin{itemize}
    \item In most implementations, the key, query, and value projections have no bias terms, i.e. 
    \begin{equation}
        \vb_Q^{(\ld, \jd)} \equiv \vb_K^{(\ld, \jd)} \equiv \vb_V^{(\ld, \jd)} \equiv \mathbf{0}.   
    \end{equation}
    We include these bias terms for generality, but they can be removed in most cases.
    \item Our derivation follows the \emph{post-LayerNorm} (Post-LN) convention: residual add \(\rightarrow\) LayerNorm (as in the original transformers paper \citet{vaswani2017attention}). The analysis for Pre-LN transformers requires a slight adjustment to the derivation.
    \item \looseness=-1 Outside of the transformer block we would also typically have: the token embedding matrix $\mE \in \mathbb R^{|\gV| \times d}$, positional embeddings $\mathrm{PE} \in \mathbb R^{T \times d}$ (or a sinusoidal schedule), and an unembedding matrix $\mE^\top \in \mathbb R^{d \times |\mathcal V|}$.
\end{itemize}

\paragraph{Back-propagated (pre-activation) gradient signals.}
Notice that all of the transformer parameters act as linear transformations (that are then sometimes passed to attention computations, LayerNorms, etc.). This means that we can use the observation from Appendix~\ref{apx:decomposition} regarding gradient computation for linear layers. Specifically, for every tensor computed in the transformer forward-pass (Table~\ref{tab:transformer_forward}) that is the \emph{output of a linear transformation}, we store the gradient w.r.t. that tensor:
\begin{equation*}
    \begin{split}
        \tang_Q^{(\ld, \jd)}=\frac{\partial\loss_\vs}{\partial\mQ^{(\ld, \jd)}},\quad \tang_K^{(\ld, \jd)}=\frac{\partial\loss_\vs}{\partial\mK^{(\ld, \jd)}},\quad \tang_V^{(\ld, \jd)}=\frac{\partial\loss_\vs}{\partial\mV^{(\ld, \jd)}}, \\
        \tang_O^{(\ld)}=\frac{\partial\loss_\vs}{\partial\mO^{(\ld)}},\quad \tang_1^{(\ld)}=\frac{\partial\loss_\vs}{\partial\mP^{(\ld)}},\quad \tang_2^{(\ld)}=\frac{\partial\loss_\vs}{\partial\mU^{(\ld)}}.
    \end{split}
\end{equation*}
All of these tensors, referred to as \emph{tangents}, share the same shapes as their forward counterparts.

\paragraph{Outer-product parameter gradients.} Because every weight matrix appears in an affine map $\mY=\mX\mW+\vb$, as analyzed in Appendix~\ref{apx:decomposition}, its gradient factorizes exactly into an \emph{activation block} $\mX$ and a \emph{tangent block} $\tang$:
\begin{equation}
    \nabla_{\mW}\mathcal L=\mX^{\top}\tang, \qquad \nabla_{\vb}\mathcal L=\tang^\top \vone_\Td.
\end{equation}
Carrying this out for \emph{all} parameters in the transformer block yields
\begin{align*}
    \nabla_{\mW_Q^{(\ld, \jd)}}\loss_\vs &= (\mH^{(\ld-\od)})^\top \tang_Q^{(\ld, \jd)}, &
    \nabla_{\vb_Q^{(\ld, \jd)}}\loss_\vs &= (\tang_Q^{(\ld, \jd)})^\top \vone_\Td, \nonumber \\
    \nabla_{\mW_K^{(\ld, \jd)}}\loss_\vs &= (\mH^{(\ld-\od)})^\top \tang_K^{(\ld, \jd)}, &
    \nabla_{\vb_K^{(\ld, \jd)}}\loss_\vs &= (\tang_K^{(\ld, \jd)})^\top \vone_\Td, \nonumber \\
    \nabla_{\mW_V^{(\ld, \jd)}}\loss_\vs &= (\mH^{(\ld-\od)})^\top \tang_V^{(\ld, \jd)}, &
    \nabla_{\vb_V^{(\ld, \jd)}}\loss_\vs &= (\tang_V^{(\ld, \jd)})^\top \vone_\Td, \\
    \nabla_{\mW_O^{(\ld)}}\loss_\vs      &= \bigl[\mO^{(\ld, \ojd)}, \dots, \mO^{(\ld, \hd)}\bigr]^\top \tang_O^{(\ld)}, &
    \nabla_{\vb_O^{(\ld)}}\loss_\vs      &= (\tang_O^{(\ld)})^\top \vone_\Td, \nonumber \\
    \nabla_{\mW_1^{(\ld)}}\loss_\vs      &= (\widehat{\mH}^{(\ld)})^\top \tang_1^{(\ld)}, &
    \nabla_{\vb_1^{(\ld)}}\loss_\vs      &= (\tang_1^{(\ld)})^\top \vone_\Td, \nonumber \\
    \nabla_{\mW_2^{(\ld)}}\loss_\vs      &= \sigma(\mP^{(\ld)})^\top \tang_2^{(\ld)}, &
    \nabla_{\vb_2^{(\ld)}}\loss_\vs      &= (\tang_2^{(\ld)})^\top \vone_\Td. \nonumber
\end{align*}

\paragraph{Token-wise view (``sum of rank-1'' form).} Unfolding, for example,
\begin{equation}\label{eq:transformer_dec_q}
    \nabla_{\mW_Q^{(\ld, \jd)}}=(\mH^{(\ld-\od)})^\top \tang_Q^{(\ld, \jd)} =\sum_{\tokd=\toko}^{\Td} (\tang_Q^{(\ld, \jd)})_\tokd (\mH_{\tokd}^{(\ld-\od)})^\top,
\end{equation}
reveals that \emph{each weight gradient is a sum of $\Td$ rank-1 outer products}. Storing the pair $\bigl(\mH_{\tokd}^{(\ld-\od)},\tang_{Q,\tokd}^{(\ld, \jd)}\bigr)$ for every token $\tokd$ is therefore sufficient to reconstruct $\nabla_{\mW_Q^{(\ld, \jd)}}\loss_\vs$ exactly, and identical statements hold for $\mW_K^{(\ld, \jd)},\mW_V^{(\ld, \jd)},\mW_O^{(\ld)},\mW_1^{(\ld)}$, and $\mW_2^{(\ld)}$.

\paragraph{Compact gradient representation.} For transformer block $\ld = \od, \dots, \Ld$, and head $\jd = \ojd, \dots, \hd$, define:
\begin{equation}
    \begin{split}
        \decbatch_{\mathrm{res}}^{(\ld)} &:= \bigl(
            \underbrace{\mH^{(\ld-\od)}}_{\text{activation}}\;
            \underbrace{\tang_O^{(\ld)}}_{\substack{\text{attn. out. } \\ \text{tangents}}}, \;
            \underbrace{\widehat{\mH^{(\ld)}}}_{\text{post-LN}}, \;
            \underbrace{\tang_2^{(\ld)}}_{\substack{\text{FFN}_2 \\ \text{tangents}}} \;
        \bigr) \in \Grads_\Td^{\mathrm{res}} := \sR^{\Td \times d \times 4}, \\
        \decbatch_{\mathrm{attn}}^{(\ld)} &:= \bigl(
           \underbrace{\tang_Q^{(\ld,\ojd{:}\hd)}}_{\text{query tangents}},\;
           \underbrace{\tang_K^{(\ld,\ojd{:}\hd)}}_{\text{key tangents}}, \;
           \underbrace{\tang_V^{(\ld,\ojd{:}\hd)}}_{\text{value tangents}}, \;
           \underbrace{\mO^{(\ld, \ojd{:}\hd)}}_{\text{MHA outputs}}
        \bigr) \in \Grads_\Td^{\mathrm{attn}} := \sR^{\Td \times (\hd d_k) \times 4}, \\
        \decbatch_{\mathrm{hidden}}^{(\ld)} &:= \bigl(
           \underbrace{\sigma(\mP^{(\ld)})}_{\text{hidden rep.}}, \;
           \underbrace{\tang_1^{(\ld)}}_{\substack{\text{FFN}_1 \\ \text{tangents}}}
        \bigr) \in \Grads_\Td^{\mathrm{hidden}} := \sR^{\Td \times d_{\mathrm{ff}} \times 2}, \\
    \end{split}
\end{equation}
Collecting the whole set, we get the tensors
\begin{equation}
    \decbatch^{(\ld)} = \bigl(\decbatch_{\mathrm{res}}^{(\ld)}, \decbatch_{\mathrm{attn}}^{(\ld)}, \decbatch_{\mathrm{hidden}}^{(\ld)}\bigr) \in \Grads_\Td^{(\ld)} := \Grads_\Td^{\mathrm{res}} \oplus \Grads_\Td^{\mathrm{attn}} \oplus \Grads_\Td^{\mathrm{hidden}}, 
\end{equation}
and
\begin{equation}
    \decbatch := (\decbatch^{(\od)}, \dots, \decbatch^{(\Ld)}) \in \Grads_\Td := \Grads_\Td^{(\od)} \oplus \cdots \oplus \Grads_\Td^{(\Ld)}.
\end{equation}

\subsection{Transformer (Decomposed) Gradient Symmetries}\label{apx:transformer_symmertries}
The permutation symmetry groups of the parameter spaces of general architectures, and transformers in particular, were analyzed in \citet{lim2023graph, kofinas2024graph, zhou2024universal}. \citet{lim2023graph, kofinas2024graph} identify permutation symmetries with automorphisms of the computation graph of $\mlp_\params$, and \citet{zhou2024universal} analyzes the permutation symmetries of multi-dimensional tensors. To give a flavor of the adaptations needed in the transformer case, we first look at the effects of the residual connections. Intuitively, we need to tie together the neuron spaces of dimension $d$ (i.e., $\mH^{(\ld - \od)}$, $\mO^{(\ld)}$, $\widehat{\mH^{(\ld)}}$, and $\mU^{(\ld)}$) under the same symmetry group ($S_d$) because of the residual connections. With this ``symmetry tying'' the residual connections and LayerNorms preserve permutation symmetries, since if $\funca, \funcb: \sR^d \to \sR^d$ are $S_d$-equivariant functions, we have
\begin{equation*}
    (\funca + \funcb)(\sigma \cdot \vx) = \funca(\sigma \cdot \vx) + \funcb(\sigma \cdot \vx) =  \sigma \cdot \funca(\vx) + \sigma \cdot \funcb(\vx) = \sigma \cdot (\funca(\vx) + \funcb(\vx)) = \sigma \cdot (\funca + \funcb)(\vx), 
\end{equation*}
and
\begin{equation*}
    \layernorm(\sigma \cdot \vx) = \sigma \cdot \layernorm(\vx).
\end{equation*}
\looseness=-1
The analysis of the permutation symmetry group of transformer weight spaces provided in \citet{lim2023graph, kofinas2024graph, zhou2024universal} follows similar observations. The resulting symmetry group  is
\begin{equation}
    G = \underbrace{S_\Td}_{\text{tokens}} \times \underbrace{S_d}_{\substack{\text{residual} \\ \text{stream}}} \times \overbrace{\underbrace{(S_{d_k})^\hd}_{\substack{\text{MHA} \\ \text{output}}} \times \underbrace{S_{d_{\mathrm{ff}}}}_{\substack{\text{FFN} \\ \text{hidden dim}}}}^{\times \Ld \text{ transformer blocks}}.
\end{equation}
Where, in our decomposed gradients case, $S_\Td$ acts on the sequence dimension of all spaces, $S_d$ acts on the second axis (the $d$-dimension) of all $\Grads_\Td^{\mathrm{res}}$s, each $(S_{d_k})^\hd$ acts independently on the $V^{(\ld, \jd)}$ and $O^{(\ld, \jd)}$ components of each head in $\Grads_\Td^{\mathrm{attn}}$, and each $S_{d_\mathrm{ff}}$ acts on the hidden dimension of the FFN represented in $\Grads_\Td^{\mathrm{hidden}}$. 

We note that transformer parameter spaces exhibit other neural symmetries that are not modeled by the \emph{permutation} symmetry group $G$. These symmetries include ReLU scaling symmetries \citep{kalogeropoulos2024scale} and general attention symmetries (the transformation $(\mQ^{(\ld, \jd)}, \mK^{(\ld, \jd)}) \mapsto (\mQ^{(\ld, \jd)} \mR, \mK^{(\ld, \jd)} \mR^{-\top})$ for $\mR \in \mathrm{GL}_{d_k}(\sR)$ results in the same attention matrix). Accounting for these symmetries is left for future work.

\subsection{Adapting \ourmethod to Transformers}\label{apx:trasformer_gradmetanet}

\looseness=-1
To implement a \ourmethod version that can process transformer gradients, we need to make the following adaptations. First, we treat the sequence dimension as a batch dimension, optionally with additional positional encoding for the token index. Note that this positional encoding is not strictly required since, as can be seen in Equation~\ref{eq:transformer_dec_q}, the full gradient is a sum over the rank-1 components and is therefore an $S_\Td$-invariant function of them. We then treat $\Grads_\Td^{\mathrm{res}}$, $\Grads_\Td^{\mathrm{attn}}$, and $\Grads_\Td^{\mathrm{hidden}}$ as we treat neuron spaces with an additional positional encoding for each attention head to convert the $S_{(\hd d_k)}$-equivariance of $\gtg$ to $(S_{d_k})^\hd$-equivariance. As mentioned in Appendix~\ref{apx:transformer_symmertries}, we treat the $\Ld$ copies of $\Grads_\Td^{\mathrm{res}}$ as a single neuron space, since the symmetry structure of the residual stream is tied together.

\section{Gradient and Weight Spaces}\label{apx:spaces}
This section formally defines the vector spaces used to represent (sets of) gradients and weights. For batch size $\bd$ and feature dimension $\fd$, the feature vector spaces $\Grads[\fd]$ and $\Grads_\bd[\fd]$, $\Params[\fd]$, and $\Params_\bd[\fd]$ are defined by
\begin{equation}
    \begin{split}
         \Grads[\fd] &:= \Grads^{(\zd)}[\fd] \oplus \cdots \oplus \Grads^{(\Ld)}[\fd] \\ 
         \Grads_\bd[\fd]  &:= \Grads_\bd^{(\zd)}[\fd] \oplus \cdots \oplus \Grads_\bd^{(\Ld)}[\fd] \\ 
         \Params[\fd] &:= \Weights^{(\od)}[\fd] \oplus \Biases^{(\od)}[\fd] \oplus \cdots \oplus \Weights^{(\Ld)}[\fd] \oplus \Biases^{(\Ld)}[\fd] \\ 
         \Params_\bd[\fd] &:= \Weights_\bd^{(\od)}[\fd] \oplus \Biases_\bd^{(\od)}[\fd] \oplus \cdots \oplus \Weights_\bd^{(\Ld)}[\fd] \oplus \Biases_\bd^{(\Ld)}[\fd] \\ 
    \end{split}
\end{equation}
where,
\begin{equation}
    \begin{split}
        \Grads^{(\ld)}[\fd] := \sR^{d_\ld \times \fd}, \Grads_\bd^{(\ld)}[\fd] := \sR^{\bd \times d_\ld \times \fd} \\ 
        \Weights^{(\ld)}[\fd] := \sR^{d_\ld \times d_{\ld - \od} \times \fd}, \Weights_\bd^{(\ld)}[\fd] := \sR^{\bd \times d_\ld \times d_{\ld -\od} \times \fd} \\ 
        \Biases^{(\ld)}[\fd] := \sR^{d_\ld \times \fd}, \Biases_\bd^{(\ld)}[\fd] := \sR^{\bd \times d_\ld \times \fd} \\ 
    \end{split}
\end{equation}
Note that $\Grads[\tfd] = \Grads$, $\Grads_\bd[\tfd] = \Grads_\bd$, $\Params[\ofd] = \Params$ and $\Params_\bd[\ofd] = \Params_\bd$.

\section{Architecture Details}\label{apx:arch}
The following is a detailed description of each of the layers used in \ourmethod.

\subsection{\ourmethod}\label{apx:ourmethod}
\paragraph{The positional encoding map.} Similarly to \citet{lim2023graph, zhou2024permutation}, we use a positional encoding map $\pe: \Grads_b \to \Grads_b[f]$ that concatenates a layer identifier to neurons in intermediate layers and a neuron identifier to each neuron in the first and last layers, i.e.
\begin{equation*}
    \begin{split}
        \pe(\decbatch^{(0)})_{i, j, :} &= \left[\decbatch^{(0)}_{i,j,:}, \ve_{\mathrm{in}}(j)\right], \\
        \pe(\decbatch^{(l)})_{i, j, :} &= \left[\decbatch^{(l)}_{i,j,:}, \ve_{\mathrm{layer}}(l)\right], \quad \text{for } l = 1, \dots, L-1, \\
        \pe(\decbatch^{(L)})_{i, j, :} &= \left[\decbatch^{(L)}_{i,j,:}, \ve_{\mathrm{out}}(j)\right],
    \end{split}
\end{equation*}
where $[\cdot, \cdot]$ denotes concatenation along the feature axis. $\ve_{\mathrm{in}}$ and $\ve_{\mathrm{out}}$ assign unique identifiers to each neuron in the input and output layers, respectively, and $\ve_{\mathrm{layer}}$ assigns unique identifiers to each hidden layer. We implement all encoding maps using sinusoidal positional encoding \citep{tancik2020fourier}.

\looseness=-1
\paragraph{Gradient-set-to-gradient-set layers.} $\gbtgb: \Grads_b[f_\text{in}] \to \Grads_b[f_\text{out}]$ are parametrized similarly to the interactions-across-sets layers introduced in \citet{hartford2018deep}, and are implemented as
\begin{equation}\label{eq:gbtgb}
    \gbtgb \left(\decbatch^{(l)}\right)_{i, j, :} = M_1 \decbatch^{(l)}_{i, j, :} +  M_2 \sum_{i' = 1}^b \decbatch^{(l)}_{i', j, :} + M_3 \sum_{l'=0}^L\sum_{j' = 1}^{d_l} \decbatch^{(l')}_{i, j', :} + M_4 \sum_{l'=0}^L \sum_{i' = 1}^b \sum_{j' = 1}^{d_{l'}} \decbatch^{(l')}_{i', j', :}, 
\end{equation}
for learnable $M_1, M_2, M_3, M_4 \in \sR^{f_{\mathrm{out}} \times f_{\mathrm{in}}}$.

\paragraph{Gradient-set-to-gradient pooling layer.} $\gbtg: \Grads_b[f_\text{in}] \to \Grads[f_\text{out}]$ is implemented as
\begin{equation}
   \gbtg(\decbatch^{(l)})_{j, :} = M_1 \sum_{i' = 1}^b \decbatch^{(l)}_{i', j, :} + M_2 \sum_{l' = 0}^L \sum_{i' = 1}^b \sum_{j' = 1}^{d_{l'}} \decbatch^{(l')}_{i', j', :},
\end{equation}
for learnable $M_1, M_2 \in \sR^{f_{out} \times f_{\mathrm{in}}}$. 

\paragraph{Gradient-to-gradient layers.} $\gtg : \Grads[f_\text{in}] \to \Grads[f_\text{out}]$ are parameterized as equivariant DeepSets networks \citep{zaheer2017deep}, and take the form
\begin{equation}\label{eq:gtg}
    \gtg(\decbatch^{(l)})_{i,:} = M_1 \decbatch^{(l)}_{i, :} + M_2 \sum_{l'=1}^L \sum_{i' = 1}^{d_l} \decbatch^{(l')}_{i', :},
\end{equation}
for learnable $M_1, M_2,\in \sR^{f_{\mathrm{out}} \times f_{\mathrm{in}}}$. 

\paragraph{Gradient-to-weight component.}
\looseness=-1
Similarly to the generalized product layer in \citet{navon2023alignment}, $\gtw: \Grads[f_\mathrm{in}] \to \Params$ applies a pointwise MLP to the features associated with the neurons connected to each weight, or in the case of biases, to the feature vectors corresponding to the respective neuron.
\begin{equation}
    \mW^{(l)}_{i,j} = \mathrm{MLP}_{1}([\decbatch^{(l)}_{i,:},\decbatch^{(l+1)}_{j,:}]), \vb^{(l)}_{i} = \mathrm{MLP}_{2}(\decbatch^{(l)}_{i,:}).
\end{equation}

\subsection{\secondmethod}\label{apx:secondmethod}
Similarly to \ourmethod, A \secondmethod model $\boldsymbol{\Phi}$ comprises updates of different types: a positional encoding layer $\pe$, gradient-set-to-gradient-set updates $\gbtgbattn$, and a gradient-set-to-weight component $\gbtwattn$. A \secondmethod model is parameterized as  
\begin{equation}
    \funca =\gbtwattn \circ \gbtgbattn^{(k)} \circ \cdots \circ \gbtgbattn^{(1)} \circ \pe.
\end{equation}

We now describe each of these layers.

\paragraph{The positional encoding map.} The positional encoding map used for \secondmethod is identical to the one used in \ourmethod and described in Section \ref{sec:method}.

\paragraph{Gradient-sets-to-gradient-set updates.} $\gbtgbattn: \Grads_b[f_\text{in}] \to \Grads_b[f_\text{out}]$
Are attention variants of the $\gbtgb$ layers described in Section \ref{sec:method}. For a given $\decbatch \in \Grads_b[f_\text{in}]$ in order to compute $\gbtgbattn(\decbatch)$ we first compute set-wise attention, given by:

\begin{equation}\label{eq:batchwise_attn}
    \attention^h_b(\decbatch)^{(l)}_{i,j,:} = \sum_{j=1}^b \softmax\left(\frac{\langle M^h_{Q}\decbatch^{(l)}_{k,j,:}, M^h_{K}\decbatch^{(l)}_{i,j,:}\rangle}{\sqrt{f_{\mathrm{in}}}} \right) M^h_{V}\decbatch^{(l)}_{k,j,:}.
\end{equation}

\begin{equation}
\label{eq:batchwise_attn_agg}
    \attention_b(\decbatch)^{(l)}_{i,j,:} =  M_{O}[\attention^1_b(\decbatch)^{(l)}_{i,j,:}, \dots, \attention^H_b(\decbatch)^{(l)}_{i,j,:}]
\end{equation}
Here $\langle \cdot, \cdot \rangle$ denotes inner product and $M^h_{K}, M^h_{Q}, M^h_{V} \in \sR^{f_\mathrm{in} \times f_\mathrm{in}}$ are learnable matrices used in each attention head and $M_O \in \sR^{f_\mathrm{in}H \times f_\mathrm{out}}$ is a final aggregation linear layer. We then compute gradient-wise attention, given by:

\begin{equation}
    \attention^h_g(\decbatch)^{(l)}_{i,j,:} = =\sum_{l'=0}^{L} \sum_{k=1}^{d_{l'}}  \softmax\left(\frac{\langle M^h_{Q}\decbatch^{(l')}_{i,k,:}  , M^h_{K}\decbatch^{(l)}_{i,j,:}\rangle}{\sqrt{f_{\mathrm{in}}}}\right) M^h_{V}\decbatch^{(l')}_{i,k,:}.
\end{equation}

\begin{equation}
    \attention_g(\decbatch)^{(l)}_{i,j,:} =  M_{O}[\attention^1_g(\decbatch)^{(l)}_{i,j,:}, \dots, \attention^H_g(\decbatch)^{(l)}_{i,j,:}]
\end{equation}

Here we slightly abuse notation and denote by $M^h_{K}. M^h_{Q}, M^h_{V} \in \sR^{f_\mathrm{in} \times f_\mathrm{in}}$, $M_O \in \sR^{f_\text{in}H \times f_\text{out}} $ learnable matrices different from those in equations \ref{eq:batchwise_attn} and \ref{eq:batchwise_attn_agg}.
finally, the value of $\gbtgbattn(\decbatch)$ is given by 
\begin{equation}
    \gbtgbattn(\decbatch)^{(l)}_{i,j,:} = \mathrm{MLP}(\decbatch^{(l)}_{i,j,:} + \attention_b(\decbatch)^{(l)}_{i,j,:} + \attention_g(\decbatch)^{(l)}_{i,j,:} ).
\end{equation}

\paragraph{Gradient-batch-to-weight update.} As \secondmethod prioritizes empirical improvements over computational efficiency, we directly use a gradient-batch-to-weight update, which we found to yield better performance.

The gradient-to-weight mapping, $\gbtwattn: \Grads_b[f_\text{in}] \to \Params$, applies a pointwise MLP to the feature vectors associated with the neurons connected to each weight in every element of the batch. In the case of biases, the MLP is applied to the feature vectors corresponding to the respective neuron. Finally, the results are summed across the batch. Formally, this can be expressed by $\gbtwattn(\decbatch) = (\mW_1,\vb_1 \dots, \mW_L, \vb_L)$ where:

\begin{equation}
    (\mW_l)_{i,j} = \sum_{k=1}^b \mathrm{MLP}_{1}([\decbatch^{(l)}_{k, i,:}, \quad \decbatch^{(l+1)}_{k, j,:}]), (\vb_l)_i = \sum_{k=1}^b \mathrm{MLP}_{2}(\decbatch^{(l)}_{k,i,:}).
\end{equation}

\subsection{Invariant \ourmethod.} In some cases (e.g. evaluating influence functions) we want \ourmethod to output a single invariant vector $\in \sR^{f_\mathrm{out}}$ rather than a parameter vector $\in \Params$. In this case we replace the $\gtw$ component described in Section \ref{sec:method} with a $\gtv$ layer described below, For an element $\decbatch \in \Grads[f_\text{in}]$, 

\begin{equation}
    \gtv(\decbatch)=  M \sum_{l'=1}^L \sum_{i' = 1}^{d_l} \decbatch^{(l')}_{i', :}.
\end{equation}
Where $M \in \sR^{f_\text{out} \times f_\text{in}}$ is a learnable matrix. This results in invariant vector outputs.

\subsection{Computational Complexity.}  
We analyze the space and runtime complexity of \ourmethod and \secondmethod, comparing them to alternative approaches. Throughout this discussion, we denote by $P$ the number of parameters in the underlying MLP $\mlp_\params$, whose gradients are being processed, and denote the number of neurons by $N$. Since the gradient-batch-to-gradient-batch update is the most computationally intensive component in both architectures, we focus our analysis on this operation. 

\paragraph{\ourmethod.}
The gradient-batch-to-gradient-batch update in \ourmethod consists of a stack of layers $\gbtgb: \Grads_b[f_\text{in}] \to \Grads_b[f_\text{out}]$, as defined in Section \ref{sec:method}. Each of these layers has both space and runtime complexity of $O(N \cdot b \cdot f_\text{in} \cdot  f_\text{out} )$. Thus, assuming a fixed hidden dimension and number of layers, \ourmethod has a complexity of  $O(N \cdot b)$.

\paragraph{Gradient concatenation and averaging in weight space.}  
Both gradient concatenation and averaging methods process sets of gradients by utilizing weight-space architectures, such as those introduced by \citet{navon2023equivariant, lim2023graph}. These architectures employ layers $L_w: \Params[f_\text{in}] \to \Params[f_\text{out}]$ with time and space complexity $O(P \cdot f_\text{in} \cdot f_\text{out})$. 

In the \textit{concatenation} approach, gradients are concatenated, producing an input element in $\Params[b]$, resulting in an overall complexity of $O(P \cdot b)$ for a fixed hidden dimension and number of layers. This approach scales poorly compared to \ourmethod when $b \cdot P > b \cdot N$. In contrast, the \textit{averaging} approach reduces gradients to a single representation in $\Params[1]$, yielding a complexity of $O(P)$. However, this method is also suboptimal in the overparameterized regime ($P > b \cdot N$).  In addition, even in cases where the batch size is sufficiently large such that $P < b \cdot N$, gradient-averaging methods may still be suboptimal due to their expressivity limitations (see Section~\ref{sec:theory}).

\paragraph{\secondmethod.}  
The gradient-batch-to-gradient-batch update in \secondmethod is implemented using a stack of layers $\gbtgbattn: \Grads_b[f_\text{in}] \to \Grads_b[f_\text{out}]$, as detailed in Appendix~\ref{apx:secondmethod}. Each layer has a time complexity of \( O((N^2 \cdot b + b^2 \cdot N) \cdot f_\text{in} \cdot f_\text{out}) \) and can be designed to achieve a space complexity of \( O(N \cdot b \cdot f_\text{in} \cdot f_\text{out}) \). While these layers have the highest time complexity among the approaches considered so far, their space complexity remains efficient. Moreover, constructing an attention-based variant for weight-space architectures would scale quadratically with \( P \), making it far less practical in terms of scalability.

\section{Theory}\label{apx:theory}
\looseness=-1
In this section, we provide proofs and further discussion for the results presented in Section \ref{sec:theory}. Throughout this section, we use $\nabla_i$ to denote gradients of the networks computed on a single datapoint.

\subsection{Importance of Processing Collections of Gradients}\label{apx:collections}
In Section \ref{sec:theory}, we discuss the expressivity limitations of processing the gradient of the average loss on the batch compared to the collection of gradients at each of the datapoints. In this section, we formalize these limitations. We start with some notation and definitions. As we saw in Appendix \ref{apx:fisher}, the FIM can be computed using gradients on individual datapoints. Given a set of such gradients $\gG = \{\nabla_1, \dots, \nabla_b\}$, the FIM computed using the gradients in $\gG$ is denoted by $\fisher_\gG$. As we saw in the main text, $\gG$ can be thought of as an element of $\Params_b$.

\begin{definition}\label{def:fisher_functions}
    Let $\funca: \Params \times \Params_b \to \Params$ be a function whose inputs are parameters $\params$ and a set of gradients $\gG = \{\nabla_1, \dots, \nabla_b \}$. We say that $\funca$ non-trivially depends on the FIM if for some function $\funcb: (\Params \otimes \Params) \times \Params \times \Params \to \Params$,
    \begin{equation}
         \funca \left(\params, \gG \right) = \funcb\left(\fisher_\gG, \params, \frac{1}{b}\sum_{i=1}^b \nabla_i \right)
    \end{equation}
    and there exists a pair of inputs $\params$, $\gG = \{\nabla_1, \dots, \nabla_b\}$ and $\params'$, $\gG' = \{\nabla_1', \dots, \nabla_b'\}$ where $\gG$ and $\gG'$ are admissible gradient sets \footnote{Here, by ``admissible'', we mean that the elements of $\gG$ and $\gG'$ are actual MLP gradients, rather than arbitrary elements of $\Params$.} and such that $\params = \params'$, $\frac{1}{b}\sum_{i = 1}^b \nabla_i = \frac{1}{b}\sum_{i = 1}^b \nabla'_i$ but 
    \begin{equation}
        \funca \left(\params, \gG\right) = \funcb \left(\fisher_\gG, \params, \frac{1}{b}\sum_{i=1}^b \nabla_i\right) \neq \funcb\left(\fisher_{\gG'}, \params', \frac{1}{b}\sum_{i=1}^b \nabla'_i\right) = \funca\left(\params', \gG'\right).
    \end{equation}
\end{definition}

Many commonly used functions over sets of gradients non-trivially depend on the FIM. Before providing such examples, we first state the following trivial proposition.

\begin{proposition}\label{prop:no_approx}
     Let $\funca: \Params \times \Params_b \to \Params$ be a function that non-trivially depends on the FIM. There exist an $\epsilon >0$ such that for any continuous function $\funcc:\Params \times \Params \to \Params$ it holds that:
     \begin{equation}
         \max_{\params, \gG} \left\lVert\funca\left(\params, \gG\right) -  \funcc \left(\params, \frac{1}{b}\sum \nabla_i\right) \right\rVert > \epsilon.
     \end{equation}
\end{proposition}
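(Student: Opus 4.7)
The plan is to extract the concrete witnesses promised by Definition~\ref{def:fisher_functions} and then use the triangle inequality to push the approximation error past a constant gap. The argument does not in fact require continuity of $\funcc$ at all; it relies purely on the impossibility of any function defined on $\Params \times \Params$ separating two inputs that coincide as points in its domain.

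Concretely, I would proceed as follows. By the non-trivial FIM dependence hypothesis, there exist $(\params, \gG)$ and $(\params', \gG')$ with $\params = \params'$, $\frac{1}{b}\sum_i \nabla_i = \frac{1}{b}\sum_i \nabla'_i$, and $\funca(\params, \gG) \neq \funca(\params', \gG')$. Fix
\begin{equation*}
    \delta := \lVert \funca(\params, \gG) - \funca(\params', \gG') \rVert > 0, \qquad \bar{\nabla} := \frac{1}{b}\sum_{i=1}^b \nabla_i = \frac{1}{b}\sum_{i=1}^b \nabla'_i.
\end{equation*}
For any $\funcc : \Params \times \Params \to \Params$, the arguments $(\params, \bar{\nabla})$ and $(\params', \bar{\nabla})$ are literally equal, so $\funcc(\params, \bar{\nabla}) = \funcc(\params', \bar{\nabla})$. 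The triangle inequality then yields
\begin{equation*}
    \delta \leq \lVert \funca(\params, \gG) - \funcc(\params, \bar{\nabla}) \rVert + \lVert \funcc(\params', \bar{\nabla}) - \funca(\params', \gG') \rVert,
\end{equation*}
so at least one summand is $\geq \delta/2$. Both summands are instances of the quantity inside the maximum over $(\params, \gG)$, so setting $\epsilon := \delta/2$, which depends only on $\funca$, delivers the required uniform lower bound.

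The main obstacle is conceptual rather than technical: there really is none for this proposition in isolation, beyond carefully parsing the quantifiers. The substantive work is hidden in Definition~\ref{def:fisher_functions} and in the companion Proposition~\ref{prop:information_loss}, which must exhibit admissible witness pairs for natural gradient-based quantities (e.g.\ natural-gradient preconditioning $\fisher^{-1}\nabla$ or the OBD/OBS saliency scores from Appendix~\ref{apx:fisher}). Once those witnesses are in hand, Proposition~\ref{prop:no_approx} collapses to the three-line triangle-inequality argument sketched above.
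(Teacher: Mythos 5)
Your argument is correct and is essentially identical to the paper's own proof: both extract the witness pair $(\params,\gG)$, $(\params',\gG')$ from Definition~\ref{def:fisher_functions}, observe that $\funcc$ cannot distinguish the two inputs because they coincide as points of $\Params\times\Params$, and conclude via the triangle inequality. The only nit is that the statement asks for a \emph{strict} inequality $>\epsilon$, so you should take $\epsilon$ strictly below $\delta/2$ (any $0<\epsilon<\delta/2$ works); your choice $\epsilon=\delta/2$ only guarantees that one summand is $\geq\epsilon$.
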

In other words, functions that non-trivially depend on the FIM cannot be approximated (in the $\ell_\infty$-sense) by continuous functions that rely only on the average gradient.

\begin{proof}
    The proof follows trivially from Definition \ref{def:fisher_functions}. Let $\params$, $\gG = \{\nabla_1, \dots, \nabla_b\}$ and  $\params'$, $\gG' = \{ \nabla_1', \dots, \nabla_b'\}$ be a pair of inputs such that $\params = \params'$ and $\frac{1}{b}\sum_{i = 1}^b \nabla_i = \frac{1}{b}\sum_{i = 1}^b \nabla'_i$, but
    \begin{equation}\label{eq:fisher_non_trivial}
        \funca\left(\params, \gG\right) \neq \funca\left(\params', \gG'\right)
    \end{equation}
    For any $\funcc:\Params \times \Params \to \Params$ we have 
    \begin{equation}\label{eq:canot_factor}
        \funcc \left(\params, \frac{1}{b}\sum \nabla_i\right) = \funcc \left(\params', \frac{1}{b}\sum \nabla_i'\right).
    \end{equation}
    Therefore, if we choose $0 < \epsilon < 2 \lVert \funca(\params, \gG) - \funca(\params', \gG') \rVert$ we have 
    \begin{equation}
        \lVert \funca(\params, \gG) - \funcc \left(\params, \frac{1}{b}\sum \nabla_i\right)\rVert + \lVert \funca(\params', \gG') - \funcc \left(\params', \frac{1}{b}\sum \nabla'_i\right)\rVert \geq \lVert \funca(\params, \gG) - \funca(\params', \gG') \rVert > 2\epsilon.
    \end{equation}

    This implies that either $\lVert \funca(\params, \gG) - \funcc \left(\params, \frac{1}{b}\sum \nabla_i\right)\rVert> \epsilon$ or $\lVert \funca(\params', \gG') - \funcc \left(\params', \frac{1}{b}\sum \nabla'_i\right)\rVert$ and so
    \begin{equation}
        \max_{\params, \gG} \lVert\funca(\params, \gG) -  \funcc \left(\params, \frac{1}{b}\sum \nabla_i\right) \rVert > \epsilon
    \end{equation}
    
    completing the proof.
\end{proof}

We want to show that the computation of the natural gradient and the OBD/OBS pruning saliency scores non-trivially depends on the FIM. To do so, we first formally define these computations as functions over $\Params_b$.
\begin{definition}[Natural gradient map]\label{def:natural_gradients_map}
    The natural gradient map $\funca_{\mathrm{nat}}: \Params \times \Params_b \to \Params$ is defined by 
    \begin{equation}\label{eq:natural_gradients_map}
        \funca_{\mathrm{nat}}(\nabla, \gG) = (\fisher_\gG + \epsilon \mI)^{-1} \nabla,
    \end{equation}
    where $\mI$ is the identity matrix and $\epsilon > 0$ is a damping factor. These are added since, while positive-definite, the FIM is not guaranteed to be invertible.
\end{definition}
\begin{definition}[OBD/OBS pruning saliency maps]\label{def:pruning_saliency_maps}
    The OBD saliency map $\funca_{\mathrm{OBD}}: \Params \times \Params_b \to \Params$ is defined by 
    \begin{equation}
        \funca_{\mathrm{OBD}}(\params, \gG) := \params^2 \odot \mathrm{diag}(\fisher_\gG). 
    \end{equation}
    The OBS saliency map $\funca_{\mathrm{OBS}}: \Params \times \Params_b \to \Params$ is defined by
    \begin{equation}
        \funca_{\mathrm{OBS}}(\params, \gG) := \params^2 \oslash \mathrm{diag}((\fisher_\gG + \epsilon \mI)^{-1}). 
    \end{equation}
\end{definition}
We now show that both the natural gradient map and the OBD/OBS pruning saliency maps non-trivially depend on the FIM.
\begin{proposition}\label{prop:non_trivially_depends}
    The maps $\funca_{\mathrm{nat}}$, $\funca_{\mathrm{OBD}}$, and $\funca_{\mathrm{OBS}}$ non-trivially depend on the FIM.
\end{proposition}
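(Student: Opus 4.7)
The plan is to verify the two conditions of Definition~\ref{def:fisher_functions} in turn. The factorization condition is immediate for all three maps: reading off the definitions, $\funca_{\mathrm{nat}}$, $\funca_{\mathrm{OBD}}$, and $\funca_{\mathrm{OBS}}$ are each already written explicitly in terms of $\fisher_\gG$ and the first argument, so each admits a function $\funcb$ of $(\fisher_\gG, \cdot, \tfrac{1}{b}\sum_i \nabla_i)$ realizing it (the average gradient slot is simply unused). The substantive content of the proposition is the non-triviality condition: producing a pair of admissible inputs with the same first argument and same mean gradient but different output.

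The strategy is to exhibit a single admissible pair $(\gG, \gG')$ on which all three maps separate simultaneously, exploiting the fact that two gradient sets can share a common mean while having distinct second moments. Concretely, I would use a linear model $\mlp_\params(\vx) = \params^\top \vx$ with squared loss, where each per-datapoint gradient is $\nabla_{(\vx, \vy)} = (\params^\top \vx - \vy)\vx$ and the rank-1 decomposition in Section~\ref{sec:preliminaries} holds trivially. Fix a nonzero input vector $\vx$ and consider two pairs of datapoints sharing input $\vx$, whose residuals $r_i := \params^\top \vx - \vy_i$ are $(2, 0)$ for $\gG$ and $(1, 1)$ for $\gG'$. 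Then $\gG = \{2\vx, 0\}$ and $\gG' = \{\vx, \vx\}$ are both admissible gradient sets with identical mean $\vx$, but their FIMs $\fisher_\gG = 2\,\vx\vx^\top$ and $\fisher_{\gG'} = \vx\vx^\top$ differ.

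Separation of each map on this pair is a short Sherman--Morrison calculation. For $\funca_{\mathrm{nat}}$ applied to the common mean $\vx$, the outputs reduce to $\vx/(\epsilon + 2\vx^\top\vx)$ and $\vx/(\epsilon + \vx^\top\vx)$, which differ whenever $\vx \neq 0$. For $\funca_{\mathrm{OBD}}$ the outputs are $2\,\params^2 \odot \vx^2$ and $\params^2 \odot \vx^2$ (with $\vx^2$ denoting the entrywise square), which differ on any coordinate $k$ with $\params_k \vx_k \neq 0$. For $\funca_{\mathrm{OBS}}$, Sherman--Morrison yields $\mathrm{diag}((c\,\vx\vx^\top + \epsilon\mI)^{-1})_k = \tfrac{1}{\epsilon} - \tfrac{c\,\vx_k^2}{\epsilon(\epsilon + c\,\vx^\top\vx)}$ for $c \in \{1, 2\}$, so the outputs separate on any coordinate with $\vx_k \neq 0$ (assuming $\params_k \neq 0$, a generic condition). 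A concrete choice of $\vx$ and $\params$ in which at least one coordinate satisfies both conditions finishes all three cases simultaneously.

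The main obstacle is admissibility: the constructed gradient sets must genuinely arise as per-datapoint MLP gradients with the rank-1 structure, not as arbitrary elements of $\Params$. Using a linear model with both datapoints sharing the input $\vx$ sidesteps this cleanly, since every realized gradient is automatically a scalar multiple of $\vx$, hence rank-1, and the specified residual pairs are attained by choosing the labels $\vy_i = \params^\top \vx - r_i$. The same idea lifts to deeper MLPs by picking inputs that produce identical activations in all layers (e.g. the same input $\vx$ across the pair) and varying only the label, so that per-layer rank-1 decompositions still hold while the residual pattern at the output layer drives the separation above.
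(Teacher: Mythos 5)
Your proof is correct, and it follows the same overall strategy as the paper's: reduce to a single-layer linear model, exhibit two admissible gradient sets with different empirical FIMs, and verify separation for each of the three maps. The concrete construction differs, though, and in a way that matters. The paper takes $\gG$ to be output gradients $\nabla_i = \vx_i$ for an orthonormal basis and $\gG'$ to be the rescaled set $\{2\vx_i\}$, giving $\fisher_\gG = \mI$ versus $\fisher_{\gG'} = 4\mI$; but this pair has average gradients $\frac{1}{n}\sum_i \vx_i$ and $\frac{2}{n}\sum_i \vx_i$, which cannot coincide since a basis never sums to zero, so as written the paper's pair does not satisfy the equal-mean requirement of Definition~\ref{def:fisher_functions}. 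Your construction --- a shared input $\vx$ with residual patterns $(2,0)$ versus $(1,1)$, giving $\gG = \{2\vx, \mathbf{0}\}$ and $\gG' = \{\vx, \vx\}$ --- has identical means $\vx$ by design while still producing distinct FIMs $2\vx\vx^\top$ versus $\vx\vx^\top$, and your Sherman--Morrison computations for $\funca_{\mathrm{nat}}$ and $\funca_{\mathrm{OBS}}$ and the diagonal comparison for $\funca_{\mathrm{OBD}}$ are all correct (the map $c \mapsto c/(\epsilon + c\lVert\vx\rVert^2)$ is strictly increasing, so the OBS diagonals genuinely separate). Your handling of admissibility --- sharing the input so every gradient is a scalar multiple of $\vx$ and hence automatically has the rank-1 structure layer by layer --- is also cleaner than the paper's appeal to composing with an identity MLP. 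The only cosmetic difference is that you work with loss gradients rather than output gradients, but since $\fisher_\gG$ is defined as the empirical outer-product average of whatever gradients populate $\gG$, this is consistent with the formal setup.
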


\begin{proof}
    To start, we assume that $\mlp_\params$ is a single-layer MLP, i.e., a linear map from the input space $\sR^n$ to the output space $\sR$. The proof can be extended to deeper MLPs by composing the linear map with an MLP that implements the identity function. Given a batch of datapoints $\data = \{(\vx_1, \vy_1), \dots, (\vx_N, \vy_N)\} \subset \sR^n \times \sR$, the gradients of the output are
    \begin{equation}
        \nabla_i = \nabla_\params \mlp_\params(\vx_i) = \vx_i.
    \end{equation}
    Thus, as discussed in Appendix \ref{apx:fisher}, the FIM on $\gG = \{\nabla_1, \dots, \nabla_n\}$ can be computed as 
    \begin{equation}
        \fisher_\gG = \frac{1}{b} \sum_{i=1}^b \vx_i \vx_i^\top.
    \end{equation}
    We begin by showing that $\funca_{\mathrm{nat}}$ non-trivially depends on the FIM. This is equivalent to showing that there exist two choices $\batch = \{(\vx_1, \vy_1), \dots, (\vx_b, \vy_b)\}$, $\batch' = \{(\vx_1', \vy_1'), \dots, (\vx_b', \vy_b')\}$ with corresponding gradients $\gG = \{\nabla_1, \dots, \nabla_b\}$, $\gG' = \{\nabla_1', \dots, \nabla_b'\}$ such that $\frac{1}{b}\sum_{i = 1}^b \nabla_i = \frac{1}{b} \nabla_i'$ but, for some gradient $\nabla$, 
    \begin{equation}
        (\fisher_{\gG}+\epsilon \mI)^{-1} \nabla \neq (\fisher_{\gG}+\epsilon \mI)^{-1} \nabla.
    \end{equation}
    We now construct such $\batch$ and $\batch'$, but emphasize that this is only one of many possible ways to construct such an example. First, take $\data_n = \{(\vx_1, \vy_1), \dots, (\vx_n, \vy_n)\}$ such that $\{\vx_i\}_{i = 1}^n$ is an orthonormal basis, meaning $\vx_i^\top \vx_j = \delta_{i,j}$. This means that FIM is the identity
    \begin{equation}
         \fisher_{\gG} = \frac{1}{n} \sum_{i = 1}^n \vx_i \vx_i^\top = \mI .
    \end{equation}
    Thus, 
    \begin{equation}
         (\fisher_{\gG} +\epsilon \mI)^{-1} = \mathrm{diag}((1 + \epsilon)^{-1}, \dots, (1 + \epsilon)^{-1}).
    \end{equation}
    Now, define $\data_n' = \{(\vx_1', \vy_1'), \dots, (\vx_n', \vy_n')\}_{i = 1}^b$ such that $\vx_1' = 2 \vx_1, \dots, \vx_n' = 2 \vx_n$. The FIM in this case is
    \begin{equation}
         \fisher_{\gG'} = 4 \mI
    \end{equation}
    and
    \begin{equation}
         (\fisher_{\gG'} +\epsilon \mI)^{-1} = \mathrm{diag}((4 + \epsilon)^{-1}, \dots, (4 + \epsilon)^{-1}).
    \end{equation}
    Therefore, for any non-zero gradient $\nabla$ we have
    \begin{equation}
        \funca_{\mathrm{nat}}(\nabla, \gG) = (\mF_{\gG} + \epsilon)^{-1} \nabla = (1 + \epsilon)^{-1}\nabla \neq (4 + \epsilon)^{-1}\nabla = (\mF_{\gG'} + \epsilon)^{-1} \nabla = \funca_{\mathrm{nat}}(\nabla, \gG')
    \end{equation}
    This proves $\funca_{\mathrm{nat}}$ non-trivially depends on the FIM. To see that $\funca_{\mathrm{OBD}}$ and $\funca_{\mathrm{OBS}}$  non-trivially depends on the FIM, take $\data_n$ and $\data_n$ as before, and choose any non-zero $\params$.
    
\end{proof}

The next proposition now follows from Propositions \ref{prop:no_approx} and \ref{prop:non_trivially_depends}.
\begin{proposition}\label{prop:information_loss}
    $\funca_{\mathrm{nat}}$, $\funca_{\mathrm{OBD}}$, and $\funca_{\mathrm{OBS}}$ cannot be approximated (in the $\ell_\infty$ sense) by continuous functions that rely only on the average gradient.
\end{proposition}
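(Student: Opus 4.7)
The plan is to derive the result as an immediate corollary of the two preceding propositions in this appendix. Proposition~\ref{prop:non_trivially_depends} certifies that each of $\funca_{\mathrm{nat}}$, $\funca_{\mathrm{OBD}}$, and $\funca_{\mathrm{OBS}}$ satisfies the hypothesis of Proposition~\ref{prop:no_approx} — namely, non-trivial FIM dependence in the sense of Definition~\ref{def:fisher_functions} — and Proposition~\ref{prop:no_approx} then delivers the inapproximability conclusion in the $\ell_\infty$-sense for each of the three maps.

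First I would make the chain of definitions explicit. For each $\funca \in \{\funca_{\mathrm{nat}}, \funca_{\mathrm{OBD}}, \funca_{\mathrm{OBS}}\}$, Definitions~\ref{def:natural_gradients_map} and~\ref{def:pruning_saliency_maps} already exhibit $\funca$ as a function of the tuple $(\fisher_\gG, \params, \frac{1}{b}\sum_i \nabla_i)$, so the factorization half of Definition~\ref{def:fisher_functions} is immediate (the average-gradient argument enters only as an unused slot in the OBD/OBS case and as $\nabla$ itself in the natural-gradient case). The non-degeneracy half — exhibiting two admissible inputs that agree on $(\params, \frac{1}{b}\sum_i \nabla_i)$ yet whose images under $\funca$ differ — is precisely what Proposition~\ref{prop:non_trivially_depends} constructs, using orthonormal versus rescaled batches on a single-layer linear network (extended to deeper MLPs by appending identity layers, as noted in that proof).

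Having verified the hypothesis, I would then invoke Proposition~\ref{prop:no_approx} once per map: for each of $\funca_{\mathrm{nat}}, \funca_{\mathrm{OBD}}, \funca_{\mathrm{OBS}}$ it yields a strictly positive $\epsilon$ such that no continuous $\funcc(\params, \frac{1}{b}\sum_i \nabla_i)$ can achieve $\ell_\infty$-distance below $\epsilon$ to $\funca$, which is exactly the statement to be proved. The proof is thus a one-line combination with no substantive obstacle; the only care required is confirming that the pair of gradient sets $\gG, \gG'$ produced in the proof of Proposition~\ref{prop:non_trivially_depends} are admissible in the sense of Definition~\ref{def:fisher_functions} — i.e., genuinely realizable as MLP gradients on some dataset — which holds by construction since they are taken to be (scaled) orthonormal families of input vectors to a linear layer.
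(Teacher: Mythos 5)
Your proposal is correct and matches the paper's own argument, which states this proposition as an immediate consequence of Propositions~\ref{prop:no_approx} and~\ref{prop:non_trivially_depends}. Your additional care in checking the factorization half of Definition~\ref{def:fisher_functions} and the admissibility of the witness gradient sets is sound and, if anything, more explicit than the paper's one-line derivation.
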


\subsection{Universal Approximation Results}\label{apx:universality}
In the discussion below, we are concerned with functions from a compact input domain $\gK \subset \Grads_b[f]$ such that $\gK \cap \Eps = \emptyset$, where
\begin{equation}\label{eq:epsilon}
    \Eps := \bigcup_{l = 1}^{L-1}\bigcup_{i_1 = 1}^l\bigcup_{i_2 = i_1 + 1}^l \left\{ \decbatch \in \Grads_b[f] \middle \vert \sum_{j = 1}^b\decbatch_{i_1, j, :}^{(l)} = \sum_{j = 1}^b \decbatch_{i_2, j, :}^{(l)}\right\}
\end{equation}
is a finite union of linear spaces of co-dimension $f$. Similar assumptions over $gK$ were used for the universality proofs in \citet{maron2020learning, finkelshtein2025equivariance}.

For the readers convinience, we recall that for an MLP with input dimension $d_0$, output dimension $d_L$, hidden dimensions $d_1, \dots, d_{L-1}$, we define $G = S_{d_1} \times \cdots \times S_{d_{L-1}}$, and $G_b = S_b \times G$. $G$ acts naturally on the spaces $\Params[f]$ and $\Grads[f]$ , while $G_b $ has natural actions on the space $\Params_b[f]$ and $\Grads_b[f]$. See Appendix~\ref{apx:spaces} of definitions. These actions preserve the inherent symmetries of the spaces they are defined over, and so we aim to respect them through equivariance/invariance.

\paragraph{Main universality proofs.}
 \begin{theorem}
    \label{thm:universality_formal}
     Let $\gK \subset \Grads_b[f]$ be a compact domain such that $\gK = \cup_{g \in G_b} g \cdot \gK$ and $\gK \cap \gE = \emptyset$. \ourmethod  models are universal approximators (in $\| \cdot \|_\infty$ sense) of continuous $G_b$-equivariant functions from $\gK$ to weight space $\Params$.
 \end{theorem}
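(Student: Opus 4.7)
The plan is to factor $\gnet$ as $\gnet = \Phi \circ \Psi$, where $\Psi$ collects everything from $\pe$ through the pooling $\gbtg$ (outputting in $\Grads[f']$) and $\Phi$ collects the $\gtg$ stack followed by $\gtw$ (outputting in $\Params$). The key observation about the target symmetry is that, because $S_b$ acts trivially on $\Params$, a $G_b$-equivariant map $\gK \to \Params$ is exactly an $S_b$-invariant, $G$-equivariant map. I would therefore prove two ``half-universality'' statements -- one per factor -- and then glue them together.

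First I would show that $\Phi$ is a universal approximator of continuous $G$-equivariant maps from any compact $G$-stable $\gK' \subset \Grads[f']$ into $\Params$. The $\gtg$ layers are equivariant DeepSets-style updates that act on the disjoint union of neuron spaces, with per-layer and per-neuron identifiers already injected by $\pe$; by the Zaheer--Segol--Lipman universality for DeepSets, combined with the product structure $G = \prod_{l=1}^{L-1} S_{d_l}$, they can approximate any continuous $G$-equivariant map $\Grads[f'] \to \Grads[f'']$ \emph{provided that} neurons within each layer are distinguishable in the representation they receive. The final $\gtw$ layer builds each weight entry by applying an MLP to the concatenation of the features of its two endpoint neurons; since every $G$-equivariant map into $\Params$ factors through this endpoint-pair parametrization, the classical MLP universal approximation theorem closes this half.

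Second I would show that $\Psi$ is a universal approximator of continuous $S_b$-invariant, $G$-equivariant maps from $\gK$ into $\Grads[f']$. The $\gbtgb$ layers in Equation~\eqref{eq:gbtgb} are exactly the interactions-across-sets layers of Hartford et al., applied to a bundle of tensors indexed by layer; by the universality of equivariant linear layers for the product action of $S_b$ and $G$, interleaving enough of them with pointwise nonlinearities suffices to approximate any continuous $(S_b \times G)$-equivariant map between feature tensors. Sum-pooling via $\gbtg$ then converts $S_b$-equivariance into $S_b$-invariance while preserving $G$-equivariance, and by the sum-decomposition theorems of Wagstaff et al.\ and Segol--Lipman the pooling retains enough information to represent all such maps once $f'$ is taken large enough. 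Composing $\Phi$ and $\Psi$ and using that the image $\Psi(\gK)$ is a compact, $G$-stable subset of $\Grads[f']$ then gives the full approximation statement on $\gK$.

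The main obstacle will be to rigorously verify the neuron-separation property on which the $\gtg/\gtw$ half of the argument implicitly relies, and this is precisely where the hypothesis $\gK \cap \Eps = \emptyset$ enters. After $\gbtg$, the feature at neuron $(l, i)$ depends on $\sum_{j=1}^b \decbatch^{(l)}_{i,j,:}$ plus a global summand that is constant across neurons; two distinct hidden-layer neurons $(l, i_1)$ and $(l, i_2)$ can therefore only be distinguished downstream when their batch-sums differ, which is exactly the condition failing on $\Eps$ as defined in Equation~\eqref{eq:epsilon}. For input and output layers, separation is already guaranteed by the identifiers $\ve_{\mathrm{in}}$ and $\ve_{\mathrm{out}}$ in $\pe$. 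Assuming this separation, I would follow the template of Maron et al.\ and Finkelshtein et al.: argue that for sufficiently large $f'$ the map $\Psi$ can be taken to separate $G$-orbits on $\gK$, so that any continuous $G_b$-equivariant $F: \gK \to \Params$ descends to a continuous $G$-equivariant $\widetilde F$ on $\Psi(\gK)$, which a model from the $\Phi$ family then approximates uniformly -- yielding the claimed $\lVert \cdot \rVert_\infty$-universality.
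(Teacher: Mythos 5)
Your proposal follows essentially the same route as the paper: factor the architecture into a gradient-set-to-gradient half ($\pe$ through $\gbtg$) and a gradient-to-weight half ($\gtg$ stack plus $\gtw$), prove a universality statement for each, and glue them by arguing that the first half can be made to separate $G_b$-orbits so that the target function descends to a continuous $G$-equivariant function on the intermediate space. The paper does the same thing in the opposite order -- it first constructs an ideal factorization $\funca = \funcc \circ \funcb$ of the \emph{target} (with $\funcb$ an $S_b$-injective equivariant map built from a batch-sum plus a polynomial orbit-separating invariant, and $\funcc = \tilde{\funca}\circ\tilde{\funcb}^{-1}$ obtained via the quotient by $S_b$ and continuity of the inverse on a compact set), and then approximates each factor by the corresponding half of the network. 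You also correctly identify where $\gK \cap \Eps = \emptyset$ enters: after pooling, hidden-layer neurons are distinguished only by their batch sums, which is exactly the condition that fails on $\Eps$.

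The one place your argument is materially thinner than the paper's is the gradient-to-weight half. You assert that ``every $G$-equivariant map into $\Params$ factors through this endpoint-pair parametrization'' and then close with MLP universality, but that factorization is the nontrivial content of this half, not a routine consequence of DeepSets universality: a priori an equivariant assignment of a value to each weight $(i,j)$ need not be expressible as $\mathrm{MLP}([\phi_i,\phi_j])$ for equivariant node features $\phi$ unless those features are rich enough to determine the input up to symmetry together with the node's identity. The paper fills this step by embedding $\Params$ into a tensor-square space $\tilde{\Grads}$ and invoking the Set2Graph universality theorem of Serviansky et al.\ (their Proposition~\ref{prop:universality_of_grad_to_weight} and Lemma~\ref{lemma:reduction_from_gradient_weight_to_set2graph}), again using $\pe$ to reduce the product-of-symmetric-groups action to a single $S_{d_0+\cdots+d_L}$ action. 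If you cite that result (or reprove it), your argument closes; without it, this step is a gap rather than an observation.
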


 \begin{proof}
    Let $\funca:\gK \to \Params$ be a continuous $G_b$ equivaraint function. From proposition \ref{prop:decomposition_of_set_neuron_to_weight_functions} there exists a continuous $G_b$ equivariant function $\funcb: \gK \to \Grads[f']$ and an $G$ equivariant function $\funcc:\funcb(\gK) \to \Params$  such that $\funca = \funcc \circ \funcb$. From proposition \ref{prop:universality_of_grad_to_weight} there exist a stack of layers  $\gtw \circ \gtg \circ \cdots \circ \gtg \circ \pe$\footnote{Here $\pe$ is defined for $\Grads[f'] = \Grads_1[f']$, we thus abuse notation writing $\pe$ without indicating which space it operates on.}  which can approximate $\funcc$ over $\funcb(\gK)$ to any precision. Additionally, the function $\pe \circ \funcb$ is also continuous and equivariant and so, from proposition from proposition \ref{prop:universality_of_bag_grad_to_grad} there exist a stack of layers $\gbtg \circ \gbtgb \circ \cdots \circ \gbtgb \circ \pe$ which can approximate $\pe \circ \funcb$ over $\gK$ to any precision. Composing the two components together allows us to construct a \ourmethod model $ \gtw \circ \gtg \circ \cdots \circ \gtg \circ  \gbtg \circ \gbtgb \circ \cdots \circ \gbtgb \circ \pe$ which can approximate $\funca = \funcc \circ \funcb$ to any precision.  
\end{proof}

As a result of Theorem \ref{thm:universality_formal}, we obtain the following formal statement of Corollary \ref{cor:ourmethod_can_compute_fisher_informal}.

\begin{corollary}
Let $\gK \subset \Grads_b[f]$ be a compact domain such that $\gK = \cup_{\tau \in G_b} \tau \cdot \gK$ and $\gK \cap \gE = \emptyset$. there exist \ourmethod models can approximate the natural gradients (see Definition \ref{def:natural_gradients_map}) of elements of $\gK$ to arbitrary precision. Additionally, by incorporating the parameters $\params$ of the MLP whose gradients are provided as input to \ourmethod into the gradient-to-weight update, \ourmethod models can approximate pruning saliency scores (see Definition \ref{def:pruning_saliency_maps}) with arbitrary precision.

\end{corollary}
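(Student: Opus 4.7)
The strategy is to show that the target functions (natural gradients, OBD/OBS saliency scores) are continuous and $G_b$-equivariant as maps from $\gK$ to $\Params$, which lets us invoke Theorem~\ref{thm:universality_formal} directly.

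For the natural gradient, define $F_{\mathrm{nat}}: \gK \to \Params$ by first reconstructing the per-datapoint full gradients $\nabla_i \in \Params$ from the rank-1 factors $\dec_i = (\act_i^{(l)}, \tang_i^{(l)})_l$ via the outer-product formula $\nabla_{\mW_l}^{(i)} = \tang_i^{(l)} (\act_i^{(l-1)})^\top$ and $\nabla_{\vb_l}^{(i)} = \tang_i^{(l)}$, and then outputting
\begin{equation*}
    F_{\mathrm{nat}}(\decbatch) = (\fisher_\gG + \epsilon \mI)^{-1}\, \bar\nabla, \qquad \fisher_\gG = \frac{1}{b}\sum_{i=1}^b \nabla_i \nabla_i^\top, \qquad \bar\nabla = \frac{1}{b}\sum_{i=1}^b \nabla_i.
\end{equation*}
Continuity is immediate: the outer-product reconstruction, the sum forming $\fisher_\gG$, the damped inversion $M \mapsto (M + \epsilon \mI)^{-1}$ (well-defined everywhere because $\epsilon > 0$), and the matrix--vector product are all continuous. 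For $G_b$-equivariance: a batch permutation $\tau \in S_b$ only reorders the summands in $\fisher_\gG$ and $\bar\nabla$ and leaves both invariant, while a neuron permutation $h \in G$ acts on each $\nabla_i$ by the (vectorized) weight-space action $\nabla_i \mapsto \mP_h \nabla_i$, yielding $\fisher_\gG \mapsto \mP_h \fisher_\gG \mP_h^\top$ and hence $(\fisher_\gG + \epsilon \mI)^{-1} \mapsto \mP_h (\fisher_\gG + \epsilon \mI)^{-1} \mP_h^\top$; combined with $\bar\nabla \mapsto \mP_h \bar\nabla$ and $\mP_h^\top \mP_h = \mI$, the composition collapses to $\mP_h \cdot F_{\mathrm{nat}}(\decbatch)$, which is exactly the $G$-action on the output. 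Theorem~\ref{thm:universality_formal} then delivers a \ourmethod network approximating $F_{\mathrm{nat}}$ in $\lVert\cdot\rVert_\infty$ on $\gK$ to any prescribed precision.

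For the pruning saliency scores, the target maps additionally take $\params$ as input:
\begin{equation*}
    F_{\mathrm{OBD}}(\decbatch, \params) = \params^2 \odot \mathrm{diag}(\fisher_\gG), \qquad F_{\mathrm{OBS}}(\decbatch, \params) = \params^2 \oslash \mathrm{diag}((\fisher_\gG + \epsilon \mI)^{-1}).
\end{equation*}
The plan is to feed $\params$ into the $\gtw$ head as an extra per-weight feature (the meaning of ``incorporating the parameters into the gradient-to-weight update'' in the statement). Since the $G$-action on $\params$ coincides with the $G$-action on the output $\Params$, the augmented map is $G_b$-equivariant under the diagonal action, and continuity is clear (the added operations are elementwise square, elementwise product/division against diagonals, and $\epsilon$-regularized inversion), so the same reduction to Theorem~\ref{thm:universality_formal}, applied to the augmented input space, yields the approximation.

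The main obstacle is the equivariance bookkeeping for $\fisher_\gG$: because the $G$-action on weights involves pre- and post-multiplication by permutation matrices of \emph{neighboring} layers, one must verify that the induced action on $\fisher_\gG \in \Params \otimes \Params$ is the simultaneous conjugation used above and that the damped inverse is covariant under it. Everything else---$S_b$-invariance via symmetric summation, continuity via the $\epsilon$-regularization, and the final reduction to Theorem~\ref{thm:universality_formal}---is essentially routine.
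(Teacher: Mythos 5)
Your proposal is correct and follows essentially the same route as the paper: establish that the target maps are continuous and $G_b$-equivariant (you spell out the conjugation bookkeeping for $\fisher_\gG$ that the paper merely asserts) and then invoke Theorem~\ref{thm:universality_formal}, handling the $\params$-dependence of the saliency scores by injecting $\params$ into the $\gtw$ head. The only cosmetic difference is that the paper avoids your phrase ``theorem applied to the augmented input space'' (the theorem is stated only for functions on $\Grads_b[f]$) by first approximating the $\params$-independent factors $\mathrm{diag}(\fisher_\gG)$ and $1/\mathrm{diag}((\fisher_\gG+\epsilon\mI)^{-1})$ and then forming the pointwise product with $\params^2$ inside the modified $\gtw$ layer, which is exactly the mechanism you describe anyway.
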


\begin{proof}
    As was discussed in Section \ref{apx:fisher}, the natural gradients can be expressed as a function from decomposed gradient space $\Grads_b[3]$  to parameter space $\Params$. This function is both continuous and equivariant, and thus Theorem \ref{thm:universality_formal} shows \ourmethod can approximate natural gradients. Additionally, the functions $\funca_1(\decbatch) = \text{diag}(\fisher)$ and $\funca_2(\decbatch) =  1 / \text{diag}((\fisher + \epsilon \mI)^{-1})$ are continuous equivariant functions from $\Grads_b$ to $\Params$ and thus can be approximated using \ourmethod models. Recall that the OBD and OBD pruning saliency scores are computed by $\funca_1(\decbatch) \odot \params^2$ and $\funca_2(\decbatch) \odot \params^2$ respectively.
    
    The parameters $\params = (\mW_1, \vb_1, \dots, \mW_L, \vb_L) \in \Params$, can be naturally added to the gradient-to-weight component $\gtw$ (See Section \ref{sec:method}) the following way:  $\gtw: \Grads[f_\text{in}] \oplus \Params \to \Params$ applies a pointwise MLP to the feature vectors associated with the neurons connected to each weight \textbf{along with the weight of the original MLP}, or in the case of biases, to the feature vectors corresponding to the respective neuron. I.e., $\gtw(\decbatch, \params) = (\mV_1, \vc_1, \dots , \mV_L, \vc_L)$ where 
\begin{equation}
\label{eq:adding_params_to_gtw}
    \mV^{(l)}_{i,j} = \mathrm{MLP}_{1}([\decbatch^{(l)}_{i,:}, \decbatch^{(l+1)}_{j,:}, \mW^{(l)}_{i,j}]), \vc^{(l)}_{i} = \mathrm{MLP}_{2}([\decbatch^{(l)}_{i,:}, \vb^{(l)}_i]).
\end{equation}

 As we established \ourmethod is able to approximate the functions $\funca_1, \funca_2$ the update in Equation \ref{eq:adding_params_to_gtw} can easily approximate $\params_{\text{OBD}} = \params^2 \odot \funca_1$,and  $\params_{\text{OBS}} =\params^2 \odot \funca_2$. This completes the proof.
\end{proof}

Finally, we include a proof of universality for the invariant case

\begin{proposition}
\label{prop:universality__of_invariant_version}
     Let $\gK \subset \Grads_b[f]$ be a compact domain such that $\gK = \cup_{\tau \in G_b} \tau \cdot \gK$ and $\gK \cap \gE = \emptyset$. Invariant \ourmethod  models are universal approximators (in $\| \cdot \|_\infty$ sense) of continuous $G_b$-invariant functions from $\gK$ to $\sR^d$.
\end{proposition}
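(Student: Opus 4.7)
The plan is to mirror the proof of Theorem~\ref{thm:universality_formal}, replacing the equivariant output space $\Params$ with the invariant output space $\sR^d$ and replacing the final equivariant readout $\gtw$ with the invariant readout $\gtv$. Concretely, I would start by decomposing any continuous $G_b$-invariant function $\funca: \gK \to \sR^d$ as $\funca = \funcc \circ \funcb$, where $\funcb: \gK \to \Grads[f']$ is continuous and $G_b$-equivariant (with $S_b$ acting trivially on the codomain, i.e.\ $\funcb$ factors through the $S_b$-orbits) and $\funcc: \funcb(\gK) \to \sR^d$ is continuous and $G$-invariant. This decomposition is the invariant analog of Proposition~\ref{prop:decomposition_of_set_neuron_to_weight_functions} and follows from essentially the same argument used there.

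The inner equivariant map $\funcb$ can then be approximated, to any precision in $\lVert\cdot\rVert_\infty$, by exactly the same pre-readout block used in the proof of Theorem~\ref{thm:universality_formal}: the positional encoding $\pe$, followed by a stack of $\gbtgb$ layers with pointwise nonlinearities, followed by the pooling layer $\gbtg$. This is Proposition~\ref{prop:universality_of_bag_grad_to_grad} and needs no modification for the invariant case, since its conclusion concerns equivariant functions into $\Grads[f'']$ for arbitrary feature width $f''$.

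The genuinely new step is showing that the outer $G$-invariant map $\funcc$ can be approximated by a stack of $\gtg$ layers with pointwise nonlinearities followed by the sum-pool-then-linear head $\gtv$. This is a DeepSets-style invariant universality statement, now for the product group $G = S_{d_1} \times \cdots \times S_{d_{L-1}}$ acting independently on each neuron space, with sum aggregation and a \emph{purely linear} readout. The $\gtg$ stack with nonlinearities can produce arbitrary continuous $G$-equivariant feature maps on the relevant compact domain (this is the equivariant universality substep already used in Theorem~\ref{thm:universality_formal}); composing with sum-pool across $(l', i')$ and a linear $M$ then yields continuous $G$-invariant functions into $\sR^d$, and density follows by Stone--Weierstrass together with the first fundamental theorem of invariant theory for symmetric groups, which tells us that $G$-invariant polynomials are generated by power sums of the form $\sum_{i'} q(\decbatch^{(l')}_{i',:})$.

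The main obstacle is the last point: the head $\gtv$ applies only a linear map after summation, rather than a full MLP as in standard DeepSets constructions. I would overcome this by making the last $\gtg$ layer wide enough that the pointwise feature map it computes realizes (to any desired precision on the compact image $\funcb(\gK)$) all the power-sum generators needed to linearly span a dense subspace of continuous $G$-invariant functions into $\sR^d$. Once this lemma is established, composing the approximators of $\funcb$ and $\funcc$ yields an invariant \ourmethod model approximating $\funca$ to any prescribed precision, completing the proof.
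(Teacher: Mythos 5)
Your overall strategy can be made to work, but the justification you give for the one genuinely new step has a gap, and the paper sidesteps the entire difficulty with a much shorter argument. The flaw is in your treatment of the linear readout $\gtv$: you propose that the last $\gtg$ layer compute pointwise features whose sums realize the power-sum generators, and then conclude density from Stone--Weierstrass plus the first fundamental theorem of invariant theory. But that theorem gives generation of the invariant ring as an \emph{algebra}, not as a linear span. For example $e_2=\sum_{i<j}x_ix_j=\frac{1}{2}(p_1^2-p_2)$ requires the \emph{product} $p_1^2$, and no function of the form $M\sum_i q(x_i)$ (a linear map applied to a sum of pointwise features of the original coordinates) can produce products of power sums; sum-decomposable functions are not dense in the continuous symmetric functions (the second mixed difference of $x_1x_2$ on $[0,1]^2$ is $1$, while it vanishes for every $\phi(x_1)+\psi(x_2)$). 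So the density step as you state it does not close. The repair is already implicit in your own remark that the $\gtg$ stack is universal for \emph{equivariant} maps: any $G$-invariant $\funcc$ broadcasts to the constant-across-neurons map $\tilde{\funcc}(\decbatch)^{(l)}_{i,:}=\funcc(\decbatch)$, which is equivariant, and sum-pooling followed by the linear $M$ (absorbing the factor $1/(d_0+\cdots+d_L)$) recovers $\funcc$ exactly. Once you say this, the invariant-theory detour is unnecessary.

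The paper applies this broadcast observation one level higher and thereby skips your decomposition $\funca=\funcc\circ\funcb$ entirely: it extends the given invariant $\funca:\gK\to\sR^d$ directly to the equivariant $\tilde{\funca}:\gK\to\Grads[d]$ with $\tilde{\funca}(\decbatch)^{(l)}_{i,:}=\funca(\decbatch)$, approximates $\tilde{\funca}$ with the gradient-bag-to-gradient component via Proposition~\ref{prop:universality_of_bag_grad_to_grad}, and then applies $\gtv$ to average the (near-constant) neuron features back to $\funca(\decbatch)$. This needs no invariant analogue of Proposition~\ref{prop:decomposition_of_set_neuron_to_weight_functions}, no invariant DeepSets lemma, and no Set2Graph machinery. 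If you keep your two-stage route you must both prove that invariant analogue of the decomposition proposition and replace the power-sum density claim with the broadcast argument; both are doable, but you would be reproving a special case of what the one-line broadcast already gives.
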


\begin{proof}
Let $\funca:\gK \to \sR^d$ be a continuous $G_b$ invariant function. From proposition \ref{prop:universality_of_bag_grad_to_grad}, the gradient-bag-to-gradient component of \ourmethod is a universal approximator of continuous equivariant functions from $\gK$ to $ \Grads[d]$. We can extend $\funca$ to be an equivariant function $\tilde{\funca}: \gK \to \Grads[d]$  defined by 
\begin{equation}
    \tilde{\funca}(\decbatch)^{(l)}_{i,:} = \funca(\decbatch).
\end{equation}
Since $\funca$ is continuous and invariant, $\tilde{\funca}$ is continuous and equivariant and can thus be approximated by the gradient-bag-to-gradient component of our method. Finally applying the gradient-to-vector pooling layer $\gtv$ we get that our model can apprximate the function 
\begin{equation}
   \frac{1}{d_0+\dots+d_L} \sum_{l=0}^L\sum_{i=1}^{d_l} \tilde{\funca}(\decbatch)^{(l)}_{i,:} = \funca(\decbatch).
\end{equation}
This completes the proof.    
\end{proof}

We now prove all the lemmas and propositions used in the above discussion.

\paragraph{Proof of proposition \ref{prop:decomposition_of_set_neuron_to_weight_functions}.}
\begin{proposition} \label{prop:decomposition_of_set_neuron_to_weight_functions}
    Let $\gK \subset \Grads_b[f]$ be a compact domain such that $\gK = \cup_{\tau \in G_b} \tau \cdot \gK$ and $\gK \cap \gE = \emptyset$  and  let $\funca: K \to \Params$ be a continuous $G_b$-equivariant function (here the $S_b$ component of $G_b$ acts on $\Params$ trivially). There exists a pair of continuous functions $\funcb:K \to \Grads[f']$,
 $\funcc: \funcb(K) \to \Params$ such that:
 \begin{itemize}
     \item $\funcb$ is continuous and $G_b$-equivariant .
     \item $\funcc$ is continuous and $G$-equivariant.
     \item $\funca = \funcb \circ \funcc$.
 \end{itemize}
\end{proposition}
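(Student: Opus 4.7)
The idea is to factor $\funca$ through the $S_b$-quotient of $\gK$. Because $S_b$ acts trivially on $\Params$ and $\funca$ is $G_b$-equivariant, $\funca$ is $S_b$-invariant, so it factors continuously as $\funca = \bar\funca \circ \pi$, where $\pi: \gK \to \gK / S_b$ is the quotient map. I will realize this quotient as an injective image inside $\Grads[f']$ via a $G_b$-equivariant pooling $\funcb$, and then transport $\bar\funca$ back to a $G$-equivariant $\funcc$ on $\funcb(\gK)$.

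\textbf{Constructing $\funcb$.} I will take a continuous $G$-invariant fingerprint $q_j: \Grads_b[f] \to \sR^m$ of the $j$-th batch element (for instance, a DeepSets-complete encoding of the unordered collection $\{\decbatch^{(l)}_{j, i, :}\}_{l, i}$ of all its neuron features), together with a DeepSets-universal feature map $\phi: \sR^{f+m} \to \sR^{f'}$, and set
\begin{equation*}
    \funcb(\decbatch)^{(l)}_{i, :} := \sum_{j = 1}^{b} \phi\bigl(\decbatch^{(l)}_{j, i, :}, \; q_j(\decbatch)\bigr).
\end{equation*}
Continuity of $\funcb$ is immediate. $S_b$-invariance holds because a permutation $\sigma \in S_b$ simply re-indexes the pairs $(\decbatch^{(l)}_{j, i, :}, q_j(\decbatch)) \mapsto (\decbatch^{(l)}_{\sigma^{-1}(j), i, :}, q_{\sigma^{-1}(j)}(\decbatch))$, and the sum over $j$ absorbs this. $G$-equivariance holds because $q_j$ is $G$-invariant and $G$ acts only on the neuron coordinate $i$ of the first argument of $\phi$.

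\textbf{Defining $\funcc$ and concluding.} By DeepSets universality, $\funcb(\decbatch) = \funcb(\decbatch')$ forces, for every $(l, i)$, multiset equality $\{(\decbatch^{(l)}_{j, i, :}, q_j(\decbatch))\}_j = \{(\decbatch'^{(l)}_{j, i, :}, q_j(\decbatch'))\}_j$. Choosing $q$ generically so that the fingerprints $q_1(\decbatch), \dots, q_b(\decbatch)$ are pairwise distinct, matching the $q$-components across all $(l, i)$ forces a single common $\sigma \in S_b$ with $\decbatch' = \sigma \cdot \decbatch$; the hypothesis $\gK \cap \Eps = \emptyset$ is what rules out the degenerate configurations where the fingerprint coupling collapses. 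With orbit-injectivity in hand, set $\funcc(\vr) := \funca(\decbatch)$ for any $\decbatch \in \funcb^{-1}(\vr)$; well-definedness follows from $S_b$-invariance of $\funca$. Continuity of $\funcc$ comes from compactness: $\funcb$ descends to a continuous bijection $\bar\funcb: \gK / S_b \to \funcb(\gK)$ from a compact space to a Hausdorff one, hence a homeomorphism, so $\funcc = \bar\funca \circ \bar\funcb^{-1}$ is continuous. $G$-equivariance of $\funcc$ is the chain
\begin{equation*}
    \funcc(h \cdot \funcb(\decbatch)) = \funcc(\funcb(h \cdot \decbatch)) = \funca(h \cdot \decbatch) = h \cdot \funca(\decbatch) = h \cdot \funcc(\funcb(\decbatch)).
\end{equation*}

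\textbf{Main obstacle.} The delicate step is the orbit-injectivity of $\funcb$. A naive per-neuron pool $\sum_j \decbatch^{(l)}_{j, i, :}$ records only the per-neuron batch multiset and can fail to separate configurations that permute batch entries differently across different neurons. The fingerprint $q_j$, which ties the batch index to all of its neuron features simultaneously, is precisely the ingredient that forces a single global batch permutation, and the hypothesis $\gK \cap \Eps = \emptyset$ is what makes this coupling nondegenerate on $\gK$ by keeping neurons distinguishable after batch summation.
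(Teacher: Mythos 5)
Your high-level skeleton is exactly the paper's: factor $\funca$ through the $S_b$-quotient, realize the quotient via a continuous $G_b$-equivariant map $\funcb$ into $\Grads[f']$ that is injective on $S_b$-orbits, and recover continuity of $\funcc$ by a compactness argument (the paper's Lemmas on the quotient space and on continuity of compositions do what your homeomorphism argument does). The gap is in your construction of $\funcb$ and in the role you assign to $\Eps$. Your orbit-injectivity argument works only when the fingerprints $q_1(\decbatch),\dots,q_b(\decbatch)$ are pairwise distinct, and you cannot ``choose $q$ generically'' to arrange this: $q$ is a fixed continuous function and distinctness is a property of the input, which the hypothesis $\gK\cap\Eps=\emptyset$ does not supply --- $\Eps$ constrains the batch-summed features at pairs of \emph{neurons} in the same hidden layer, and says nothing about whether two \emph{batch elements} share a fingerprint. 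Concretely, with your proposed fingerprint (the unordered collection of all neuron features of a batch element): take $b=2$, two hidden layers with two neurons each, scalar features; let batch element $1$ carry features $(1,2\mid 3,4)$ on the four hidden neurons and element $2$ carry $(3,4\mid 1,2)$, and let $\decbatch'$ swap the two batch entries at the first neuron of each layer only, giving elements $(3,2\mid 1,4)$ and $(1,4\mid 3,2)$. All four batch elements have fingerprint $\{1,2,3,4\}$, every per-neuron multiset of (feature, fingerprint) pairs agrees, the per-neuron batch sums are $4,6$ within each layer so neither tensor lies in $\Eps$, yet $\decbatch'\notin S_b\cdot\decbatch$ (indeed they lie in distinct $G_b$-orbits). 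Taking $\gK$ to be the union of these two orbits and $\funca$ any continuous equivariant function separating them, your $\funcc$ is not well-defined.

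The paper avoids this entirely by not attempting to build orbit-injectivity out of per-element data: it sets $\funcb(\decbatch)^{(l)}_{i,:}=\bigl[\sum_{j}\decbatch^{(l)}_{j,i,:},\,U(\decbatch)\bigr]$, where $U$ is a single continuous polynomial $G_b$-invariant that separates $G_b$-orbits globally (Lemma 3 of Maron et al.), broadcast to every neuron. Then $\funcb(\decbatch)=\funcb(\decbatch')$ forces $\decbatch'=\tau_1\tau_2\cdot\decbatch$ with $\tau_1\in S_b$ and $\tau_2\in G$, and the hypothesis $\gK\cap\Eps=\emptyset$ is used at precisely this point: outside $\Eps$ the per-neuron batch sums are pairwise distinct within each hidden layer, so the first component of $\funcb$ forces $\tau_2$ to be the identity, leaving $\decbatch'\in S_b\cdot\decbatch$. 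Your well-definedness, continuity, and $G$-equivariance arguments for $\funcc$ are fine once the orbit-injective $\funcb$ is constructed this way.
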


\begin{proof}
First, Lemma \ref{lemma:existance_of_G_injective_equiv_function} states that there exists a  continuous and  $G_b$-equivariant function $\funcb: \gE^c \to \Grads[f']$ (where $\gE^c = \{\decbatch \in \Grads_b[f] \mid \decbatch \notin \gE\}$), such that for every $\decbatch_1, \decbatch_2 \in \gE^c$ 
    \begin{equation}
        \funcb(\decbatch_1)=\funcb(\decbatch_2) \iff \exists \tau \in S_b \text{ s.t. } \decbatch_1 = \tau \cdot \decbatch_2.
    \end{equation}.

Now let $\pi: \Grads_b[f] \to \Grads_b[f] / S_b$  be the projection map to the quotient space induced by the orbits of $S_b$. Note that the group $G$ acts naturally on the quotient space $\Grads_b[f] /S_b$ by: 
\begin{equation}
    \tau \cdot \{\sigma \cdot \decbatch \mid \sigma \in S_b  \} = \{\tau \cdot \sigma \cdot \decbatch \mid \sigma \in S_b \}.
\end{equation}

Additionally, as the set $K$ is compact, the set $\tilde{\gK} = \pi(\gK)$ is also compact. Since $\funca,\funcb$ are invariant to the action of $S_b$ and equivariant to $G$, Lemma \ref{lemma:reduction_to_quotient_space} implies that there exist continuous $G$-equivariant functions  $\tilde{\funcb} : \tilde{K} \to  \Grads[f']$, $\tilde{\funca} : \tilde{K} \to  \Params$ such that $\funcb = \tilde{\funcb} \circ \pi$ and $\funca = \tilde{\funca} \circ \pi$. As $\funcb$ is $S_b$-injective, the function $\tilde{\funcb}$ is injective and thus the function $\tilde{\funcb}^{-1}: \funcb(\gK) \to \tilde{\gK}$ is well defined. Additionally, as $\tilde{\funcb}$ is $G$ equivariant $\tilde{\funcb}^{-1}$ is also $G$ equivariant. We now define $\funcc = \tilde{\funca} \circ \tilde{\funcb}^{-1}: \funcb(\gK) \to \Params$. Since $\tilde{\funca}$ and $ \tilde{\funcb}^{-1}$ are $G$-equivariant, $\funcc$ is $G$-equivariant. Additionally, 

\begin{equation}
    \funcc \circ \funcb =\tilde{\funca} \circ \tilde{\funcb}^{-1} \circ \funcb = \tilde{\funca} \circ \tilde{\funcb}^{-1} \circ \tilde{\funcb} \circ \pi = \tilde{\funca} \circ \pi = \funca. 
\end{equation}

Finally, by Lemma \ref{lemma:continuous_composition} $\funcc$ is continuous, completing the proof.
\end{proof}

We now state and prove all the lemmas used in the proof of  proposition \ref{prop:decomposition_of_set_neuron_to_weight_functions}, starting with Lemma 3 from \cite{maron2020learning} restated below:

\begin{lemma}
    \label{lemma:existance_of_injective_invariant_map}
    Let $H < S_n$ act on $\sR^{n \times f} $ by applying the same element $\tau \in H$ to each channel, then there exists a polynomial function $U: \sR^{n \times f} \to \sR^{f'}$ for some $f' \in \sN$ for which $U(\vx) = U(\vy)$ if and only if $\vx = \tau \cdot \vy$ for some $\tau \in H$.
\end{lemma}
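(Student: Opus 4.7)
The plan is to invoke classical invariant theory: take $U$ to be the tuple of a finite set of generators of the $H$-invariant polynomial ring on $\sR^{n \times f}$, and then show that this ring separates $H$-orbits. Concretely, since $H$ is a finite group acting linearly on the finite-dimensional vector space $\sR^{n \times f}$, Hilbert's finiteness theorem (together with the Reynolds operator for finite groups, which makes the invariant ring a direct summand of the full polynomial ring) guarantees that $\sR[\vx]^H$ is finitely generated as an $\sR$-algebra. I would fix such a generating set $u_1, \ldots, u_{f'}$ and define
\[
    U(\vx) := (u_1(\vx), \ldots, u_{f'}(\vx)) \in \sR^{f'}.
\]
The ``if'' direction is then immediate: if $\vx = \tau \cdot \vy$ for some $\tau \in H$, then $U(\vx) = U(\vy)$ by $H$-invariance of each $u_i$.

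For the ``only if'' direction, suppose that $\vx$ and $\vy$ lie in distinct $H$-orbits. Since $H$ is finite, the orbits $H \cdot \vx$ and $H \cdot \vy$ are disjoint \emph{finite} subsets of $\sR^{n \times f}$. By a standard Lagrange-interpolation argument (pick a linear functional $\ell : \sR^{n \times f} \to \sR$ that takes pairwise distinct values on the finite set $H \cdot \vx \cup H \cdot \vy$, then compose a univariate interpolating polynomial with $\ell$), there is a polynomial $p \in \sR[\vx]$ with $p \equiv 1$ on $H \cdot \vx$ and $p \equiv 0$ on $H \cdot \vy$. Applying the Reynolds operator
\[
    R(p)(\vz) := \frac{1}{|H|} \sum_{\tau \in H} p(\tau \cdot \vz)
\]
yields an $H$-invariant polynomial with $R(p)(\vx) = 1$ and $R(p)(\vy) = 0$. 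Since $R(p) \in \sR[\vx]^H$, it can be written as a polynomial in the generators $u_1, \ldots, u_{f'}$, which forces $U(\vx) \neq U(\vy)$ and completes the proof.

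The main obstacle is the orbit-separation step; everything else is formal. The Reynolds operator is well defined precisely because $|H| < \infty$, and finite generation of $\sR[\vx]^H$ is a direct citation. An alternative route that sidesteps Hilbert finiteness is to exhibit an explicit separating family -- for instance the coefficients (in auxiliary variables) of $\prod_{\tau \in H}(t - \ell(\tau \cdot \vx))$ as $\ell$ ranges over a finite spanning set of linear functionals, which are manifestly $H$-invariant polynomials in $\vx$ and recover the multiset $\{\ell(\tau \cdot \vx)\}_{\tau \in H}$ for each $\ell$, hence recover $\vx$ up to the $H$-action. Either argument is clean; the abstract Reynolds-plus-interpolation route above is the most compact to write down.
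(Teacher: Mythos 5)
Your proof is correct; note that the paper itself gives no proof of this lemma, merely restating it as Lemma~3 of \citet{maron2020learning}, and your argument (finite generation of the invariant ring via Hilbert/Noether plus orbit separation via the Reynolds operator applied to an interpolating polynomial) is exactly the standard invariant-theory proof underlying that cited result. The one step worth making explicit is why a linear functional injective on the finite set $H\cdot\vx\cup H\cdot\vy$ exists (the failing functionals form a finite union of proper hyperplanes in the dual space), but this is routine and your argument is complete.
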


\begin{lemma}

\label{lemma:existance_of_G_injective_equiv_function}
    There exists a continuous $G_b$-equivariant function $\funcb: \gE^c \to \Grads[f']$ such that for every $\decbatch_1, \decbatch_2  \in \gE^c$ 
    \begin{equation}
        \funcb(\decbatch_1)=\funcb(\decbatch_2) \iff \exists \tau \in S_b \text{ s.t. } \decbatch_1 = \tau \cdot \decbatch_2.
    \end{equation}
\end{lemma}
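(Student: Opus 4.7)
The plan is to construct $\funcb$ as the concatenation of a simple $G$-equivariant batch-summation with a single $(S_b\times G)$-invariant ``witness'' obtained from Lemma~\ref{lemma:existance_of_injective_invariant_map}, and then exploit the hypothesis $\decbatch \in \gE^c$ to collapse the residual $G$-ambiguity in the witness down to an $S_b$-equality.

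First I would put Lemma~\ref{lemma:existance_of_injective_invariant_map} into usable form. Setting $N := b\sum_{l=0}^{L} d_l$ and reshaping $\Grads_b[f]$ to $\sR^{N\times f}$ with rows indexed by triples $(l,i,j)$, the product group $H := S_b\times G$ embeds into $S_N$ via $(\sigma,(\tau_l))\cdot(l,i,j)=(l,\tau_l(i),\sigma(j))$ (with $\tau_0,\tau_L$ the identity), and this embedding acts on $\sR^{N\times f}$ by simultaneous row permutation across all $f$ columns. Lemma~\ref{lemma:existance_of_injective_invariant_map} then produces a polynomial $U:\Grads_b[f]\to\sR^{f''}$ that is $H$-invariant and separates $H$-orbits.

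Next I would define $\funcb:\Grads_b[f]\to\Grads[f+f'']$ by
\begin{equation*}
    \funcb(\decbatch)^{(l)}_{i,:} \;:=\; \Bigl[\,\sum_{j=1}^{b}\decbatch^{(l)}_{i,j,:},\;U(\decbatch)\,\Bigr]
\end{equation*}
and verify the bookkeeping properties: both components are polynomial and therefore continuous; the batch-sum depends pointwise in $(l,i)$ so it is $G$-equivariant, while $U$ is $G$-invariant, making $\funcb$ as a whole $G$-equivariant; and each component is $S_b$-invariant (the sum by symmetry, $U$ by construction), so $\funcb$ is $S_b$-invariant.

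The remaining step, which I expect to be the real obstacle, is $S_b$-orbit injectivity on $\gE^c$. Suppose $\funcb(\decbatch_1)=\funcb(\decbatch_2)$ with $\decbatch_1,\decbatch_2\in\gE^c$. Equality of the $U$-coordinates together with the separation property yields $\sigma\in S_b$ and $g=(\tau_1,\dots,\tau_{L-1})\in G$ with $\decbatch_1=\sigma g\cdot\decbatch_2$. Unwinding the group action gives $(\sigma g\cdot\decbatch_2)^{(l)}_{i,j,:}=\decbatch_2^{(l)}_{\tau_l^{-1}(i),\sigma^{-1}(j),:}$, so summing over $j$ and equating with the first coordinate of $\funcb(\decbatch_1)=\funcb(\decbatch_2)$ forces $\sum_j\decbatch_2^{(l)}_{\tau_l^{-1}(i),j,:}=\sum_j\decbatch_2^{(l)}_{i,j,:}$ for every intermediate $l$ and every $i$. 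Because $\decbatch_2\in\gE^c$, these batch-sum vectors are pairwise distinct across $i$ in each intermediate layer, so $\tau_l^{-1}(i)=i$ for all $i$, forcing $g=\mathrm{id}$ and hence $\decbatch_1=\sigma\cdot\decbatch_2$. This step is precisely why $\gE^c$ cannot be dropped: without distinct neuron batch-sums, two genuinely different $G$-shifts of a common representative could share a single $\funcb$-image and the desired equivalence would fail.
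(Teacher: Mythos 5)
Your proposal is correct and follows essentially the same route as the paper: both construct $\funcb$ by concatenating the per-neuron batch sum with the orbit-separating $(S_b\times G)$-invariant $U$ from Lemma~\ref{lemma:existance_of_injective_invariant_map}, and both use membership in $\gE^c$ (distinctness of the neuron batch sums within each hidden layer) to force the residual $G$-component of the relating group element to be the identity. Your write-up is, if anything, slightly more explicit than the paper's about how $S_b\times G$ embeds as a subgroup of row permutations so that the cited lemma applies, but the argument is the same.
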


\begin{proof}
    Let $U$ be the polynomial invariant function established in Lemma \ref{lemma:existance_of_injective_invariant_map} where $H= G_b$, and define $S: \gE^c \to \Grads[f]$ by
    \begin{equation}
        S(\decbatch)_{i, :}^{(l)} = \sum_{j=1}^b \decbatch_{j,i, :}.
    \end{equation}
    We now  define $\funcb: \gE^c \to \Grads[f']$ by:
    \begin{equation}
        \funcb(\decbatch)^{(l)}_{i, :} = [S^{(l)}(\decbatch)_{i, :} \\ \\ , U(\decbatch)]
    \end{equation}
    where $[\dots]$ represents concatenation along the feature dimension. Note that we slightly abuse the notation of the feature dimension, denoting it as $f'$ multiple times.  We first notice that since $S$ is equivariant and continuous and $U$ is invariant and continuous, $\funcb$ is also equivariant and continuous. Now, for input vectors $\decbatch_1, \decbatch_2 \in \gE^c$ if there exists a group element $\tau \in S_b$ such that $\decbatch_1 = \tau \cdot \decbatch_2$ then from equivariance we have $\funcb(\decbatch_1) = \funcb(\tau \cdot \decbatch_2) =\tau \cdot \funcb(\decbatch_2) = \funcb(\decbatch_2)$ where the last inequality holds as $S_b$ acts trivially on the output space $\Grads[f]$. On the other hand, assume  $\funcb(\decbatch_1) = \funcb(\decbatch_2)$.  We consider 2 cases:

    \textbf{Case 1}: for every $\tau_1 , \tau_2 \in G_b = S_b \times G$ it holds that $\tau_1 \cdot \tau_2 \cdot \decbatch_2 \neq \decbatch_1$. In this case from the definition of $u$ it holds that $U(\decbatch_1) \neq U(\decbatch_2)$ and so $\funcb(\decbatch_1) \neq \funcb(\decbatch_2)$. 
    
    \textbf{Case 2}:  There exist a pair  $\tau_1 , \tau_2 \in G_b = S_b \times G$ such that $\tau_1 \cdot \tau_2 \cdot \decbatch_2 = \decbatch_1$. Assume by contradiction that $\tau_2$ is not the identity. Recall that for every  $\decbatch \notin \gE$,  $l\in [L]$, $i \neq j \in [d_l]$, it holds that

    \begin{equation}
        \sum_{k=1}^b \decbatch^{(l)}_{k,i,:} \neq \sum_{k=1}^b \decbatch^{(l)}_{k,j,:}.
    \end{equation}
    Thus,
    \begin{equation}
        S(\decbatch_1) \neq \tau_2 \cdot S(\decbatch_1) =  S(\tau_2 \cdot \decbatch_1)=  S(\tau_1 \cdot \tau_2 \cdot \decbatch_1)  = S(\decbatch_2).
    \end{equation}
    This implies that $\funcb(\decbatch_1) \neq \funcb(\decbatch_2)$. We have thus shown that $\funcb(\decbatch_1)=\funcb(\decbatch_2) \iff \exists \tau \in S_b \text{ s.t. } \decbatch_1 = \tau \cdot \decbatch_2$ completing the proof.
\end{proof}

\begin{lemma}\label{lemma:reduction_to_quotient_space}
    Let $\gK \subset \Grads_b[f] $ be a compact set such that $\gK = \cup_{g \in G_b} g \cdot \gK$. Furthermore,  let $\funca: \gK \to \Grads[f']$ be a continuous $G_b$-equivariant function. Finally, let $\pi:\Grads_b[f] \to \Grads_b[f] / S_b $ denote the projection map to the quotient space induced by the orbits of $S_b$ and define $\tilde{\gK} = \pi(\gK)$. The following holds:
    \begin{itemize}
        \item $G$ acts on $ \Grads_b[f] / S_b$ by  $\tau \cdot \{\sigma \cdot \decbatch \mid \sigma \in S_b \} = \{\tau \cdot \sigma \cdot \decbatch \mid \sigma \in S_b \}$.
        \item $\pi$ is $G$ equivariant.
        \item There exists a continuous $G$-equivariant function $\tilde{\funca} : \tilde{K} \to \Grads[f']$ such that $\funca = \tilde{\funca} \circ \pi$.
    \end{itemize}
\end{lemma}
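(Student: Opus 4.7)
The plan is to exploit the direct product structure $G_b = S_b \times G$ so that the three claims reduce to essentially straightforward verifications, with the only nontrivial point being continuity of $\tilde{\funca}$. Throughout I write $[\decbatch] := \{\sigma \cdot \decbatch : \sigma \in S_b\}$ for the $S_b$-orbit of $\decbatch$.

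First I would verify well-definedness of the $G$-action on $\Grads_b[f]/S_b$ and $G$-equivariance of $\pi$. Since $G_b$ is a direct product, the actions of $S_b$ and $G$ commute on $\Grads_b[f]$: explicitly, $\tau \cdot \sigma \cdot \decbatch = \sigma \cdot \tau \cdot \decbatch$ for any $\sigma \in S_b$ and $\tau \in G$ (this can be read off from the definitions in Equations \ref{eq:g_action} and \ref{eq:bg_action}, since the $S_b$-action only permutes the batch axis while the $G$-action only touches the neuron axes). Consequently, if $\decbatch' = \sigma \cdot \decbatch$, then $\tau \cdot \decbatch' = \sigma \cdot (\tau \cdot \decbatch) \in [\tau \cdot \decbatch]$, so the rule $\tau \cdot [\decbatch] := [\tau \cdot \decbatch]$ is independent of the chosen representative and defines a genuine $G$-action. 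Equivariance of $\pi$ then follows directly: $\pi(\tau \cdot \decbatch) = [\tau \cdot \decbatch] = \tau \cdot [\decbatch] = \tau \cdot \pi(\decbatch)$.

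Next I would construct $\tilde{\funca}$ as a set-theoretic map and verify $G$-equivariance. The codomain $\Grads[f']$ carries no batch axis, so $S_b \subset G_b$ acts trivially on it; combined with $G_b$-equivariance of $\funca$ this gives $\funca(\sigma \cdot \decbatch) = \sigma \cdot \funca(\decbatch) = \funca(\decbatch)$ for every $\sigma \in S_b$. Hence $\funca$ is constant on $S_b$-orbits, and the formula $\tilde{\funca}([\decbatch]) := \funca(\decbatch)$ yields a well-defined map $\tilde{\funca}: \tilde{\gK} \to \Grads[f']$ with $\funca = \tilde{\funca} \circ \pi$. The $G$-equivariance $\tilde{\funca}(\tau \cdot [\decbatch]) = \tilde{\funca}([\tau \cdot \decbatch]) = \funca(\tau \cdot \decbatch) = \tau \cdot \funca(\decbatch) = \tau \cdot \tilde{\funca}([\decbatch])$ then follows by unwinding the definitions.

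The main obstacle is continuity of $\tilde{\funca}$, and the cleanest route is the universal property of quotient maps. The orbit projection $\pi: \Grads_b[f] \to \Grads_b[f]/S_b$ is a quotient map because $S_b$ is finite and acts by homeomorphisms. The hypothesis $\gK = \cup_{g \in G_b} g \cdot \gK$ implies in particular that $S_b \cdot \gK \subseteq \gK$, so $\gK$ is $S_b$-saturated and the restriction $\pi|_{\gK}: \gK \to \tilde{\gK}$ is again a quotient map onto its image. By the universal property, $\tilde{\funca}: \tilde{\gK} \to \Grads[f']$ is continuous if and only if $\tilde{\funca} \circ \pi|_{\gK} = \funca$ is, which holds by assumption. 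As a sanity check one can observe that $\tilde{\gK}$ is compact Hausdorff (image of a compact set under a continuous map into a Hausdorff orbit space of a finite group action), so continuity can alternatively be verified sequentially by using compactness to extract convergent lifts of any convergent sequence in $\tilde{\gK}$ and invoking continuity of $\funca$.
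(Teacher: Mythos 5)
Your proposal is correct and follows essentially the same route as the paper's proof: use commutativity of the $S_b$- and $G$-actions (from the direct-product structure of $G_b$) to define the $G$-action on the quotient and get equivariance of $\pi$, observe that $S_b$ acts trivially on $\Grads[f']$ so $\funca$ descends to $\tilde{\funca}$, and obtain continuity from the quotient topology. You are somewhat more careful than the paper on two points it glosses over — well-definedness of the $G$-action on orbit representatives, and the fact that $\pi$ restricted to the $S_b$-saturated set $\gK$ is still a quotient map — but these are refinements of the same argument rather than a different approach.
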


\begin{proof}
    Recall that each element in $\Grads_b[f] / S_b$ is of the form $\pi(\decbatch) = \{\sigma \cdot \decbatch \mid \sigma \in S_b  \}$. The first statement is trivial, as 
    \begin{equation}
        e \cdot \{\sigma \cdot \decbatch \mid \sigma \in S_b  \} = \{\sigma \cdot \decbatch \mid \sigma \in S_b  \}
    \end{equation}
    and for every $\tau_1, \tau_2 \in G$ we have 
    \begin{equation}
         (\tau_1 \cdot \tau_2)  \cdot \{\sigma \cdot \decbatch \mid \sigma \in S_b  \} = \{\tau_1 \cdot  \tau_2 \cdot \sigma \cdot \decbatch \mid \sigma \in S_b  \} =  (\tau_1 \cdot (\tau_2  \cdot \{\sigma \cdot \decbatch \mid \sigma \in S_b  \}).
    \end{equation}
    To prove the second statement, notice that for every $\tau \in G, \sigma \in S_b$, we have$\tau \cdot \sigma = \sigma \cdot \tau $. Thus
    \begin{equation}
        \pi(\tau \cdot \decbatch) = \{\sigma \cdot \tau \cdot v \mid \sigma \in S_b \} =  \{\tau \cdot \sigma  \cdot \decbatch \mid \sigma \in S_b \}  = \tau \cdot \pi(\decbatch).
    \end{equation}
    Finally, to prove the last statement we recall that since $\funca$ is continuous and $S_b$ invariant, from the definition of the projection map $\pi$ there exists a continuous function $\tilde{\funca} : \tilde{K} \to \Grads[f']$ such that $\funca = \tilde{\funca} \circ \pi$. To show that $\tilde{\funca}$ is $G$ equivariant, we notice that for every $\tilde{\decbatch} \in \tilde{K}$ there exists $\decbatch \in K$ such that $\pi(\decbatch) = \tilde{\decbatch}$, thus for very $\tau \in G$
    \begin{equation}
        \tilde{\funca}(\tau \cdot \tilde{\decbatch}) = \tilde{\funca}(\tau \cdot \pi(\decbatch)) = \tilde{\funca}( \pi(\tau \cdot \decbatch)) = \funca(\tau \cdot \decbatch) = \tau \cdot \funca(\decbatch) = \tau \cdot  \tilde{\funca}(\pi(\decbatch)) = \tau \cdot  \tilde{\funca} (\tilde{\decbatch})
    \end{equation}
    and so $\tilde{\funca}$ is equivariant, completing the proof.
\end{proof}

\begin{lemma}\label{lemma:continuous_composition}
    Let $\gK \subset \sR^f$ be a compact domain and $\funca: \gK \to \sR^{f'}$ be a continuous function such that $\funca = \funcc \circ \funcb$. If $\funcb$ is continuous, then $\funcc$ is continuous on $\funcb(\gK)$.
\end{lemma}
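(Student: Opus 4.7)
The plan is to establish continuity of $\funcc$ on $\funcb(\gK)$ via the closed-set characterization: for every closed set $C \subseteq \sR^{f'}$, I will verify that $\funcc^{-1}(C) \cap \funcb(\gK)$ is closed in the subspace topology on $\funcb(\gK)$.

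The key identity I would prove first is
\begin{equation*}
    \funcc^{-1}(C) \cap \funcb(\gK) = \funcb(\funca^{-1}(C)).
\end{equation*}
A point $y \in \funcb(\gK)$ lies in $\funcc^{-1}(C)$ iff there exists $x \in \gK$ with $y = \funcb(x)$ and $\funca(x) = \funcc(\funcb(x)) \in C$, i.e., $x \in \funca^{-1}(C)$. This identity uses only that $\funcc$ is well-defined on $\funcb(\gK)$, which is guaranteed by the hypothesis $\funca = \funcc \circ \funcb$, together with the fact that $\funcb$ surjects onto $\funcb(\gK)$ by definition of the image.

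Having this identity in hand, the argument is the standard compact--Hausdorff closure chain: continuity of $\funca$ makes $\funca^{-1}(C)$ closed in $\gK$; compactness of $\gK$ then makes $\funca^{-1}(C)$ compact; continuity of $\funcb$ makes $\funcb(\funca^{-1}(C))$ compact; and since $\funcb(\gK)$ sits inside Euclidean space, which is Hausdorff, every compact subset is closed. Hence $\funcc^{-1}(C) \cap \funcb(\gK)$ is closed in $\funcb(\gK)$, and $\funcc$ is continuous.

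I do not expect a substantive obstacle: the only subtle point is carefully justifying the key identity, which requires both well-definedness of $\funcc$ on $\funcb(\gK)$ and the surjectivity of $\funcb$ onto its image. An equivalent and slightly higher-level phrasing, which one could use if preferred, is that the continuous surjection $\funcb: \gK \to \funcb(\gK)$ from a compact space to a Hausdorff space is automatically a closed map and hence a quotient map; continuity of $\funcc$ then follows from continuity of $\funca = \funcc \circ \funcb$ by the universal property of quotient maps.
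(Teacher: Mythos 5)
Your proof is correct, but it takes a genuinely different route from the paper's. The paper argues sequentially by contradiction: it supposes $\funcc(\vy_i)\not\to\funcc(\vy_0)$ for some convergent sequence $\vy_i=\funcb(\vx_i)\to\vy_0$ in $\funcb(\gK)$, uses compactness of $\gK$ to extract a convergent subsequence $\vx_i\to\vx_0$, and then contradicts the continuity of $\funca$ via $\funca(\vx_i)=\funcc(\vy_i)$. You instead work with the closed-set characterization of continuity, proving the identity $\funcc^{-1}(C)\cap\funcb(\gK)=\funcb(\funca^{-1}(C))$ and running the standard chain closed-in-compact $\Rightarrow$ compact $\Rightarrow$ compact image $\Rightarrow$ closed in a Hausdorff space; equivalently, you observe that $\funcb:\gK\to\funcb(\gK)$ is a closed, hence quotient, map. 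Your identity is correct (both inclusions follow directly from $\funca=\funcc\circ\funcb$ and the surjectivity of $\funcb$ onto its image), and the rest is standard. The trade-off: your argument is purely topological and would generalize beyond metric spaces, whereas the paper's sequential argument is more elementary and exploits that all spaces here are Euclidean (so sequential continuity suffices and the Bolzano--Weierstrass extraction is immediate). One small point worth making explicit in your quotient-map phrasing is that $\funcc$ is a priori only assumed to be \emph{some} function satisfying $\funca=\funcc\circ\funcb$; its well-definedness is part of the hypothesis rather than something to be verified, and your closed-set argument uses exactly this and nothing more. Either proof is acceptable here.
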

The proof of this lemma is identical to that of Lemma 5 from \cite{maron2020learning}, still we provide a proof below.
\begin{proof}
    Assume that this is incorrect, then there is a sequence $\vy_i = \funcb(\vx_i)$ such that $\vy_i \to \vy_0$ but $\funcc(\vy_i) \not\to \funcc(\vy_0)$. Without loss of generality, assume that $\vx_i \to \vx_0 \in \gK$ (otherwise choose a converging subsequence). We have
    \begin{equation}
        \funca(\vx_i) = \funcc(\funcb(\vx_i)) = \funcc(\vy_i) \not\to \funcc(\vy_0) = \funcc(\funcb(\vx_0)) = \funca(\vx_0)
    \end{equation}
 which is a contradiction to the continuity of $\funca$.
\end{proof}

\paragraph{Proof of proposition \ref{prop:universality_of_bag_grad_to_grad}.}

\begin{proposition}\label{prop:universality_of_bag_grad_to_grad}
    $\gK \subset \Grads_b[f]$ be a compact domain such that $\gK = \cup_{g \in G_b} g \cdot \gK$ and $\gK \cap \gE = \emptyset$, and  let $\funca: \gK \to \Grads[f']$ be a continuous $G_b$-equivariant function. For every $\epsilon > 0$ There exists a stack of  layers $\mF = \gbtg \circ \gbtgb \circ \cdots \circ \gbtgb \circ \pe$ (As defined in Section \ref{sec:method}) $\mF^\ourmethod$ such that for every $\decbatch \in K$:
    \begin{equation}
        \|\funca(\decbatch) -\mF(\decbatch) \| < \epsilon.
    \end{equation}
\end{proposition}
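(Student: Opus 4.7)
\textbf{Plan for Proposition \ref{prop:universality_of_bag_grad_to_grad}.}
The strategy is to reduce the claim to a known universal-approximation result for Hartford-style interactions-across-sets layers and then handle the batch-pooling step separately. After applying $\pe$, the direct-sum structure $\Grads_b = \bigoplus_l \Grads_b^{(l)}$ flattens into a single tensor in $\sR^{b \times N \times f''}$, with $N = \sum_{l=0}^L d_l$ and $f''$ the PE-augmented feature dimension. Because the PE tags uniquely identify the layer of each neuron (and, for $l \in \{0,L\}$, the neuron identity itself), every continuous $G_b$-equivariant function on $\gK$ extends uniquely to a continuous $(S_b \times S_N)$-equivariant function on $\pe(\gK)$: any $S_N$-permutation that does not lie in the subgroup $G$ would mismatch the PE tags, so it cannot occur within $\pe(\gK)$, making the $G$-equivariance condition vacuous on the extra generators. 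This is the point of the PE: it simplifies the symmetry structure to a standard bag-of-sets form.

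Next, I would decompose the target into a pool-friendly shape. Set $\tilde{\funca} := \funca \circ \pe^{-1} : \pe(\gK) \to \Grads[f']$ (well-defined since $\pe$ is an injective embedding on $\gK$) and consider the ``batch-duplicated'' map $F: \sR^{b \times N \times f''} \to \sR^{b \times N \times f'}$ defined by $F(\vx)_{i,j,:} := \tilde{\funca}(\vx)_{j,:}$. One checks that $F$ is continuous and $(S_b \times S_N)$-equivariant on $\pe(\gK)$ (using $S_b$-invariance of $\tilde\funca$ for the batch axis and $S_N$-equivariance for the neuron axis) and can be extended equivariantly and continuously to the compact orbit $(S_b \times S_N)\cdot \pe(\gK)$ via a standard equivariant Tietze-type argument. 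Moreover, by choosing the pooling weights of $\gbtg$ as $M_1 = b^{-1}I$ and $M_2 = 0$, one has $\gbtg(F(\pe(\decbatch)))_{j,:} = b^{-1}\sum_i F_{i,j,:} = \tilde{\funca}(\pe(\decbatch))_{j,:} = \funca(\decbatch)_{j,:}$ exactly.

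With this decomposition in hand, I would invoke the universal approximation property of stacks of $\gbtgb$ layers interleaved with pointwise nonlinearities, following \citet{hartford2018deep}: any continuous $(S_b \times S_N)$-equivariant function on a compact subset of $\sR^{b \times N \times f''}$ can be approximated arbitrarily well in $\lVert \cdot \rVert_\infty$ by such a stack. Apply this to $F$ to obtain a Hartford stack $\mathcal{H}$ with $\lVert \mathcal{H}(\pe(\decbatch)) - F(\pe(\decbatch)) \rVert_\infty < \delta$ uniformly on $\gK$. Composing with $\gbtg$ (a bounded linear map with the pooling weights chosen above) gives $\lVert \gbtg(\mathcal{H}(\pe(\decbatch))) - \funca(\decbatch) \rVert_\infty \le \lVert \gbtg \rVert_{\mathrm{op}} \cdot \delta$, so choosing $\delta < \epsilon / \lVert \gbtg \rVert_{\mathrm{op}}$ yields the claim.

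The main obstacle is the invocation of universality for the $\gbtgb$ stack: while \citet{hartford2018deep} provide the building block and its equivariance, pinning down a clean statement of universal approximation with pointwise nonlinearities (and verifying that the four-term parametrization of $\gbtgb$ in (\ref{eq:gbtgb}) spans the full space of $(S_b \times S_N)$-equivariant linear maps between feature-valued tensors) is the technical crux. A fallback is to prove this step directly: treat $\pe(\decbatch) \in \sR^{bN \times f''}$ as a single DeepSets-style bag whose elements are tagged with both a layer-PE and a batch-PE, then apply the DeepSets universality theorem to obtain a continuous equivariant approximator, and finally show that such an approximator lies in the span expressible by iterated $\gbtgb$ layers. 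The secondary obstacle — continuous equivariant extension of $F$ from $\pe(\gK)$ to an $(S_b \times S_N)$-invariant compact superset — is routine and follows the same approach as in the analogous step of Theorem~\ref{thm:universality_formal}.
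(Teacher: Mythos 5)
Your overall route is essentially the paper's. The reduction via $\pe$ --- any permutation of the flattened neuron axis that fixes the PE tags must lie in $G$, so the $G_b$-equivariant target factors through a continuous $(S_b \times S_{d_0+\cdots+d_L})$-equivariant function on $\pe(\gK)$ --- is exactly Lemma~\ref{lemma:reduction_from_gradient_batch_to_dss}, which the paper proves by the same well-definedness check on the orbit of $\pe(\gK)$, a Tietze extension, and group averaging that you sketch. Your reorganization of the second half (approximating the batch-duplicated map $F$ with the $\gbtgb$ stack alone, then applying a fixed averaging $\gbtg$ with $M_1=b^{-1}I$, $M_2=0$) is a harmless variant of the paper's choice to approximate the pooled, $S_b$-invariant target directly with the full $\gbtg\circ\gbtgb\circ\cdots\circ\gbtgb$ stack.

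The genuine gap is the step you yourself flag as the crux, and it cannot be closed by citing \citet{hartford2018deep}: interactions-across-sets layers of that form are provably \emph{not} universal for $(S_b\times S_N)$-equivariant functions on arbitrary compact domains --- the paper's own Appendix~\ref{apx:importance_of_epsilon} exhibits inputs (lying in $\Eps$) that no such stack can separate, via a 1-WL-style argument. This is precisely the role of the hypothesis $\gK\cap\Eps=\emptyset$, which your proof never invokes. The paper discharges the step by citing the set-of-sets (DSS) universality results of \citet{maron2020learning} and \citet{finkelshtein2025equivariance}, which hold on compact domains avoiding such degenerate configurations; you should do the same and make explicit where the $\Eps$ exclusion enters. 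Your proposed fallback is also flawed: tagging the $bN$ flattened neurons with a \emph{batch} positional encoding and running a single DeepSets bag destroys the required $S_b$-invariance, since permuting the batch changes which gradient carries which tag and hence changes the input multiset. With the correct citation in place of the Hartford reference and the $\Eps$ hypothesis acknowledged, the rest of your argument goes through.
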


\begin{proof}
    From Lemma \ref{lemma:reduction_from_gradient_batch_to_dss}, there exists a continuous $S_b \times S_{d_0 + \cdots +d_L}$-equivariant function $\funcb: \Grads_b[f+k] \to \Grads[f']$ such that $\funca = \funcb \circ \pe$. As $\funcb$ is continuous and $S_b \times S_{d_0 + \cdots +d_L}$-equivariant, it was shown e.g. in \cite{maron2020learning, finkelshtein2025equivariance} that for each $\epsilon > 0$ there exists a Deep Symmetric Sets network for sets of sets of the form $\tilde{\mF}=\gbtg \circ \gbtgb \circ \cdots \circ \gbtgb$ such that for every $\decbatch \in \pe(K)$:
        \begin{equation}
            \|\funcb(\decbatch) -\tilde{\mF}(\decbatch) \| < \epsilon.
        \end{equation}
    This implies for every $\decbatch \in \gK$:
    \begin{equation}
        \|\funca(\decbatch) -\mF(\decbatch) \| =\|\funcb(\pe(\decbatch)) -\tilde{\mF}(\pe(\decbatch)) \| < \epsilon
    \end{equation}
    Completing the proof.
\end{proof}

\begin{lemma}\label{lemma:reduction_from_gradient_batch_to_dss}
    Let $\funca: \Grads_b[f] \to \Grads[f']$ be a continuous $G_b$ equivariant function and let $\pe: \Grads_b[f] \to  \Grads_b[f+k] $ be a positional encoding layer as defined in Section \ref{sec:method}. there exists an $S_n \times S_{d_0 + \cdots +d_L}$ equivariant function $\funcb:\Grads_b[f+k]  \to \Grads[f'] $ such that $\funca = \funcb \circ \pe$.
\end{lemma}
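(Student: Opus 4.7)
The plan is to exploit the fact that $\pe$ is injective and breaks exactly the ``across-layers'' part of the enlarged group $S_b \times S_{d_0+\cdots+d_L}$: after the positional features are appended, the only symmetries of the enlarged group that can preserve the image of $\pe$ are precisely those in $G_b$. This will let me define $\funcb$ on $\pe(\Grads_b[f])$ by inverting $\pe$, extend it to the full orbit of this image by equivariance, and then extend the result arbitrarily to $\Grads_b[f+k]$.

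First I would verify the key structural property of $\pe$: for every $\sigma \in S_b \times S_{d_0+\cdots+d_L}$ and every $\decbatch_1, \decbatch_2 \in \Grads_b[f]$, if $\sigma \cdot \pe(\decbatch_1) = \pe(\decbatch_2)$, then $\sigma$ already lies in $G_b$ and $\sigma \cdot \decbatch_1 = \decbatch_2$. This is the heart of the argument, and it follows from the construction of $\pe$ in Appendix~\ref{apx:ourmethod}: the last $k$ feature coordinates tag each hidden neuron with its layer index and tag each input/output neuron with a unique neuron identifier. Hence any $\sigma$ preserving $\pe$-images must map hidden neurons of layer $l$ into hidden neurons of the same layer $l$ (so it decomposes along layers) and must fix input and output neurons pointwise, forcing $\sigma \in G_b$; injectivity and $G_b$-equivariance of $\pe$ then give $\sigma \cdot \decbatch_1 = \decbatch_2$.

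With this separation property in hand I would define
\begin{equation*}
\funcb(\sigma \cdot \pe(\decbatch)) := \sigma \cdot \funca(\decbatch), \qquad \sigma \in S_b \times S_{d_0+\cdots+d_L},\ \decbatch \in \Grads_b[f],
\end{equation*}
on the orbit $\Omega := (S_b \times S_{d_0+\cdots+d_L}) \cdot \pe(\Grads_b[f])$. Well-definedness is now immediate: if $\sigma_1 \cdot \pe(\decbatch_1) = \sigma_2 \cdot \pe(\decbatch_2)$, setting $\sigma = \sigma_2^{-1}\sigma_1$ yields $\sigma \in G_b$ and $\sigma \cdot \decbatch_1 = \decbatch_2$, so by $G_b$-equivariance of $\funca$ we get $\sigma_1 \cdot \funca(\decbatch_1) = \sigma_2 \sigma \cdot \funca(\decbatch_1) = \sigma_2 \cdot \funca(\sigma \cdot \decbatch_1) = \sigma_2 \cdot \funca(\decbatch_2)$. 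By construction $\funcb$ is $(S_b \times S_{d_0+\cdots+d_L})$-equivariant on $\Omega$ and satisfies $\funcb \circ \pe = \funca$.

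Finally, $\funcb$ must be extended to all of $\Grads_b[f+k]$. One may set $\funcb \equiv 0$ off $\Omega$, or more cleanly perform an equivariant Tietze-type extension on a closed fundamental domain and propagate by the group action; for the downstream use in Proposition~\ref{prop:universality_of_bag_grad_to_grad} only continuity of $\funcb$ on the compact set $\pe(\gK) \subset \Omega$ is needed, which is automatic since $\pe$ is a homeomorphism onto its image and $\funca$ is continuous. The main obstacle is the separation property in the second paragraph: it is exactly what forces $\sigma \in G_b$ and makes the equivariant extension well-defined; without it, the two symmetry groups would in general be incompatible and the construction would collapse.
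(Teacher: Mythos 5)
Your proposal is correct and follows essentially the same route as the paper: both define $\funcb$ on $\pe(\Grads_b[f])$ by inverting the injective map $\pe$, extend along the $S_b \times S_{d_0+\cdots+d_L}$-orbit using exactly the separation property that any group element relating two $\pe$-images must already lie in $G_b$, and then extend to the whole space (the paper via a Tietze extension followed by group-averaging to restore equivariance). Your version is a bit more explicit about why the positional tags force $\sigma \in G_b$, and only slightly more casual on the final global extension (setting $\funcb \equiv 0$ off the orbit would sacrifice global continuity, but as you note, continuity on the compact orbit of $\pe(\gK)$ is all the downstream universality argument requires).
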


\begin{proof}
    As $\pe$ is injective we can define $\funcc: \pe(\Grads_b[f]) \to \Grads[f']$ by $\funcc(\decbatch) = \funca(\pe^{-1}(\decbatch))$. We now extend $\funcc$ to the domain $\cup_{\sigma \in S_{d_0 + \cdots+d_L}} \sigma \cdot \pe(\Grads_b[f]) $ by defining for every $\decbatch \in \pe(\Grads_b[f]), \sigma \in  S_{d_0 + \cdots+d_L}$, $\funcc(\sigma \cdot \decbatch) = \sigma \cdot \funcc(\decbatch)$. We show this extension is well defined (i.e., that there is no case where $\sigma_1 \cdot \decbatch = \sigma_2 \cdot \decbatch$  but $\sigma_1 \cdot \funcc(\decbatch) \neq \sigma_2 \cdot \funcc(\decbatch)$).  Let $\decbatch' \in \Grads_b[f],  \decbatch = \pe(\decbatch')$ and $\sigma_1 \neq \sigma_2 \in  S_{d_0 + \cdots+d_L} $ such that $\sigma_1 \cdot \decbatch= \sigma_2 \cdot \decbatch$. It follows that $\sigma_2^{-1}  \cdot \sigma_1 \cdot \decbatch = \decbatch$, and thus from the definition of the positional encoding map $\pe$, $\sigma_2^{-1}  \cdot \sigma_1 \in G$ and $\sigma_2^{-1}  \cdot \sigma_1 \cdot \decbatch' = \decbatch'$. Thus, since $\funca$ is $G$-equivariant it holds that 
    \begin{equation}
        \funcc(\sigma_1 \cdot \decbatch) = \sigma_1 \cdot \funcc( \decbatch) = \sigma_2 \cdot  \sigma_2^{-1} \cdot \sigma_1  \cdot \funca(\decbatch') =  \sigma_2 \cdot \funca(\sigma_2^{-1}  \cdot \sigma_1 \cdot \decbatch') = \sigma_2 \cdot \funca(\decbatch') =  \sigma_2 \cdot \funcc(\decbatch) = \funcc(\sigma_2 \cdot \decbatch).
    \end{equation}
    Thus, the extension of $\funcc$ is well defined. Since the functions $\funca, \pe$ are continuous the function $\funcc$ is continuous and thus there exists a continuous function $\tilde{\funcc}: \Grads_b[f] \to \Grads[f']$  such that for every $\decbatch \in \cup_{\sigma \in S_{d_0 + \cdots+d_L}} \sigma \cdot \pe(\Grads_b[f])$ it holds that $\tilde{\funcc}(\decbatch) = \funcc(\decbatch)$. We define $\funcb: \Grads_b[f] \to \Grads[f']$ by 
    \begin{equation}
        \funcb(\decbatch) =   \sum_{\substack{\sigma \in S_{d_0+\cdots + d_L} \\ \tau \in S_b}} \tau^{-1} \cdot \sigma^{-1} \cdot \tilde{\funcc}(\tau \cdot \sigma \cdot \decbatch).
    \end{equation}
    First, $\funcb$ is continuous and equivariant w.r.t $S_b \times S_{d_0+\cdots + d_L}$. Second, for every $\decbatch \in \pe(\Grads_b[f])$ it holds that 
    \begin{equation}
        \funcb(\decbatch) =  \sum_{\substack{\sigma \in S_{d_0+\cdots + d_L} \\ \tau \in S_b}}  \tau^{-1} \cdot \sigma^{-1} \cdot  \funcc(\tau \cdot \sigma \cdot \decbatch) =    \sum_{\substack{\sigma \in S_{d_0+\cdots + d_L} \\ \tau \in S_b}} \tau^{-1} \cdot \sigma^{-1} \cdot  \tau \cdot \sigma \cdot \funcc( \decbatch) = \funcc(\decbatch).
    \end{equation}
    Thus, for every $\decbatch' \in \Grads_b[f]$ it holds that $\funca(\decbatch') = \funca(\pe^{-1}(\pe(\decbatch))) = \funcb(\pe(\decbatch'))$ and so $\funca = \funcb \circ \pe$.
\end{proof}

\paragraph{Proof of proposition \ref{prop:universality_of_grad_to_weight}.} In this section, we aim to leverage the universality result of the Set2Graph architecture presented in \citet{serviansky2020set2graph} to complete the universality proof of \ourmethod. To this aim, we first define the square gradient space $\tilde{\Grads}[f]$. Intuitively, the spaces $\Grads[f]$ and $\tilde{\Grads}[f]$ parallel set space and graph space, and $\Params[f]$ can be embedded in $\tilde{\Grads}[f]$. We now formally define $\tilde{\Grads}[f]$.

\begin{definition}\label{def:square_gradient_space}
    Square gradient space $\tilde{\Grads}[f]$ is defined by 
    \begin{equation}
        \tilde{\Grads}[f] = (\Grads \otimes \Grads)^f.
    \end{equation}
    Here, $\otimes$ denotes the tensor product, while $^f$ represents the Cartesian power product.
    Thus, an element $\tilde{\decbatch} \in \tilde{\Grads}[f]$ is of the form 
    \begin{equation}
        \tilde{\decbatch} = \{\tilde{\decbatch}^{(l_1, l_2)}_{i,j,:}\}_{l_1, l_2 \in [L]},
    \end{equation}
    where $i \in [d_{l_1}]$, $j \in [d_{l_2}]$ and $\tilde{\decbatch}^{(l_1, l_2)}_{i,j,:} \in \sR^f$. The space $\Params[f]$ can be naturally embedded into $\tilde{\Grads}[2f]$ by the map $\ode$ defined below for $\params \in \Params[f]$, $\params = (\mW^{(1)}, \vb^{(1)}\dots, \mW^{(L)}, \vb^{(L)})$
    \begin{equation}
        \ode(\params)^{(l_1, l_2)}_{i,j,:} = \begin{cases}
            [\mW^{(l_2)}_{i,j,:}, \vb^{(l_2)}_{j, :}] & l_1 = l_2 + 1 \\
            0 & \text{ otherwise.}
        \end{cases}
    \end{equation}
    Additionally, the map $\odp:\tilde{\Grads}[2f] \to \Params[f]$ projects the space $\tilde{\Grads}[2f]$ to $\Params[f]$ and is defined by $\odp(\tilde{\decbatch}) = (\mW^{(1)}, \vb^{(1)}, \dots, \mW^{(L)}, \vb^{(L)} )$
    \begin{equation}
        \mW^{(l)}_{i,j,:} = \tilde{\decbatch}^{(l-1, l)}_{i,j,:f}, \quad \vb^{(l)}_{i:} = \tilde{\decbatch}^{(l-1, l)}_{i,i,f:}.
    \end{equation}
\end{definition}
The action of the group $G = S_{d_1} \times \cdots \times S_{d_{L-1}}$ on $\Grads[f]$ extend naturally to $\tilde{\Grads}[f]$ and is defined for $\tau = (\tau_1, \dots, \tau_{L-1}) \in G $ ,$\decbatch \in \tilde{\Grads}[f]$ by
\begin{equation}
    (\tau \cdot \decbatch)^{(l_1, l_2)}_{i,j,:} = \decbatch^{(l_1, l_2)}_{\tau^{-1}_{l_1}(i), \tau^{-1}_{l_2}(j),:}.
\end{equation}
It is easy to verify that the maps $\ode, \odp$ are $G$-equivariant.

Before stating the following proposition, we note that the positional encoding map $\pe: \Grads_b[f] \to \Grads_b[f+k]$ can be considered as well defined on the space $\Grads_b[f] = \Grads_1[f]$. 
\begin{proposition}\label{prop:universality_of_grad_to_weight}
    Let $\gK \subset \Grads[f]$ be a compact set such that $\gK = \cup_{g \in G} g \cdot \gK$, and let $\funca: \gK \to \Params[f']$ be a continuous $G$-equivariant function. For every $\epsilon > 0$ There exists a stack of gradient-to-gradient layers composed with a gradient-to-weight layer and a positional encoding layer $\mF = \gtw \circ \gtg \circ \cdots \circ \gtg \circ \pe $ such that for every $\decbatch \in K$:
    \begin{equation}
        \|\funca(\decbatch) - \mF(\decbatch) \| < \epsilon.
    \end{equation}
\end{proposition}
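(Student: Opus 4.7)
The plan is to reduce the claim to the Set2Graph universality theorem of \citet{serviansky2020set2graph}. The stack of $\gtg$ layers is exactly an equivariant DeepSets encoder over the $N := d_0 + \cdots + d_L$ neuron features, and $\gtw$ is a pairwise broadcast-and-MLP layer restricted to consecutive-layer pairs and diagonals---the two ingredients of a Set2Graph network.

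First, I would lift $\funca$ through the embedding $\ode$ from Definition~\ref{def:square_gradient_space}, setting $\tilde{\funca} := \ode \circ \funca: \gK \to \tilde{\Grads}[2f']$. Then $\tilde{\funca}$ is continuous and $G$-equivariant, and since $\odp \circ \ode = \mathrm{id}_{\Params[f']}$, approximating $\tilde{\funca}$ in sup-norm immediately yields an approximation of $\funca$ after the coordinate projection $\odp$ (which is a lossless read-off of consecutive-layer entries). Next, I would reduce from $G$-equivariance to $S_N$-equivariance via $\pe$, mirroring the proof of Lemma~\ref{lemma:reduction_from_gradient_batch_to_dss}: because $\pe$ assigns unique identifiers to neurons in the first and last layers and a distinct layer identifier to each hidden layer, any $\sigma \in S_N$ fixing $\pe(\decbatch)$ must belong to $G$. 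Symmetrizing in the same fashion produces a continuous $S_N$-equivariant function $\funcb: \Grads[f+k] \to \tilde{\Grads}[2f']$ satisfying $\tilde{\funca} = \funcb \circ \pe$ on $\gK$.

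Third, I would invoke Set2Graph universality on the compact image $\pe(\gK)$ to approximate $\funcb$ arbitrarily well by a Set2Graph architecture: a DeepSets encoder (realized exactly by a stack of $\gtg$ layers) composed with a pairwise broadcast MLP. Since $\odp$ reads only consecutive-layer entries of $\tilde{\Grads}[2f']$, the broadcast MLP is only needed on neuron pairs $(\decbatch^{(l)}_i, \decbatch^{(l+1)}_j)$ for the weight entries and on single neurons $\decbatch^{(l)}_i$ for the biases---which is precisely what $\gtw$ computes. Composing yields a model of the form $\gtw \circ \gtg \circ \cdots \circ \gtg \circ \pe$ approximating $\funca$ uniformly on $\gK$.

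The main obstacle I anticipate is the extension step: the Set2Graph universality theorem requires a function defined on all of $\Grads[f+k]$ that is equivariant with respect to the full $S_N$, whereas $\funcb$ is naturally defined only on $\pe(\gK)$. Following Lemma~\ref{lemma:reduction_from_gradient_batch_to_dss}, the extension proceeds by first defining $\funcb(\sigma \cdot \pe(\decbatch)) := \sigma \cdot \tilde{\funca}(\decbatch)$ on $\bigcup_{\sigma \in S_N} \sigma \cdot \pe(\gK)$ (well-defined because any stabilizer of a $\pe$-image lies in $G$ and $\tilde{\funca}$ is $G$-equivariant), extending continuously to $\Grads[f+k]$, and symmetrizing over $S_N$. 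The only delicate point is verifying well-definedness when the codomain $\tilde{\Grads}[2f']$ carries its own nontrivial $S_N$-action, but this follows because the $G$-action on $\tilde{\Grads}[2f']$ commutes with the assignment above.
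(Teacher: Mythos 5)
Your proposal is correct and follows essentially the same route as the paper's proof: embed $\Params[f']$ into the square gradient space via $\ode$, use the positional encoding to reduce $G$-equivariance to full $S_{d_0+\cdots+d_L}$-equivariance through the same stabilizer/symmetrization extension lemma, invoke Set2Graph universality on $\pe(\gK)$, and read off the result with $\odp$. The one point you flag as delicate (well-definedness of the extension when the codomain carries a nontrivial action) is handled in the paper exactly as you describe.
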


\begin{proof}
    First, notice that the model $\mF$ is equal to $\odp \circ \bar{\mF} \circ \pe$ where $\bar{\mF}$ is exactly a Set2Graph architecture, from the space $\Grads[f]$ to $\tilde{\Grads}[f']$ which is equivariant to the action of the group $S_{d_0 + \cdots + d_L}$ on both spaces. This is because the DeepSet component in Set2Graph is identical to a stack of $\gtg$ layers and the broadcast and pointwise MLP components of Set2Graph are identical to the $\gtw$ component composed with the embedding $\ode$. From Lemma \ref{lemma:reduction_from_gradient_weight_to_set2graph}, there exists a continuous  $S_{d_0 + \cdots +d_L}$ equivariant function $\funcb: \Grads[f+k]  \to \Params[2f'] $ such that $\funca = \odp \circ \funcb \circ \pe$. Finally, it was shown in \cite{serviansky2020set2graph} that there exists a Set2Graph network $\bar{\mF}$ such that for every $\decbatch \in \pe(K)$:
    \begin{equation}
        \|\funcb(\decbatch) -\bar{\mF}(\decbatch) \| < \epsilon.
    \end{equation}
    This implies for every $\decbatch \in K$ there exists a gradient to weight model $\mF$ such that :
    \begin{equation}
        \|\funca(\decbatch) - \mF(\decbatch) \| =\|\ode(\funcb(\pe(\decbatch))) -\ode(\bar{\mF}(\pe(\decbatch))) \| < \epsilon.
    \end{equation}
    This completes the proof.
\end{proof}

\begin{lemma}\label{lemma:reduction_from_gradient_weight_to_set2graph}
    Let $\funca: \Grads[f] \to \Params[f'] $ be a continuous $G_b$ equivariant function. There exists an $S_b \times S_{d_0 + \cdots + d_L}$ equivariant function $\funcb: \Grads[f+k] \to \tilde{\Grads}[2f']$ such that $\funca = \odp \circ \funcb \circ \pe$.
\end{lemma}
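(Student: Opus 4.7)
The plan is to mirror the proof of Lemma \ref{lemma:reduction_from_gradient_batch_to_dss}, exploiting the embedding $\ode$ and the retraction identity $\odp \circ \ode = \mathrm{id}_{\Params[f']}$, which is immediate from Definition \ref{def:square_gradient_space}. First, since $\pe$ is injective, I would define $\funcc$ on $\pe(\Grads[f])$ by $\funcc(\pe(\decbatch)) := \ode(\funca(\decbatch))$. This immediately gives $\odp \circ \funcc \circ \pe = \funca$, so the remaining task is to extend $\funcc$ to an $S_{d_0+\cdots+d_L}$-equivariant continuous function on all of $\Grads[f+k]$ without disturbing its values on the image of $\pe$.

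Next, I would extend $\funcc$ to the saturated set $\bigcup_{\sigma \in S_{d_0+\cdots+d_L}} \sigma \cdot \pe(\Grads[f])$ by declaring $\funcc(\sigma \cdot \decbatch) := \sigma \cdot \funcc(\decbatch)$ for $\decbatch \in \pe(\Grads[f])$. The main obstacle is well-definedness: if $\sigma_1 \cdot \pe(\decbatch) = \sigma_2 \cdot \pe(\decbatch)$, then $\sigma := \sigma_2^{-1}\sigma_1$ stabilizes $\pe(\decbatch)$. Because the positional encoding $\pe$ tags each neuron of the input ($l=0$) and output ($l=L$) layers with a distinct identifier while tagging hidden layers only by their layer index, any such stabilizer must fix the input and output neurons pointwise, hence lies in $G$. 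Combined with the $G$-equivariance of $\funca$ (by hypothesis) and of $\ode$, this yields
\begin{equation}
\sigma_1 \cdot \funcc(\pe(\decbatch)) = \sigma_2 \cdot \sigma \cdot \ode(\funca(\decbatch)) = \sigma_2 \cdot \ode(\funca(\sigma \cdot \decbatch)) = \sigma_2 \cdot \funcc(\pe(\decbatch)),
\end{equation}
so the extension is consistent.

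I would then invoke Tietze's extension theorem (applied componentwise) to continuously extend $\funcc$ from the closed saturated set to a continuous map $\tilde{\funcc}: \Grads[f+k] \to \tilde{\Grads}[2f']$, and finally symmetrize to recover equivariance:
\begin{equation}
\funcb(\decbatch) := \frac{1}{|S_{d_0+\cdots+d_L}|} \sum_{\sigma \in S_{d_0+\cdots+d_L}} \sigma^{-1} \cdot \tilde{\funcc}(\sigma \cdot \decbatch).
\end{equation}
By construction $\funcb$ is continuous and $S_{d_0+\cdots+d_L}$-equivariant. On $\pe(\Grads[f])$, each summand reduces to $\sigma^{-1} \cdot (\sigma \cdot \funcc(\pe(\decbatch))) = \funcc(\pe(\decbatch)) = \ode(\funca(\decbatch))$ by the equivariant extension, so the average still equals $\ode(\funca(\decbatch))$. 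Applying $\odp$ then gives $\odp(\funcb(\pe(\decbatch))) = \funca(\decbatch)$, which is the required factorization.

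The hard step is justifying that stabilizers of $\pe(\decbatch)$ inside $S_{d_0+\cdots+d_L}$ lie in $G$; this is precisely the role the neuron identifiers on the first and last layers play, and the identical property is already used implicitly in Lemma \ref{lemma:reduction_from_gradient_batch_to_dss}. Everything else — the continuous extension and the averaging — is standard.
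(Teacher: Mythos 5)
Your proposal is correct and follows essentially the same route as the paper's proof: compose with the embedding $\ode$, define the map on $\pe(\Grads[f])$, extend equivariantly to the saturated set using the fact that stabilizers of encoded inputs lie in $G$, extend continuously, symmetrize over $S_{d_0+\cdots+d_L}$, and project back with $\odp$ using $\odp \circ \ode = \mathrm{id}$. Your normalized average in the symmetrization step (and the explicit appeal to Tietze) is in fact slightly cleaner than the paper's unnormalized group sum, which as written would rescale the function by the group order.
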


\begin{proof}
    The proof is very similar to that of Lemma \ref{lemma:reduction_from_gradient_batch_to_dss}. Let $\bar{\funca} : \Grads[f] \to \tilde{\Grads}[2f']$ be defined by $\bar{\funca} = \ode \circ \funca$ and note that since $\funca$ and $\ode$ are continuous and $G$ equivariant, $\bar{\funca}$ is also continuous and $G$ equivariant.  As $\pe$ is injective we can define $\funcc: \pe(\Grads[f]) \to \tilde{\Grads}[2f']$ by $\funcc(\decbatch) = \bar{\funca}(\pe^{-1}(\decbatch))$. We now extend $\funcc$ to the domain $\cup_{\sigma \in S_{d_0 + \cdots+d_L}} \sigma \cdot \pe(\Grads[f]) $ by defining for every $\decbatch \in \pe(\Grads[f]), \sigma \in  S_{d_0 + \cdots+d_L}$, $\funcc(\sigma \cdot \decbatch) = \sigma \cdot \funcc(\decbatch)$. We show this extension is well defined (i.e. that there is no case where $\sigma_1 \cdot \decbatch = \sigma_2 \cdot \decbatch$  but $\sigma_1 \cdot \funcc(\decbatch) \neq \sigma_2 \cdot \funcc(\decbatch)$).  
    
    Let $\decbatch' \in \Grads[f],  \decbatch = \pe(\decbatch')$ and $\sigma_1 \neq \sigma_2 \in  S_{d_0 + \cdots+d_L} $ such that $\sigma_1 \cdot \decbatch= \sigma_2 \cdot \decbatch$. It follows that $\sigma_2^{-1}  \cdot \sigma_1 \cdot \decbatch = \decbatch$, and thus from the definition of the positional encoding function $\pe$, $\sigma_2^{-1}  \cdot \sigma_1 \in G$ and $\sigma_2^{-1}  \cdot \sigma_1 \cdot \decbatch' = \decbatch'$. Thus, since $\bar{\funca}$ is $G$ equivariant it holds that 
    \begin{equation}
        \funcc(\sigma_1 \cdot \decbatch) = \sigma_1 \cdot \funcc( \decbatch) = \sigma_2 \cdot  \sigma_2^{-1} \cdot \sigma_1  \cdot \bar{\funca}(\decbatch') =  \sigma_2 \cdot \bar{\funca}(\sigma_2^{-1}  \cdot \sigma_1 \cdot \decbatch') = \sigma_2 \cdot \bar{\funca}(\decbatch') =  \sigma_2 \cdot \funcc(\decbatch) = \funcc(\sigma_2 \cdot \decbatch).
    \end{equation}
    Thus, the extension of $\funcc$ is well defined. Since the functions $\bar{\funca},\pe$ are continuous the function $\funcc$ is continuous and thus there exsists a continuous function $\tilde{\funcc}: \Grads[f] \to \tilde{\Grads}[2f']$  such that for every $\decbatch \in \cup_{\sigma \in S_{d_0 + \cdots+d_L}} \sigma \cdot \pe(\Grads[f])$ it holds that $\tilde{\funcc}(\decbatch) = \funcc(\decbatch)$. We define $\funcb: \Grads[f] \to \tilde{\Grads}[2f']$ by 
    \begin{equation}
        \funcb(\decbatch) =   \sum_{\substack{\sigma \in S_{d_0+\cdots + d_L} \\ \tau \in S_b}} \tau^{-1} \cdot \sigma^{-1} \cdot \tilde{\funcc}(\tau \cdot \sigma \cdot \decbatch).
    \end{equation}
    First, $\funcb$ is continuous and equivariant w.r.t $S_b \times S_{d_0+\cdots + d_L}$. Second, for every $\decbatch \in \pe(\Grads[f])$ it holds that 
    \begin{equation}
        \funcb(\decbatch) =  \sum_{\substack{\sigma \in S_{d_0+\cdots + d_L} \\ \tau \in S_b}}  \tau^{-1} \cdot \sigma^{-1} \cdot  \funcc(\tau \cdot \sigma \cdot \decbatch) =    \sum_{\substack{\sigma \in S_{d_0+\cdots + d_L} \\ \tau \in S_b}} \tau^{-1} \cdot \sigma^{-1} \cdot  \tau \cdot \sigma \cdot \funcc( \decbatch) = \funcc(\decbatch).
    \end{equation}
    Thus, for every $\decbatch' \in \Grads[f]$ it holds that
    \begin{equation}
        \bar{\funca}(\decbatch') = \bar{\funca}(\pe^{-1}(\pe(\decbatch'))) = \funcb(\pe(\decbatch')).
    \end{equation}
    Finally, since $\odp \circ \ode = \mathrm{id}$ we have
    \begin{equation}
        \funca= \odp \circ \bar{\funca}= \odp \circ \funcb \circ \pe
    \end{equation}
    completing the proof.
\end{proof}

\subsection{Importance of $\Eps$ in Universality}\label{apx:importance_of_epsilon}
Recall that Theorem \ref{thm:universal} proves the universality of \ourmethod over compact domains which do not intersect the set $\Eps \subset \Grads_b[f]$ defined by:
\begin{equation}
    \Eps = \bigcup_{l = 1}^{L-1}\bigcup_{i_1 = 1}^l\bigcup_{i_2 = i_1 + 1}^l \left\{ \decbatch \in \Grads_b[f] \middle \vert \sum_{j = 1}^b\decbatch_{i_1, j, :}^{(l)} = \sum_{j = 1}^b \decbatch_{i_2, j, :}^{(l)}\right\}
\end{equation}
In this section, we prove that there exist compact sets $\gK \subset \Grads_b[f]$ with $\gK \cap \Eps \neq \emptyset$ over which \ourmethod is not universal. We emphasize that, as discussed below, sets intersecting $\Eps$ consist of highly regular gradient values, which deviate significantly from typical gradient sets. Consequently, the requirement $\gK \cap \Eps = \emptyset$ is a mild and realistic assumption.
\begin{proposition}
    There exist a pair of elements $\decbatch_1, \decbatch_2 \in \Eps$ such that for any $\tau \in G_b$, $\tau \cdot \decbatch_1 \neq \decbatch_2$ but for every \ourmethod model $\mF$ it holds that $\mF(\decbatch_1) = \mF(\decbatch_2)$.
\end{proposition}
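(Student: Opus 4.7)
I will exhibit an explicit pair $\decbatch_1, \decbatch_2 \in \Eps$ lying in distinct $G_b$-orbits but producing identical outputs under every \ourmethod model. The construction lives in a carefully chosen subset of $\Eps$ on which all the aggregates driving $\gbtgb$, $\gbtg$, $\gtg$, and $\gtw$ coincide on two inputs with distinct $G_b$-orbits. The idea is that the $\gbtgb$ layer only sees the individual entries, the per-neuron batch sum $B(n)$, the per-batch cross-neuron sum $R(b)$, and the total $T$; if we can keep the multiset of each of these four aggregates identical between two inputs and still realise two different orbits, the architecture cannot separate them.

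\textbf{Construction.} Fix a hidden layer $l \in \{1, \dots, L-1\}$ with $d_l \geq 2$ and zero out every other layer, so $\decbatch^{(l)}$ is a $b \times d_l$ feature matrix. I pick $\decbatch_1$ so that (i) two neurons $i_1, i_2$ share the same batch sum, placing $\decbatch_1 \in \Eps$, and (ii) the per-batch row sum $R(b) = \sum_n \decbatch_1^{(l)}[b, n, :]$ is constant in $b$. I then define $\decbatch_2$ by applying distinct batch permutations $\pi_{i_1}, \pi_{i_2} \in S_b$ at the two twin neurons, leaving every other neuron untouched. Generic parameter choices guarantee $\decbatch_2 \notin G_b \cdot \decbatch_1$: a direct case analysis over $G_b = S_b \times G$ shows that any global element of $S_b$ would have to realise $\pi_{i_1}$ and $\pi_{i_2}$ simultaneously at every neuron, and any element of $G$ only reshuffles complete columns of $\decbatch_1^{(l)}$, neither of which can produce the same effect as two distinct per-neuron permutations.

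\textbf{Architectural invariance.} Under this construction the multiset $\{\decbatch^{(l)}[b, n, :]\}_b$ is preserved at every neuron $n$, the batch sums $B(n)$ are preserved, and $R(b)$ and the total $T$ are both unchanged (the first because $R$ was forced constant in $b$, and the second because the construction preserves pointwise entries up to batch relabelling). Hence every term appearing in a $\gbtgb$ layer yields the same multiset over $b$ at each neuron when evaluated on $\decbatch_1$ and on $\decbatch_2$; the pointwise nonlinearity respects this multiset equality, and the $\gbtg$ pool collapses both multisets to identical per-neuron scalar features. The subsequent $\gtg$ and $\gtw$ layers consume these identical features and emit identical parameters. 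For architectures with several $\gbtgb$ layers the argument is iterated: if the constant-row-sum property is maintained through the pointwise nonlinearity — which follows from the symmetric form imposed on $\decbatch_1$ — each successive $\gbtgb$ continues to produce matching per-neuron multisets.

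\textbf{Main obstacle.} The principal technical hurdle is preserving the constant-row-sum property under multiple $\gbtgb \circ \sigma$ compositions for arbitrary $\sigma$ and arbitrary learnable matrices: the new row sums are pointwise applications of $\sigma$ to linear combinations of the original entries, and must themselves be constant in $b$ for the invariance to propagate through the architecture. This forces $\decbatch_1$ to carry structural symmetry beyond the bare $\Eps$ condition — for instance, requiring the multiset of neuron features to agree across distinct batches, not merely their sums — and identifying a non-trivial orbit structure within this restricted family of inputs is the technically delicate step of the argument.
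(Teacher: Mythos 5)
Your overall strategy (build two highly regular inputs in $\Eps$ that lie in different $G_b$-orbits yet present identical aggregates to every layer) is the right spirit, but the construction you sketch does not go through, and you have in effect left the crucial step open. The specific failure point is the row-sum invariant. If $\decbatch_2$ is obtained from $\decbatch_1$ by applying \emph{different} batch permutations $\pi_{i_1}\neq\pi_{i_2}$ at two neurons, then the per-batch-element cross-neuron sum of $\decbatch_2$ is $R_2(b)=\sum_n (\decbatch_1)^{(l)}_{\pi_n^{-1}(b),n,:}$, which mixes entries from different batch rows of $\decbatch_1$; forcing $R(b)$ to be constant in $b$ for $\decbatch_1$ does \emph{not} make $R_2(b)$ constant (a $2\times 2$ example with entries $\pm 1$ and one neuron's batch order flipped already breaks it). Worse, because $\gbtgb$ is followed by a pointwise nonlinearity before the $S_b$-pooling in $\gbtg$, matching the pooled per-neuron features requires the \emph{joint} multiset over $b$ of pairs $(\text{entry at }(b,n),\,R(b))$ to agree at every neuron, not just the two marginal multisets. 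Your proposal neither establishes this joint condition nor resolves the propagation of the invariants through a stack of $\gbtgb\circ\sigma$ layers; you explicitly defer "the technically delicate step," which is precisely the content of the proposition. The non-orbit-equivalence claim via "generic parameter choices" is likewise unverified and sits in tension with the regularity you must impose to fool the layers.

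The paper closes exactly this gap by a reduction rather than a bare-hands construction: for an MLP of shape $1\to n\to 1$, the space $\Grads_b[f]$ with its $G_b$-action is isomorphic to the sets-of-sets space of \citet{hartford2018deep}, and \ourmethod coincides with that exchangeable-matrix architecture, whose separation power on graph incidence matrices is known to equal the 1-WL test \citep{albooyeh2019incidence}. Taking $\decbatch_1,\decbatch_2$ to be the incidence matrices of two disjoint $3$-cycles versus a single $6$-cycle gives a pair that is 1-WL-equivalent (hence inseparable by every \ourmethod model) but non-isomorphic (hence in distinct $G_b$-orbits), and both lie in $\Eps$ by the regularity of these graphs. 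If you want to salvage your direct construction, you would need your restricted input family to carry exactly this kind of combinatorial regularity --- which is why importing a known 1-WL-indistinguishable pair is the cleaner route.
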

\begin{proof}
    For simplicity, we focus in this proof on the invariant version of $\ourmethod$, $\mF: \Grads_b[f] \to \sR^{f'}$. The proof can be easily extended to the equivariant version of \ourmethod, $\mF: \Grads_b[f] \to \Params[f']$. Let the underlying MLP used to define the spaces $\Grads_b[f]$ have an input dimension $d_0 = 1$, a single hidden layer of dimension $d_1 = n$, and an output dimension $d_2 = 1$. In this case, $\Grads_b[f]$, along with the action of $G_b$, is isomorphic to the space of ``sets of sets'' defined in \citet{hartford2018deep}. Moreover, \ourmethod is equivalent to the architecture proposed in \citet{hartford2018deep}. This architecture is known to be unable to distinguish between certain highly regular, non-equivalent inputs. For example, incidence matrices of graphs can be viewed as sets of sets and, therefore, as elements of $\Grads_b[f]$. It was shown in \citet{albooyeh2019incidence} that the ability of the architecture in \citet{hartford2018deep}, and consequently \ourmethod, to separate graphs based on their incidence matrices is equivalent to the 1-WL test \citep{weisfeiler1968reduction}. 
    
    Consider an example where $\decbatch_1$ represents the incidence matrix of a graph consisting of two disconnected cycles of length 3, and $\decbatch_2$ represents the incidence matrix of a single cycle of length 6. For any \ourmethod model $\mF$, it will hold that $\mF(\decbatch_1) = \mF(\decbatch_2)$. A straightforward check shows that for this example, $\decbatch_1, \decbatch_2 \in \Eps$, completing the proof.
\end{proof}

\section{Experimental Details}\label{apx:experiments}
In this section we provide all experimental details for all experiments in Section~\ref{sec:experiments}. We run all the experiments on a singel NVIDIA-A100-SXM4 GPU with 40GB of memory.

\subsection{Curvature Information Estimation}\label{apx:curvature_exp}

\paragraph{Dataset.} 
To construct the dataset for this experiment, we first generate 3000 SIREN models \citep{sitzmann2020implicit}, commonly used for INRs. Each model has three layers with 32 hidden features, i.e.  $1 \to 32 \to 32 \to 1$. The weights and biases of these models are randomly initialized using a standard Gaussian distribution. Each data point in the resulting dataset corresponds to a single SIREN model and consists of an input gradient set of size 128, $\decbatch \in \Grads_{128}[2]$, and a target vector, $\params_{\text{FIM}} \in \Params$. The input gradient set corresponding to a SIREN model $\mlp_\params$ is computed by sampling 128 random points $S_{\mlp_\params} = \{x_1, \dots , x_{128} \} \subset [-1,1]$. To simulate diverse datasets, the points in $S_{\mlp_\params}$ are sampled from a distribution $X_{p}$ defined by
\begin{equation}
    X_p =
\begin{cases} 
\text{U}([0, 1]), & \text{with probability } p, \\
\text{U}([-1, 0]), & \text{with probability } 1-p.
\end{cases}
\end{equation}

Here, the value of $p$ is randomly and uniformly selected over the unit interval. The set of input gradients is then given by $\{\nabla_1, \dots, \nabla_{128}\}$, $\nabla_i = \nabla \mlp_\params(x_i)$. Thus the input vector $\decbatch \in \Grads_{128}[2]$ is given by $[\nabla_1, \dots, \nabla_{128}]$. 

To compute the target vector $\params_{\mathrm{FIM}} \in \Params$ , we first  randomly sample 1024 random points from $X_{p}$ $S'_{\mlp_\params} = \{x'_1, \dots , x'_{1024} \} \subset [-1,1]$  (Note that the sets $S_{\mlp_\params}$ and $S'_{\mlp_\params}$ are independent). We then compute the true FIM (see Section \ref{apx:fisher}) by 

\begin{equation}
    \fisher = \frac{1}{1024} \sum_{i=1}^{1024}  \nabla \mlp_\params(x'_i) \nabla \mlp_\params(x'_i)^\top. \in \Params \otimes \Params
\end{equation}

The target vector $\params_{\mathrm{FIM}} \in \Params$ is then given by $\params_{\mathrm{FIM}} = \mathrm{diag} \fisher$. The task is then to predict $\params_{\mathrm{FIM}}$ based on the gradient set vector $\decbatch$ and is trained using $l_2$ loss. For baselines which process data in parameter space $\Params[f]$ (rather than gradient set space $\Grads_{128}[2]$), the gradient vector $\decbatch$  computed from gradient set $\{\nabla_i \}$ is replaced by representing each gradient in parameter space $\nabla_i \in \Params$, and then either concatenating them resulting in a vector $ \params \in \Params[128]$ or averaging them, resulting in a vector in $ \params \in \Params$.

\paragraph{Baselines.}
We compare \ourmethod and \secondmethod against multiple baselines. 

\textbf{MLP + concat:} This baseline takes the gradient vector $\decbatch$ described above as input, flattens it, and feeds it into a standard MLP. 

\textbf{Neuron Asymmetric \ourmethod:} This variant respects the batch symmetries but not the gradient space symmetries of $\Grads_b[f]$. It uses $\decbatch$ as input and applies a Deep Sets architecture, as described in \citet{zaheer2017deep}. Specifically, the vector feature for element $i$ in the set is constructed as $[\decbatch^{(0)}_{i,:,:}, \dots, \decbatch^{(L)}_{i,:,:}]$. 

\textbf{Batch Asymmetric \ourmethod:} This variant closely resembles \ourmethod but respects the gradient space symmetries while disregarding the batch symmetries of $\Grads_b[f]$. It takes $\decbatch$ as input and applies a \ourmethod model defined over $\Grads_1[256]$, where the batch axis is ``flattened'', resulting in a single gradient with a feature dimension of $256 = 2 \cdot 128$.

\textbf{Weight Space Models:} The variants ``DWS+Concat'', ``DWS+Average'', ``GMN+Concat'', and ``GMN+average'' take as input the gradients represented as $\params \in \Params[f]$. Here, $\params$ is either the average of the gradients in the set, in which case $f=1$, or the concatenation of all gradients in the set, in which case $f=128$. These variants then apply either the deep weight space (DWS) architecture described in \citet{navon2023alignment} or the  graph metanetwork (GMN) architecture introduced in \citet{lim2023graph} to the corresponding input vector.

\textbf{Standard Estimator:} Finally, to highlight the benefits of learning to approximate algorithms rather than relying on fixed, non-learnable approximations, we compare \ourmethod and \secondmethod to a standard estimation of the target. Specifically, we compute $\mathrm{diag} \fisher_{\batch_1}$ based on the 128 gradients provided as input and evaluate its $\ell_2$ distance to the target, which, as a reminder, is $\mathrm{diag} \fisher_{\batch_2}$. Here, $\batch_2$ is computed using the gradients of a larger set of 1024 points, independently generated from the data points in $\batch_1$.

\begin{table}
    \caption{Comparison of baseline properties.}\label{tab:baseline_comparison}
    \vspace{8pt}
    \centering
    \resizebox{\textwidth}{!}{
        \begin{tabular}{l c c c c}
            \toprule
            \raisebox{\height}{\textbf{Baseline}} & \shortstack{\textbf{Supports sets} \\ \textbf{of gradients}} & \shortstack{\textbf{Supports efficient} \\ \textbf{gradient representation}}& \raisebox{\height}{\textbf{$S_b$-invariant}} & \raisebox{\height}{\textbf{$G$-equivariant}}\\
            \midrule
            MLP + Concat & \cmark & \cmark & \xmark & \xmark \\
            DWS + Concat & \cmark & \cmark & \xmark & \cmark \\
            GMN + Concat & \cmark & \cmark & \xmark & \cmark \\
            DWS + Average & \xmark & \xmark & $\mathbf{-}$ & \cmark \\
            GMN + Average & \xmark & \xmark & $\mathbf{-}$ & \cmark \\
            Batch Asymmetric \ourmethod & \cmark & \cmark & \xmark & \cmark \\
            Neuron Asymmetric \ourmethod & \cmark & \cmark & \cmark & \xmark \\
            \ourmethod & \cmark & \cmark & \cmark & \cmark \\
            \secondmethod & \cmark & \cmark & \cmark & \cmark \\
            \bottomrule
        \end{tabular}
    }
\end{table}

\paragraph{Data preparation.} 
We use 500 examples as a test dataset and 500 examples as a validation dataset, with the size of the training set varying between 10 and 2000 examples. As a preprocessing step, all data, including the target vectors, is normalized based on the statistics of the training dataset. The following are the three normalization methods used, based on the vector space to which the data belongs:
\begin{enumerate}
    \item For input vectors of the form $\decbatch \in \Grads_{128}[2]$:  for each layer $l = 0, \dots, 3$, we compute the mean $\mu^{(l)}$ and standard deviation $\sigma^{(l)}$ over the set of values:
    \begin{equation*}
       \{ \decbatch^{(l)}_{i,j,k} \mid i = 0, \dots, b; \, j = 1, \dots, d_l; \, k = 0, 1; \, \decbatch \in \data_\text{train} \}.
    \end{equation*}
    We then normalize the data by: $\decbatch^{(l)} \leftarrow \frac{\decbatch^{(l)} - \mu^{(l)}}{\sigma^{(l)}}$.
    \item For input vectors of the form $\params \in \Params[f]$ $\params = (\mW^{(1)}, \vb^{(1)}, \mW^{(L)}, \vb^{(L)},)$ (where $f = 1$ for averaging or $f = 128$ for concatenation): we follow the normalization scheme suggested in \citet{navon2023equivariant}, where  for each layer $l = 1, \dots, 3$, we compute the means $\mu_w^{(l)}, \mu_b^{(l)}$ and standard deviations $\sigma_w^{(l)}, \sigma_b^{(l)}$ over the sets of values:
    \begin{equation*}
       \{ \mW^{(l)}_{i,j,k} \mid i = 1, \dots, d_{l-1}; \, j = 1, \dots, d_l; \, k = 0,\dots, f; \,\\ \mW^{(l)} \in \data_\mathrm{train} \}.
    \end{equation*}
    \begin{equation*}
       \{ \vb^{(l)}_{i,k} \mid i = 1, \dots, d_{l}; \, k = 0,\dots, f; \, \vb^{(l)} \in \data_\mathrm{train} \}.
    \end{equation*}
    We then normalize the data as follows:
    \begin{equation*}
       \mW^{(l)} \leftarrow \frac{\mW^{(l)} - \mu_w^{(l)}}{\sigma_w^{(l)}}, \quad\vb^{(l)} \leftarrow \frac{\vb^{(l)} - \mu_b^{(l)}}{\sigma_b^{(l)}}.
    \end{equation*}
    
    \item For target vectors $\params_{\mathrm{FIM}} \in \Params$: recall that $\Params \cong \mathbb{R}^P$ for some integer $P \in \mathbb{N}$. Let $\params_{\text{FIM}}(i)$ denote the $i$-th entry of the vector $\params_{\mathrm{FIM}} \in \mathbb{R}^P$. We compute a single mean $\mu$ and a single standard deviation $\sigma$ over the set of values: $\{ \params_{\text{FIM}}(i) \mid i = 1, \dots, P; \, \params_{\mathrm{FIM}} \in \data_\mathrm{train} \}$. The data is then normalized as: $\params_{\mathrm{FIM}} \leftarrow \frac{\params_{\mathrm{FIM}} - \mu}{\sigma}$.
\end{enumerate}

\paragraph{Additional experimental details.} Following the experimental setup in \citet{navon2023equivariant}, all learned baselines in this experiment consist of approximately 15K learned parameters and roughly 3 layers (In some cases, where the input dimensionality was extremely large, it was not possible to maintain the 15K parameter budget with 3 layers). All models were trained for 100 epochs using the Adam optimizer with a learning rate of $1 \times 10^{-3}$ and a batch size of 32. We report the test loss corresponding to the best validation performance and repeat the experiment with random seeds 1 through 5.

\subsection{Learned Optimizers}\label{apx:lopt}

\begin{table}[t]
  \centering
  \setlength{\tabcolsep}{2pt}
  \caption{\looseness=-1 Multiplicative improvement in the number of steps needed to reach \colorbox{rowgray}{Adam}'s best test loss, as well as the best test loss achieved by each \colorbox{rowblue}{standard} and \colorbox{rowgreen}{\ourmethod-based} learned optimizer. For each task, we run Adam, record its best test loss $L$ and the number of steps $N$ required to reach it. We then run each learned optimizer, measure how many steps it takes to reach $L$, and report the improvement relative to $N$. The ``Step Reduction Factor vs.Adam'' column thus indicates a multiplicative speedup in optimization steps. The ``Best Test NLL'' column record the best test loss achieved by the optimizer in the run. Results are averaged across 5 optimization runs.}\label{tab:all_lopt_results_extended}
  \vspace{8pt}
  \begin{tabular}{l|lcc}
    \toprule
    \raisebox{\height}{\textbf{Dataset}} & \raisebox{\height}{\textbf{Optimizer}} & \shortstack{\textbf{Step reduction} \\ \textbf{factor vs. Adam ($\uparrow$)}} & \shortstack{\textbf{Best} \\ \textbf{test NLL ($\downarrow$)}} \\
    \midrule
    \multirow{6}{*}{F-MNIST} 
    & \cellcolor{rowgray} Adam & \cellcolor{rowgray} $1.00 \pm 0.00$ & \cellcolor{rowgray} $0.475 \pm 0.011$ \\
    & \cellcolor{rowblue} SGDM & \cellcolor{rowblue} $1.13 \pm 0.17$x & \cellcolor{rowblue} $0.463 \pm 0.019$ \\
    & \cellcolor{rowblue} DS & \cellcolor{rowblue} $1.27 \pm 0.36$x & \cellcolor{rowblue} $0.460 \pm 0.011$ \\
    & \cellcolor{rowblue} UNF & \cellcolor{rowblue} $1.16 \pm 0.18$x & \cellcolor{rowblue} $0.451 \pm 0.015$ \\
    \cmidrule{2-4}
    & \cellcolor{rowgreen} DS + \ourmethod & \cellcolor{rowgreen} $1.44 \pm 0.43$x & \cellcolor{rowgreen} $\mathbf{0.445 \pm 0.017}$ \\
    & \cellcolor{rowgreen} UNF + \ourmethod & \cellcolor{rowgreen} $\mathbf{1.51 \pm 0.59}$x & \cellcolor{rowgreen} $0.447 \pm 0.011$ \\
    \midrule
    \multirow{6}{*}{CIFAR10}      
    & \cellcolor{rowgray} Adam & \cellcolor{rowgray} $1.00 \pm 0.00$x & \cellcolor{rowgray} $1.616 \pm 0.039$ \\
    & \cellcolor{rowblue} SGDM & \cellcolor{rowblue} $1.41 \pm 0.52$x & \cellcolor{rowblue} $1.556 \pm 0.014$ \\
    & \cellcolor{rowblue} DS & \cellcolor{rowblue} $2.32 \pm 0.42$x & \cellcolor{rowblue} $1.494 \pm 0.015$ \\
    & \cellcolor{rowblue} UNF & \cellcolor{rowblue} $2.64 \pm 0.53$x & \cellcolor{rowblue} $1.516 \pm 0.020$ \\
    \cmidrule{2-4}
    & \cellcolor{rowgreen} DS + \ourmethod & \cellcolor{rowgreen} $\mathbf{4.63 \pm 1.11}$x & \cellcolor{rowgreen} $1.427 \pm 0.018$ \\
    & \cellcolor{rowgreen} UNF + \ourmethod & \cellcolor{rowgreen} $4.26 \pm 0.56$x & \cellcolor{rowgreen} $\mathbf{1.418 \pm 0.022}$ \\
    \midrule
    \multirow{6}{*}{CIFAR100}    
    & \cellcolor{rowgray} Adam & \cellcolor{rowgray} $1.00 \pm 0.00$x & \cellcolor{rowgray} $3.449 \pm 0.021$ \\
    & \cellcolor{rowblue} SGDM & \cellcolor{rowblue} $1.06 \pm 0.07$x & \cellcolor{rowblue} $3.424 \pm 0.026$ \\
    & \cellcolor{rowblue} DS & \cellcolor{rowblue} $1.79 \pm 0.40$x & \cellcolor{rowblue} $3.356 \pm 0.027$ \\
    & \cellcolor{rowblue} UNF & \cellcolor{rowblue} $1.58 \pm 0.27$x & \cellcolor{rowblue} $3.345 \pm 0.036$ \\
    \cmidrule{2-4}
    & \cellcolor{rowgreen} DS + \ourmethod & \cellcolor{rowgreen} $\mathbf{3.15 \pm 0.56}$x & \cellcolor{rowgreen} $\mathbf{3.253 \pm 0.019}$ \\
    & \cellcolor{rowgreen} UNF + \ourmethod & \cellcolor{rowgreen} $2.85 \pm 0.38$x & \cellcolor{rowgreen} $3.262 \pm 0.013$ \\
    \midrule
    \multirow{6}{*}{LM1B} 
    & \cellcolor{rowgray} Adam & \cellcolor{rowgray} $1.00 \pm 0.00$x & \cellcolor{rowgray} $6.904 \pm 0.075$ \\
    & \cellcolor{rowblue} SGDM & \cellcolor{rowblue} $1.01 \pm 0.03$x & \cellcolor{rowblue} $7.045 \pm 0.031$ \\
    & \cellcolor{rowblue} DS & \cellcolor{rowblue} $0.88 \pm 0.14$x & \cellcolor{rowblue} $6.987 \pm 0.108$ \\
    & \cellcolor{rowblue} UNF & \cellcolor{rowblue} $1.48 \pm 0.16$x & \cellcolor{rowblue} $6.702 \pm 0.080$ \\
    \cmidrule{2-4}
    & \cellcolor{rowgreen} DS + \ourmethod & \cellcolor{rowgreen} $1.09 \pm 0.16$x & \cellcolor{rowgreen} $6.993 \pm 0.076$ \\
    & \cellcolor{rowgreen} UNF + \ourmethod & \cellcolor{rowgreen} $\mathbf{1.82 \pm 0.39}$x & \cellcolor{rowgreen} $\mathbf{6.557 \pm 0.061}$ \\
    \bottomrule
  \end{tabular}
\end{table}

\paragraph{Learned optimization tasks.}
The following are descriptions of the optimization tasks the learned optimizers were evaluated on. Across all tasks, the training loss is negative log-likelihood, and the batch size is 128 except for the transformers task.
\begin{enumerate}[label=(\arabic*)]
    \item \textbf{MLP on FashionMNIST.} Learned optimizers are tasked with training a three-layer MLP classifier on a downsized ($8 \times 8$) flattened version of FashionMNIST \citep{xiao2017fashion}. The MLP has a single hidden layer of dimension 32 and ReLU activations. 
    \item \textbf{MLP on CIFAR10.} Learned optimizers are tasked with training a three-layer MLP classifier on a downsized ($8 \times 8$) flattened version of CIFAR10. The MLP has a single hidden layer of dimension 32 and ReLU activations.
    \item \textbf{MLP on CIFAR100.} Learned optimizers are tasked with training a three-layer MLP classifier on a downsized ($8 \times 8$) flattened version of CIFAR100. The MLP has a single hidden layer of dimension 128 and ReLU activations.
    \item \textbf{Transformer on LM1B.} Learned optimizers are tasked with training a transformer language model on LM1B \citep{chelba2013one}, using next token prediction. The transformer comprises two blocks with an embedding dimension of 32 and uses four self-attention heads. We train with a batch size of 8 on length 8 sequences.
    \item \textbf{2-parameter linear regression.} Learned optimizers are tasked with training a linear regression model with two inputs and a single output. The input training data is a mixture of two Gaussians centered around $(1, 2)$ and $(2, 1) $ and the target is always $0$. This results in a loss landscape with non-standard curvature, as depicted in Figure \ref{fig:2d}.
\end{enumerate}

\paragraph{Learned optimizer architectural details.}
In each inner training iteration, all learned optimizers are provided with the current parameters $\params_t$, current gradient $\nabla_t$, six momentum value $\{\vv_t^{0.1}, \vv_t^{0.5}, \vv_t^{0.9}, \vv_t^{0.99}, \vv_t^{0.999}, \vv_t^{0.9999}\}$, iteration number $t$ as an 11-dimensional sinusoidal encoding. All of these inputs are concatenated across the feature dimension to get a 19-dimensional vector per-parameter. I.e., the learned optimizers inputs are in $\Params[19]$ and outputs are in $\Params[1]$. 

For \ourmethod based learned optimizers, inputs also include current set of individual gradients $\decbatch \in \Grads_b$ and six momentum value $\{\tv_t^{0.1}, \tv_t^{0.5}, \tv_t^{0.9}, \tv_t^{0.99}, \tv_t^{0.999}, \tv_t^{0.9999}\}$ concatenated across the feature dimension, resulting in an input in $\Grads_b[7]$. These inputs are processed by \ourmethod, which outputs an embedding in $\Params[14]$, i.e., a 14-dimensional embedding vector per parameter. This embedding is concatenated to the other inputs of the learned optimizer, so the DeepSets/UNF based learned optimizer gets an input in $\Params[33]$.

The DeepSets \citep{zaheer2017deep} based learned optimizers process the inputs by applying a per-parameter MLP with three hidden layers of size 32. The UNF \citep{zhou2024universal} based learned optimizers apply a per-parameter MLP with two hidden layers of size 32 and output dimensions of 32, followed by a single UNF layer. As a baseline compared against all learned optimizer,s we used a standard Adam optimizer~\citep{kingma2014adam} with learning rate tuned by grid-search over the values $\{0.0005 * j\}_{j = 1}^{20}$ (i.e., 20 equidistant values in the range $[0.0005, 0.01]$).

\paragraph{Meta-training details.}
We meta-train for 50,000 steps using Adam \citep{kingma2014adam} with learning rate $10^{-4}$, estimating meta-gradients over 16 parallel training runs using persistent evolutionary strategies (PES) \citep{vicol2021unbiased} with a truncation length of 50. The meta-training objective is training loss at the end of the inner training horizon $T$, which is $T = 2,000$ for image classification tasks, $T = 5,000$ for the transformer language modeling task, and $T = 10$ for the 2D linear regression experiment. For all methods, we initialize $\alpha = 0.1$, $\mu = 0.9$ and $\beta = 0.001$ before meta-training. We use the learned optimizer meta-training setup from~\citet{metz2022practical} (project available at \url{https://github.com/google/learned_optimization}, released under Apache License 2.0).

\subsection{INR Editing}\label{apx:inr_editing}

\vspace{-4pt}
\paragraph{Dataset.} We utilized two previously proposed INR datasets: MNIST INRs~\cite{navon2023equivariant} and CIFAR10 INRs~\cite{zhou2024permutation}. Each INR represents an image from the original image datasets and consists of three layers with 32 hidden features. The target images are produced using the image processing library OpenCV~\cite{opencv_library}, by dilating or increasing the contrast of the MNIST and CIFAR-10 images, respectively. To construct the input gradient set, $\decbatch \in \Grads_{64}$, we sample 64 random input coordinates in $[0, 1]^2$ and compute the gradients of the editing loss w.r.t the original INR parameters. Specifically, if $\theta$ are the INR parameters and $\mI_i: [0, 1]^2 \to \mathbb{R}^3$ is the target image, then for an input coordinates $(x_i, y_i)$, the gradient $\nabla_i$ is given by $\nabla_i = \nabla_\params \left(\mlp_\params(x_i,y_i) - \mI_i(x_i,y_i)\right)^2$. We use the same set of random coordinates for all INRs. 

\vspace{-4pt}
\paragraph{Combining \ourmethod and weight-space methods.} 
\looseness=-1
We combine \ourmethod and the weight-space architectures as follows: First, \ourmethod processes the gradients $\decbatch \in \Grads_{64}$ to produce outputs in $\Params[f]$. In this experiment, we choose $f=8$. Next, the output is used as additional weight features for the weight-space network, i.e., the input to the weight-space network is in $\Params[9]$.

\paragraph{Additional experimental details.}
\looseness=-1
We train all methods for 150K steps using the AdamW~\cite{loshchilov2018decoupled} optimizer with a batch size of 64. We search over learning rates in $\{0.01, 0.005, 0.0001\}$. For ScaleGMN and GMN we use the official implementation provided in \citet{kalogeropoulos2024scale} with the same parameter configuration provided by the authors. We use the bidirectional variant  (ScaleGMN-B, \citet{kalogeropoulos2024scale}) with 10 layers and a hidden dimension of 128. The DWS~\cite{navon2023equivariant} network consists of 8 layers with 128 hidden features. Finally, \ourmethod consists of 10 layers with 256 hidden features, while \secondmethod uses 3 blocks with 8 heads and 64 hidden features. 

\section{Additional Experimental Results}\label{apx:additional_experiments}

In this section, we detail additional experimental results for \ourmethod-based \emph{learned optimizers} and curvature estimation experiments. The experimental setup, datasets, and baselines are all identical to the setting of Section~\ref{sec:experiments} and Appendix~\ref{apx:experiments}.

\subsection{Scaling Curvature Estimation}\label{app:curv-scaling}

We extend the curvature-estimation experiment from Section~\ref{subsec:hess} to models with over one million parameters. As in the main text, networks are trained to predict the trace of the Fisher Information Matrix, $\mathrm{tr}\left(\fisher_\params\right)$, using the same decomposed-gradient representation. At this scale, several original baselines, especially full-gradient methods, are computationally infeasible in memory or runtime. We therefore compare against a scalable baseline: an MLP that operates on decomposed gradients. \ourmethod attains substantially lower normalized test MAE than the MLP baseline (lower is better), indicating more accurate curvature estimation at scale.

\begin{table}[h]
    \centering
    \caption{Large-scale curvature estimation (1M$+$ parameters). Normalized test MAE ($\downarrow$).}\label{tab:curv-1m}
    \vspace{8pt}
    \begin{tabular}{l c}
        \toprule
        \textbf{Model} & \textbf{Normalized Test MAE (↓)} \\
        \midrule
        MLP & 0.779 \\
        \ourmethod & \textbf{0.413} \\
        \bottomrule
    \end{tabular}
\end{table}
\subsection{Additional Learned Optimizer Speedup Results}
\paragraph{Step reduction details.} In Table~\ref{tab:all_lopt_results_extended}, we add to Table~\ref{tab:all_lopt_results} the standard deviation of the average step reduction factor for each optimizer, as well as the best test loss achieved by each optimizer in the training horizon (2,000 for image classification tasks and 5,000 for the LM task). For the reader's convenience, in Figure~\ref{fig:large_results} we presented a larger version of the plots in Figure~\ref{fig:lopt}.

\begin{figure}[h!]
    \centering
    \includegraphics[width=\textwidth]{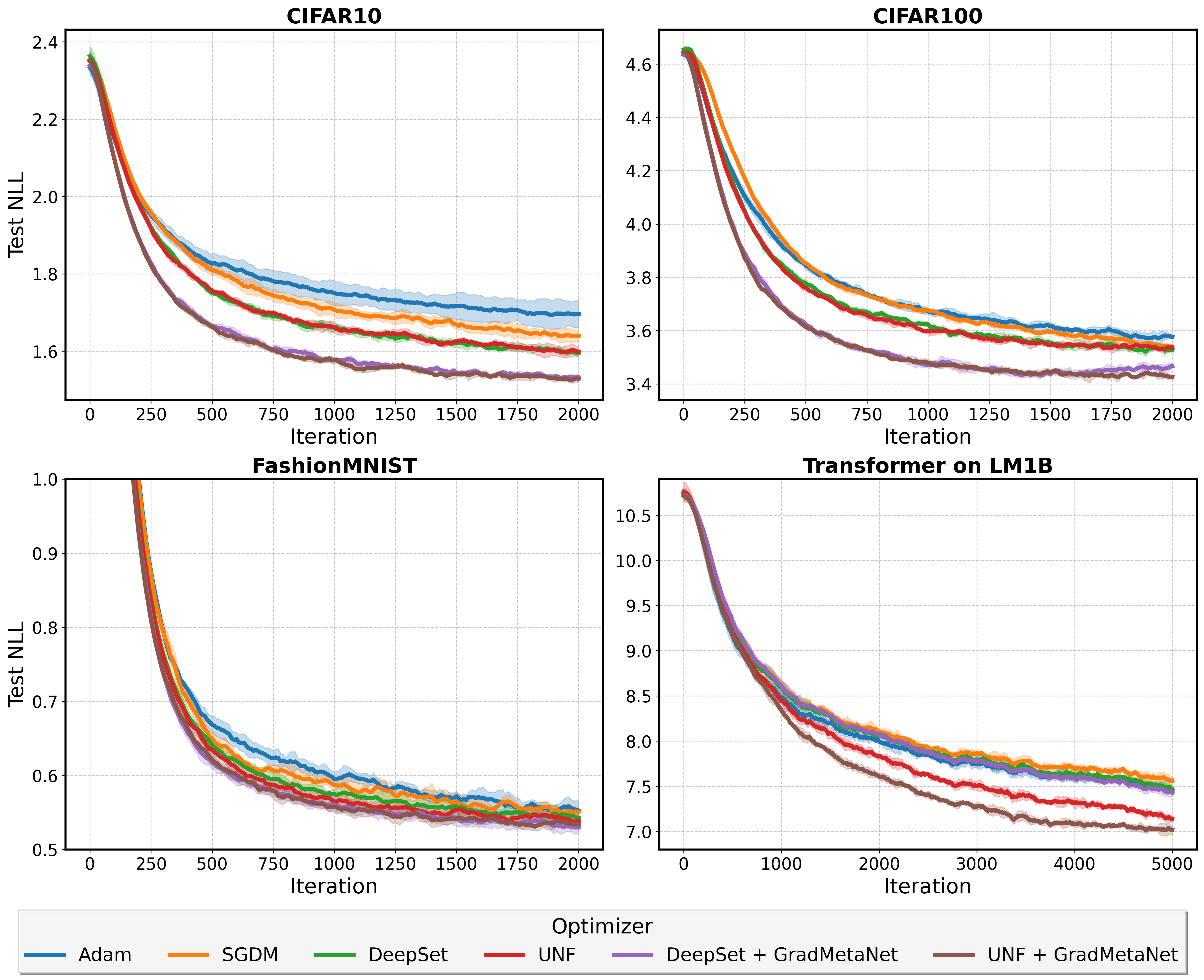}
    \vspace{8pt}
    \caption{Test loss curves for MLP image classification tasks and a transformer language model trained on LM1B, using different optimizers and (learning rate tuned) Adam. Curves are smoothed and averaged over 5 random initializations, with shaded regions representing standard deviation.}
    \label{fig:large_results}
\end{figure}

\begin{table}[h]
  \centering
  \setlength{\tabcolsep}{6pt}
  \renewcommand{\arraystretch}{1.0}
  \caption{Training iteration speed (in iterations per second) for Adam, UNF, and UNF + GradMetaNet on LM1B (transformer) and CIFAR100 (MLP).}
  \vspace{8pt}
  \begin{tabular}{lcc}
    \toprule
    \textbf{Optimizer} & \textbf{Transformer on LM1B (It/s)} & \textbf{MLP on CIFAR100 (It/s)} \\
    \midrule
    Adam & $107.55 \pm 8.09$ & $359.54 \pm 5.67$ \\
    UNF & $75.45 \pm 2.32$ & $304.60 \pm 12.45$ \\
    UNF + GradMetaNet & $65.61 \pm 0.31$ & $264.33 \pm 4.94$ \\
    \bottomrule
  \end{tabular}\label{tab:timing}
\end{table}
\paragraph{Train-time comparison.}
We measured the time per training iteration (in iterations per second) for Adam, UNF, and UNF + \ourmethod on LM1B and CIFAR100. The results are reported in Table~\ref{tab:timing}. All models and optimizers run on a single NVIDIA A100-SXM4-40GB GPU. As expected, learned optimizers introduce some computational overhead, and while \ourmethod-based learned optimizers incur a slight per-step slowdown, the significant reduction in steps leads to \ourmethod having a substantial speedup in train-time (up to $3\times$ faster than Adam). As reported in Table~\ref{tab:train_speedup}, \ourmethod-based optimizers outperform all baselines in total training speed. 

\begin{table}[h]
    \centering
    \caption{Step reduction and train-time speedup relative to Adam across datasets.}\label{tab:train_speedup}
    \vspace{8pt}
        \begin{tabular}{l|lcc}
            \toprule
            \raisebox{\height}{\textbf{Dataset}} &
            \raisebox{\height}{\textbf{Optimizer}} & \shortstack{\textbf{Step reduction} \\ \textbf{factor vs. Adam ($\uparrow$)}} & \shortstack{\textbf{Train-time} \\ \textbf{speedup vs. Adam ($\uparrow$)}} \\
            \midrule
            \multirow{2}{*}{Fashion MNIST} 
            & UNF & 1.16x & 1.05x \\
            & GradMetaNet + UNF & 1.51x & 1.13x \\
            \midrule
            \multirow{2}{*}{CIFAR10} 
            & UNF & 2.64x & 2.31x \\
            & GradMetaNet + UNF & 4.26x & 3.13x \\
            \midrule
            \multirow{2}{*}{CIFAR100} 
            & UNF & 1.58x & 1.34x \\
            & GradMetaNet + UNF & 2.85x & 2.10x \\
            \midrule
            \multirow{2}{*}{LM1B} 
            & UNF & 1.48x & 1.04x \\
            & GradMetaNet + UNF & 1.82x & 1.11x \\
            \bottomrule
        \end{tabular}
\end{table}

\subsection{Generalization to New Tasks and Model Sizes}
In this section, we demonstrate that \ourmethod-based learned optimizers generalize across base model sizes and across tasks \emph{without} re-meta-training. This contrasts with other weight-space learned optimizers such as DWS~\citep{navon2023equivariant}, NFN~\citep{zhou2024neural}, and UNF~\citep{zhou2024universal}, which require defining a different weight-space metanetwork to process gradients from different base architectures. All metrics are reported relative to Adam under identical training settings.

\paragraph{Model-size generalization.}
For CIFAR-10 and CIFAR-100, we meta-train a learned optimizer on a specific MLP width and meta-test on a larger, previously unseen MLP: (i) CIFAR-10: meta-train on a 32-hidden-dim MLP and meta-test on 64 hidden dims; (ii) CIFAR-100: meta-train on 128 hidden dims and meta-test on 256 hidden dims. In both cases, the optimizer transfers successfully to the larger model, improving both the number of steps and wall-clock time to reach the same target.
\begin{table}[h]
    \centering
    \caption{Model-size generalization: meta-train on a smaller MLP and meta-test on a larger MLP.}\label{tab:size-generalization}
    \vspace{8pt}
    \resizebox{\textwidth}{!}{
        \begin{tabular}{l|lcc}
            \toprule
            \raisebox{\height}{\textbf{Dataset (train width $\rightarrow$ test width)}} & 
            \raisebox{\height}{\textbf{Optimizer}} & \shortstack{\textbf{Step reduction} \\ \textbf{factor vs. Adam ($\uparrow$)}} & \shortstack{\textbf{Train-time} \\ \textbf{speedup vs. Adam ($\uparrow$)}} \\
            \midrule
            \multirow{2}{*}{CIFAR-10 (32 $\rightarrow$ 64)}
            & DS & 2.18x & 1.85x \\
            & GradMetaNet + DS & 4.15x & 3.05x \\
            \midrule
            \multirow{2}{*}{CIFAR-100 (128 $\rightarrow$ 256)}
            & DS & 1.78x & 1.59x \\
            & GradMetaNet + DS & 3.04x & 2.37x \\
            \bottomrule
        \end{tabular}
    }
\end{table}

\paragraph{Task generalization.}
We also evaluate cross-task transfer by meta-training on CIFAR-10 and meta-testing on CIFAR-100, and vice versa. Because the output dimensionality differs, this setting additionally exercises a mild form of architecture/size transfer. \ourmethod-based optimizers generalize in both directions.

\begin{table}[h]
    \centering
    \caption{Task generalization across CIFAR tasks.}\label{tab:task-generalization}
    \vspace{8pt}
    \resizebox{\textwidth}{!}{
        \begin{tabular}{l|lcc}
            \toprule
            \raisebox{\height}{\textbf{Meta-train $\rightarrow$ Meta-test}} & \raisebox{\height}{\textbf{Optimizer}} & \shortstack{\textbf{Step reduction} \\ \textbf{factor vs. Adam ($\uparrow$)}} & \shortstack{\textbf{Train-time} \\ \textbf{speedup vs. Adam ($\uparrow$)}} \\
            \midrule
            \multirow{2}{*}{CIFAR-100 $\rightarrow$ CIFAR-10}
            & DS & 2.45x & 1.72x \\
            & GradMetaNet + DS & 3.57x & 2.83x \\
            \midrule
            \multirow{2}{*}{CIFAR-10 $\rightarrow$ CIFAR-100}
            & DS & 1.93x & 1.57x \\
            & GradMetaNet + DS & 3.80x & 2.59x \\
            \bottomrule
        \end{tabular}
    }
\end{table}

\subsection{Scaling to Larger Models}\label{apx:learned-opt-scaling}
\paragraph{Context.}
Scaling learned optimizers is challenging due to extreme compute demands (e.g., VeLO~\citep{metz2022velo} reportedly required $\sim$4{,}000 TPU-months to meta-train). Training at that scale is currently infeasible with our academic resources, irrespective of whether \ourmethod is used as the architectural backbone. Nevertheless, to show the feasibility of scaling \ourmethod-based learned optimizers to larger models and settings, we measure the update cost of a \ourmethod-based optimizer for a GPT-2-scale model.

\paragraph{Large-model update cost.}
We measure the \emph{update time} and \emph{memory footprint} of a \ourmethod-based optimizer when applied to the gradients of a GPT-2-scale model: a 12-layer transformer with 12 attention heads per-layer, hidden size of 768, totaling $\sim$117M parameters. On two NVIDIA A100-SXM4-40GB GPUs, a \ourmethod update (including backpropagating through the base transformer) takes 1.29s versus 0.89s for Adam, with a similar memory footprint.

\begin{table}[h]
    \centering
    \caption{Update-time and memory comparison on a GPT-2 scale transformer using 2$\times$A100-40GB.}\label{tab:gpt2-scale-update}
    \vspace{8pt}
    \begin{tabular}{l c c}
        \toprule
        \textbf{Optimizer} & \textbf{Update Time (s) (↓)} & \textbf{Memory Footprint (↓)} \\
        \midrule
        Adam & 0.89 & 77.2~GB \\
        GradMetaNet & 1.29 & $\sim$77~GB (similar to Adam) \\
        \bottomrule
    \end{tabular}
\end{table}

\newpage

\section*{NeurIPS Paper Checklist}
\begin{enumerate}

\item {\bf Claims}
    \item[] Question: Do the main claims made in the abstract and introduction accurately reflect the paper's contributions and scope?
    \item[] Answer: \answerYes{}
    \item[] Justification: In Section~\ref{sec:theory} and Appendix~\ref{apx:theory} we state and prove all the theoretical results mentioned in the abstract and introduction. Section~\ref{sec:experiments} and Appendix~\ref{apx:experiments} detail the experimental setup and results that support all empirical calims made in the abstract and introduction.
    \item[] Guidelines:
    \begin{itemize}
        \item The answer NA means that the abstract and introduction do not include the claims made in the paper.
        \item The abstract and/or introduction should clearly state the claims made, including the contributions made in the paper and important assumptions and limitations. A No or NA answer to this question will not be perceived well by the reviewers. 
        \item The claims made should match theoretical and experimental results, and reflect how much the results can be expected to generalize to other settings. 
        \item It is fine to include aspirational goals as motivation as long as it is clear that these goals are not attained by the paper. 
    \end{itemize}

\item {\bf Limitations}
    \item[] Question: Does the paper discuss the limitations of the work performed by the authors?
    \item[] Answer: \answerYes{} 
    \item[] Justification: See Section~\ref{sec:conclustion_and_limitations}. 
    \item[] Guidelines:
    \begin{itemize}
        \item The answer NA means that the paper has no limitation while the answer No means that the paper has limitations, but those are not discussed in the paper. 
        \item The authors are encouraged to create a separate "Limitations" section in their paper.
        \item The paper should point out any strong assumptions and how robust the results are to violations of these assumptions (e.g., independence assumptions, noiseless settings, model well-specification, asymptotic approximations only holding locally). The authors should reflect on how these assumptions might be violated in practice and what the implications would be.
        \item The authors should reflect on the scope of the claims made, e.g., if the approach was only tested on a few datasets or with a few runs. In general, empirical results often depend on implicit assumptions, which should be articulated.
        \item The authors should reflect on the factors that influence the performance of the approach. For example, a facial recognition algorithm may perform poorly when image resolution is low or images are taken in low lighting. Or a speech-to-text system might not be used reliably to provide closed captions for online lectures because it fails to handle technical jargon.
        \item The authors should discuss the computational efficiency of the proposed algorithms and how they scale with dataset size.
        \item If applicable, the authors should discuss possible limitations of their approach to address problems of privacy and fairness.
        \item While the authors might fear that complete honesty about limitations might be used by reviewers as grounds for rejection, a worse outcome might be that reviewers discover limitations that aren't acknowledged in the paper. The authors should use their best judgment and recognize that individual actions in favor of transparency play an important role in developing norms that preserve the integrity of the community. Reviewers will be specifically instructed to not penalize honesty concerning limitations.
    \end{itemize}

\item {\bf Theory assumptions and proofs}
    \item[] Question: For each theoretical result, does the paper provide the full set of assumptions and a complete (and correct) proof?
    \item[] Answer: \answerYes{} 
    \item[] Justification: In section~\ref{sec:theory} we state all the assumptions made by our theoretical results, including their necessity (e.g. the set $\Eps$ in Theorem~\ref{thm:universal}). In Appendix~\ref{apx:theory} we formally restates all results and assumptions and provide complete proofs.
    \item[] Guidelines:
    \begin{itemize}
        \item The answer NA means that the paper does not include theoretical results. 
        \item All the theorems, formulas, and proofs in the paper should be numbered and cross-referenced.
        \item All assumptions should be clearly stated or referenced in the statement of any theorems.
        \item The proofs can either appear in the main paper or the supplemental material, but if they appear in the supplemental material, the authors are encouraged to provide a short proof sketch to provide intuition. 
        \item Inversely, any informal proof provided in the core of the paper should be complemented by formal proofs provided in appendix or supplemental material.
        \item Theorems and Lemmas that the proof relies upon should be properly referenced. 
    \end{itemize}

    \item {\bf Experimental result reproducibility}
    \item[] Question: Does the paper fully disclose all the information needed to reproduce the main experimental results of the paper to the extent that it affects the main claims and/or conclusions of the paper (regardless of whether the code and data are provided or not)?
    \item[] Answer: \answerYes{} 
    \item[] Justification: We provide a high-level description of the experimental setup in Section~\ref{sec:experiments}, and provide all experimental details in Appendix~\ref{apx:experiments}.
    \item[] Guidelines:
    \begin{itemize}
        \item The answer NA means that the paper does not include experiments.
        \item If the paper includes experiments, a No answer to this question will not be perceived well by the reviewers: Making the paper reproducible is important, regardless of whether the code and data are provided or not.
        \item If the contribution is a dataset and/or model, the authors should describe the steps taken to make their results reproducible or verifiable. 
        \item Depending on the contribution, reproducibility can be accomplished in various ways. For example, if the contribution is a novel architecture, describing the architecture fully might suffice, or if the contribution is a specific model and empirical evaluation, it may be necessary to either make it possible for others to replicate the model with the same dataset, or provide access to the model. In general. releasing code and data is often one good way to accomplish this, but reproducibility can also be provided via detailed instructions for how to replicate the results, access to a hosted model (e.g., in the case of a large language model), releasing of a model checkpoint, or other means that are appropriate to the research performed.
        \item While NeurIPS does not require releasing code, the conference does require all submissions to provide some reasonable avenue for reproducibility, which may depend on the nature of the contribution. For example
        \begin{enumerate}
            \item If the contribution is primarily a new algorithm, the paper should make it clear how to reproduce that algorithm.
            \item If the contribution is primarily a new model architecture, the paper should describe the architecture clearly and fully.
            \item If the contribution is a new model (e.g., a large language model), then there should either be a way to access this model for reproducing the results or a way to reproduce the model (e.g., with an open-source dataset or instructions for how to construct the dataset).
            \item We recognize that reproducibility may be tricky in some cases, in which case authors are welcome to describe the particular way they provide for reproducibility. In the case of closed-source models, it may be that access to the model is limited in some way (e.g., to registered users), but it should be possible for other researchers to have some path to reproducing or verifying the results.
        \end{enumerate}
    \end{itemize}

\item {\bf Open access to data and code}
    \item[] Question: Does the paper provide open access to the data and code, with sufficient instructions to faithfully reproduce the main experimental results, as described in supplemental material?
    \item[] Answer: \answerNo{} 
    \item[] Justification: We are currently in the process of cleaning and unifying the codebase, which includes different experiments implemented in different frameworks, using both JAX and PyTorch (to comply with baseline implementations), making the task more involved. We are committed to releasing a well-documented version of the code as soon as possible.
    \item[] Guidelines:
    \begin{itemize}
        \item The answer NA means that paper does not include experiments requiring code.
        \item Please see the NeurIPS code and data submission guidelines (\url{https://nips.cc/public/guides/CodeSubmissionPolicy}) for more details.
        \item While we encourage the release of code and data, we understand that this might not be possible, so “No” is an acceptable answer. Papers cannot be rejected simply for not including code, unless this is central to the contribution (e.g., for a new open-source benchmark).
        \item The instructions should contain the exact command and environment needed to run to reproduce the results. See the NeurIPS code and data submission guidelines (\url{https://nips.cc/public/guides/CodeSubmissionPolicy}) for more details.
        \item The authors should provide instructions on data access and preparation, including how to access the raw data, preprocessed data, intermediate data, and generated data, etc.
        \item The authors should provide scripts to reproduce all experimental results for the new proposed method and baselines. If only a subset of experiments are reproducible, they should state which ones are omitted from the script and why.
        \item At submission time, to preserve anonymity, the authors should release anonymized versions (if applicable).
        \item Providing as much information as possible in supplemental material (appended to the paper) is recommended, but including URLs to data and code is permitted.
    \end{itemize}

\item {\bf Experimental setting/details}
    \item[] Question: Does the paper specify all the training and test details (e.g., data splits, hyperparameters, how they were chosen, type of optimizer, etc.) necessary to understand the results?
    \item[] Answer: \answerYes{} 
    \item[] Justification: Refer to Appendix~\ref{apx:experiments}.
    \item[] Guidelines:
    \begin{itemize}
        \item The answer NA means that the paper does not include experiments.
        \item The experimental setting should be presented in the core of the paper to a level of detail that is necessary to appreciate the results and make sense of them.
        \item The full details can be provided either with the code, in appendix, or as supplemental material.
    \end{itemize}

\item {\bf Experiment statistical significance}
    \item[] Question: Does the paper report error bars suitably and correctly defined or other appropriate information about the statistical significance of the experiments?
    \item[] Answer: \answerYes{} 
    \item[] Justification: All experiments are run with multiple random seeds, results reported as mean $\pm$ std or with shaded regions for plots. See e.g. Figure~\ref{fig:sample_complexity} and Table~\ref{tab:editing}. For readability, the results in Table\ref{tab:all_lopt_results} are presented as mean only, but Table~\ref{tab:all_lopt_results_extended} reports the std for these results as well.
    \item[] Guidelines:
    \begin{itemize}
        \item The answer NA means that the paper does not include experiments.
        \item The authors should answer "Yes" if the results are accompanied by error bars, confidence intervals, or statistical significance tests, at least for the experiments that support the main claims of the paper.
        \item The factors of variability that the error bars are capturing should be clearly stated (for example, train/test split, initialization, random drawing of some parameter, or overall run with given experimental conditions).
        \item The method for calculating the error bars should be explained (closed form formula, call to a library function, bootstrap, etc.)
        \item The assumptions made should be given (e.g., Normally distributed errors).
        \item It should be clear whether the error bar is the standard deviation or the standard error of the mean.
        \item It is OK to report 1-sigma error bars, but one should state it. The authors should preferably report a 2-sigma error bar than state that they have a 96\% CI, if the hypothesis of Normality of errors is not verified.
        \item For asymmetric distributions, the authors should be careful not to show in tables or figures symmetric error bars that would yield results that are out of range (e.g. negative error rates).
        \item If error bars are reported in tables or plots, The authors should explain in the text how they were calculated and reference the corresponding figures or tables in the text.
    \end{itemize}

\item {\bf Experiments compute resources}
    \item[] Question: For each experiment, does the paper provide sufficient information on the computer resources (type of compute workers, memory, time of execution) needed to reproduce the experiments?
    \item[] Answer: \answerYes{} 
    \item[] Justification: Computational resources are detailed in Appendix~\ref{apx:experiments}.
    \item[] Guidelines:
    \begin{itemize}
        \item The answer NA means that the paper does not include experiments.
        \item The paper should indicate the type of compute workers CPU or GPU, internal cluster, or cloud provider, including relevant memory and storage.
        \item The paper should provide the amount of compute required for each of the individual experimental runs as well as estimate the total compute. 
        \item The paper should disclose whether the full research project required more compute than the experiments reported in the paper (e.g., preliminary or failed experiments that didn't make it into the paper). 
    \end{itemize}
    
\item {\bf Code of ethics}
    \item[] Question: Does the research conducted in the paper conform, in every respect, with the NeurIPS Code of Ethics \url{https://neurips.cc/public/EthicsGuidelines}?
    \item[] Answer: \answerYes{}{} 
    \item[] Justification: We have read and complied with the NeurIPS code of ethics.
    \item[] Guidelines:
    \begin{itemize}
        \item The answer NA means that the authors have not reviewed the NeurIPS Code of Ethics.
        \item If the authors answer No, they should explain the special circumstances that require a deviation from the Code of Ethics.
        \item The authors should make sure to preserve anonymity (e.g., if there is a special consideration due to laws or regulations in their jurisdiction).
    \end{itemize}

\item {\bf Broader impacts}
    \item[] Question: Does the paper discuss both potential positive societal impacts and negative societal impacts of the work performed?
    \item[] Answer: \answerNA{} 
    \item[] Justification: This paper presents work whose goal is to advance the field of Deep Learning as a whole. We don't believe there are any noteworthy societal impacts.
    \item[] Guidelines:
    \begin{itemize}
        \item The answer NA means that there is no societal impact of the work performed.
        \item If the authors answer NA or No, they should explain why their work has no societal impact or why the paper does not address societal impact.
        \item Examples of negative societal impacts include potential malicious or unintended uses (e.g., disinformation, generating fake profiles, surveillance), fairness considerations (e.g., deployment of technologies that could make decisions that unfairly impact specific groups), privacy considerations, and security considerations.
        \item The conference expects that many papers will be foundational research and not tied to particular applications, let alone deployments. However, if there is a direct path to any negative applications, the authors should point it out. For example, it is legitimate to point out that an improvement in the quality of generative models could be used to generate deepfakes for disinformation. On the other hand, it is not needed to point out that a generic algorithm for optimizing neural networks could enable people to train models that generate Deepfakes faster.
        \item The authors should consider possible harms that could arise when the technology is being used as intended and functioning correctly, harms that could arise when the technology is being used as intended but gives incorrect results, and harms following from (intentional or unintentional) misuse of the technology.
        \item If there are negative societal impacts, the authors could also discuss possible mitigation strategies (e.g., gated release of models, providing defenses in addition to attacks, mechanisms for monitoring misuse, mechanisms to monitor how a system learns from feedback over time, improving the efficiency and accessibility of ML).
    \end{itemize}
    
\item {\bf Safeguards}
    \item[] Question: Does the paper describe safeguards that have been put in place for responsible release of data or models that have a high risk for misuse (e.g., pretrained language models, image generators, or scraped datasets)?
    \item[] Answer: \answerNA{} 
    \item[] Justification: This paper poses not such risks.
    \item[] Guidelines:
    \begin{itemize}
        \item The answer NA means that the paper poses no such risks.
        \item Released models that have a high risk for misuse or dual-use should be released with necessary safeguards to allow for controlled use of the model, for example by requiring that users adhere to usage guidelines or restrictions to access the model or implementing safety filters. 
        \item Datasets that have been scraped from the Internet could pose safety risks. The authors should describe how they avoided releasing unsafe images.
        \item We recognize that providing effective safeguards is challenging, and many papers do not require this, but we encourage authors to take this into account and make a best faith effort.
    \end{itemize}

\item {\bf Licenses for existing assets}
    \item[] Question: Are the creators or original owners of assets (e.g., code, data, models), used in the paper, properly credited and are the license and terms of use explicitly mentioned and properly respected?
    \item[] Answer: \answerYes{} 
    \item[] Justification: All baselines and datasets used are credited throughout Section~\ref{sec:experiments} and Appendix~\ref{apx:experiments}.
    \item[] Guidelines:
    \begin{itemize}
        \item The answer NA means that the paper does not use existing assets.
        \item The authors should cite the original paper that produced the code package or dataset.
        \item The authors should state which version of the asset is used and, if possible, include a URL.
        \item The name of the license (e.g., CC-BY 4.0) should be included for each asset.
        \item For scraped data from a particular source (e.g., website), the copyright and terms of service of that source should be provided.
        \item If assets are released, the license, copyright information, and terms of use in the package should be provided. For popular datasets, \url{paperswithcode.com/datasets} has curated licenses for some datasets. Their licensing guide can help determine the license of a dataset.
        \item For existing datasets that are re-packaged, both the original license and the license of the derived asset (if it has changed) should be provided.
        \item If this information is not available online, the authors are encouraged to reach out to the asset's creators.
    \end{itemize}

\item {\bf New assets}
    \item[] Question: Are new assets introduced in the paper well documented and is the documentation provided alongside the assets?
    \item[] Answer: \answerNA{}{} 
    \item[] Justification: The paper does not release new assets.
    \item[] Guidelines:
    \begin{itemize}
        \item The answer NA means that the paper does not release new assets.
        \item Researchers should communicate the details of the dataset/code/model as part of their submissions via structured templates. This includes details about training, license, limitations, etc. 
        \item The paper should discuss whether and how consent was obtained from people whose asset is used.
        \item At submission time, remember to anonymize your assets (if applicable). You can either create an anonymized URL or include an anonymized zip file.
    \end{itemize}

\item {\bf Crowdsourcing and research with human subjects}
    \item[] Question: For crowdsourcing experiments and research with human subjects, does the paper include the full text of instructions given to participants and screenshots, if applicable, as well as details about compensation (if any)? 
    \item[] Answer: \answerNA{} 
    \item[] Justification: The paper does not involve crowdsourcing nor research with human subjects.
    \item[] Guidelines:
    \begin{itemize}
        \item The answer NA means that the paper does not involve crowdsourcing nor research with human subjects.
        \item Including this information in the supplemental material is fine, but if the main contribution of the paper involves human subjects, then as much detail as possible should be included in the main paper. 
        \item According to the NeurIPS Code of Ethics, workers involved in data collection, curation, or other labor should be paid at least the minimum wage in the country of the data collector. 
    \end{itemize}

\item {\bf Institutional review board (IRB) approvals or equivalent for research with human subjects}
    \item[] Question: Does the paper describe potential risks incurred by study participants, whether such risks were disclosed to the subjects, and whether Institutional Review Board (IRB) approvals (or an equivalent approval/review based on the requirements of your country or institution) were obtained?
    \item[] Answer: \answerNA{} 
    \item[] Justification: The paper does not involve crowdsourcing nor research with human subjects.
    \item[] Guidelines:
    \begin{itemize}
        \item The answer NA means that the paper does not involve crowdsourcing nor research with human subjects.
        \item Depending on the country in which research is conducted, IRB approval (or equivalent) may be required for any human subjects research. If you obtained IRB approval, you should clearly state this in the paper. 
        \item We recognize that the procedures for this may vary significantly between institutions and locations, and we expect authors to adhere to the NeurIPS Code of Ethics and the guidelines for their institution. 
        \item For initial submissions, do not include any information that would break anonymity (if applicable), such as the institution conducting the review.
    \end{itemize}

\item {\bf Declaration of LLM usage}
    \item[] Question: Does the paper describe the usage of LLMs if it is an important, original, or non-standard component of the core methods in this research? Note that if the LLM is used only for writing, editing, or formatting purposes and does not impact the core methodology, scientific rigorousness, or originality of the research, declaration is not required.
    \item[] Answer: \answerNA{} 
    \item[] Justification: The core method development in this paper does not involve LLMs as any important, original, or non-standard components.
    \item[] Guidelines:
    \begin{itemize}
        \item The answer NA means that the core method development in this research does not involve LLMs as any important, original, or non-standard components.
        \item Please refer to our LLM policy (\url{https://neurips.cc/Conferences/2025/LLM}) for what should or should not be described.
    \end{itemize}

\end{enumerate}
\end{document}